\declaretheorem[name=Theorem, refname={Theorem,Theorems}, Refname={Theorem,Theorems}, numberwithin=section]{theorem}
\declaretheorem[name=Lemma, refname={Lemma,Lemmas}, Refname={Lemma,Lemmas}, sibling=theorem]{lemma}
\declaretheorem[name=Proposition, refname={Proposition,Propositions}, Refname={Proposition,Propositions}, sibling=theorem]{proposition}
\declaretheorem[name=Definition, refname={Definition,Definitions}, Refname={Definition,Definitions}, sibling=theorem]{definition}
\declaretheorem[name=Remark, style=remark, numbered=no]{remark*}
\newcommand{\bR}{\ensuremath \mathbb{R}}
\newcommand{\bN}{\ensuremath \mathbb{N}}
\newcommand{\cA}{\ensuremath \mathcal{A}}
\newcommand{\cB}{\ensuremath \mathcal{B}}
\newcommand{\cE}{\ensuremath \mathcal{E}}
\newcommand{\cF}{\ensuremath \mathcal{F}}
\newcommand{\cG}{\ensuremath \mathcal{G}}
\newcommand{\cH}{\ensuremath \mathcal{H}}
\newcommand{\cI}{\ensuremath \mathcal{I}}
\newcommand{\cP}{\ensuremath \mathcal{P}}
\newcommand{\cU}{\ensuremath \mathcal{U}}
\newcommand{\cX}{\ensuremath \mathcal{X}}
\renewcommand{\S}{\ensuremath \bm{S}}
\newcommand{\equivcls}[1]{%
  #1/\!{\sim}%
}
\newcommand{\equivclsindexed}[2]{%
  #1/\!{\sim_{#2}}%
}
\def\moverlay{\mathpalette\mov@rlay}
\def\mov@rlay#1#2{\leavevmode\vtop{%
   \baselineskip\z@skip \lineskiplimit-\maxdimen
   \ialign{\hfil$\m@th#1##$\hfil\cr#2\crcr}}}
\newcommand{\charfusion}[3][\mathord]{
    #1{\ifx#1\mathop\vphantom{#2}\fi
        \mathpalette\mov@rlay{#2\cr#3}
      }
    \ifx#1\mathop\expandafter\displaylimits\fi}
\newcommand{\cupdot}{\charfusion[\mathbin]{\cup}{\cdot}}
\let\wasysymLightning\lightning
\newcommand{\contradiction}{\wasysymLightning}
\newcommand\restr[2]{{%
  \left.\kern-\nulldelimiterspace
  #1
  \vphantom{\big|}
  \right|_{#2}
  }}
\newcommand{\dif}{\mathrm{d}}   
\DeclareMathOperator{\bE}{\mathbb{E}} 
\newcommand{\dblbrackets}[1]{\left[\!\left[ #1 \right]\!\right]}
\DeclareMathOperator{\indep}{\perp\!\!\!\perp} 
\DeclareMathOperator{\notindep}{\not\! \perp\!\!\!\perp} 
\newcommand{\given}{\mid} 
\newcommand{\cond}{\: : \:} 
\newcommand{\indepsh}{\indep_{s,h}^+}
\newcommand{\adj}{\hspace{-0.11em}\mathrel{\rule[0.5ex]{1.1em}{0.6pt}}\hspace{-0.1em}}
\newcommand{\rcirclearrow}{\hspace{-0.12em} \leftarrow \hspace{-0.15cm}\circ\hspace{0.06em}} 
\newcommand{\lcirclearrow}{\hspace{+0.06em}\circ\hspace{-0.16cm}\rightarrow \hspace{-0.12em}} 
\newcommand{\lrcirclearrow}{\hspace{+0.06em} \circ\hspace{-0.12cm}\adj\hspace{-0.10cm}\circ\hspace{0.10em}} 
\newcommand{\pa}{\ensuremath{\mathrm{pa}}}
\newcommand{\an}{\ensuremath{\mathrm{an}}}
\newcommand{\scc}{\ensuremath{\mathrm{scc}}}
\newcommand{\chG}[1]{\ensuremath{\mathrm{ch}^{\cG}_{#1}}}
\newcommand{\paG}[1]{\ensuremath{\mathrm{pa}^{\cG}_{#1}}}
\newcommand{\deG}[1]{\ensuremath \mathrm{de}^{\cG}_{#1}}
\newcommand{\ndG}[1]{\ensuremath \mathrm{nd}^{\cG}_{#1}}
\newcommand{\anG}[1]{\ensuremath \mathrm{an}^{\cG}_{#1}}
\newcommand{\sibG}[1]{\ensuremath \mathrm{sib}^{\cG}_{#1}}
\newcommand{\sccG}[1]{\ensuremath \mathrm{scc}^{\cG}_{#1}}
\newcommand{\coll}[1]{\ensuremath \mathrm{coll}_{#1}}
\newcommand{\ncoll}[1]{\ensuremath \mathrm{ncoll}_{#1}}
\newcommand{\ncollub}[1]{\ensuremath \mathrm{ncoll}_{ub, #1}}
\newcommand{\ncollb}[1]{\ensuremath \mathrm{ncoll}_{b, #1}}
\tikzstyle{obs} = [circle,fill=white,draw=black,inner sep=1pt,minimum size=25pt,node distance=0.75cm,thick]
\tikzstyle{latent} = [obs,dotted]
\tikzstyle{fixed} = [obs,rectangle,minimum size=22pt]
\tikzstyle{small} = [inner sep=0pt,minimum size=20pt]
\definecolor{Grey}{rgb}{0.5,0.5,0.5}
\definecolor{NavyBlue}{rgb}{0.0, 0.0, 0.5}
\definecolor{ForestGreen}{rgb}{0.13, 0.55, 0.13} 
\newcommand{\defined}[1]{{\color{NavyBlue}\emph{#1}}}
\newcommand{\defineddeutsch}[1]{{\color{Grey}\emph{#1}}}
\newcommand{\symbdef}[1]{{\color{ForestGreen}#1}}
\newcommand{\xhdr}[1]{\noindent \textbf{#1}\:}
\title[Causal Discovery on Path Spaces]{An Asymmetric Independence Model\\ for Causal Discovery on Path Spaces}
\begin{document}

\maketitle

\begin{abstract}%
We develop the theory linking `E-separation' in directed mixed graphs (DMGs) with conditional independence relations among coordinate processes in stochastic differential equations (SDEs), where causal relationships are determined by ``which variables enter the governing equation of which other variables''. We prove a global Markov property for cyclic SDEs, which naturally extends to partially observed cyclic SDEs, because our asymmetric independence model is closed under marginalization. We then characterize the class of graphs that encode the same set of independence relations, yielding a result analogous to the seminal `same skeleton and v-structures' result for directed acyclic graphs (DAGs). In the fully observed case, we show that each such equivalence class of graphs has a greatest element as a parsimonious representation and develop algorithms to identify this greatest element from data. We conjecture that a greatest element also exists under partial observations, which we verify computationally for graphs with up to four nodes. 
\end{abstract}

\begin{keywords}%
  causal discovery, stochastic processes, path space, E-separation, graphical models, conditional independence, independence model
\end{keywords}

\section{Introduction and Related Work}\label{sec:intro}

Discovering causal relationships from observational data holds great promise across many domains: in biology one may aim at inferring which genes regulate which other genes from single cell transcriptomics; in medical settings, one strives to understand the causal interactions between different diseases and symptoms; in finance, successful trading strategies require an understanding of causal drivers; in complex engineered systems, uncovering the underlying causal processes is crucial for effective predictive maintenance.
Causal relationships between different variables are often described by graphs.
The predominant graphical framework for causal discovery (or causal structure learning) are (variants of) Structural Causal Models (SCM) \citep{pearl2009causality,peters2017elements}, which can be considered a description of the underlying data generating mechanism.
SCMs encode causal assumptions in the form of Directed Acyclic Graphs (DAGs) and leverage information encoded in the DAG about the observed distribution in terms of conditional independencies for causal inferences \citep{pearl1995causal,lauritzen2001causal}.
Learning causal structures from observational data within this framework is a high-impact endeavor that has received considerable attention over the recent decades \citep{spirtes2000causation,spirtes1995causal, zheng2018dags,chickering2002optimal,vowels2022d}.

Most existing works consider `static models',  where independence relations are symmetric.
In this work, we focus on dynamic systems that evolve in continuous time.
Thus, instead of assuming a static joint observational distribution, we consider as observations multi-variate stochastic processes.
For these systems, exploiting the direction of time, i.e., the fact that causal influences can never go from the future to the past, can provide additional valuable information.
Leveraging this information requires an asymmetric independence notion that captures the fundamental asymmetry between past and future.

One natural candidate to model causal relations among multivariate stochastic processes is the notion of local independence, which evaluates the dependence of the present on the past among the different coordinate processes.
Local independence captures arbitrarily `fast' interactions between coordinate processes \citep{schweder1970composable,mogensen2018causal}.
Various theoretically oriented works have proposed (conditional) local independence to infer (partial) causal graphs in, e.g., point process models \citep{didelez2008graphical,meek2014toward,mogensen2018causal,mogensen2020markov}.
Unfortunately, as of now local independence cannot be tested in practice with only one exception:
\citet{CPH23} propose a practical test specifically for counting processes.
There exists no statistical test of local independence for other classes of stochastic processes such as diffusions.

Another common approach to causal discovery in dynamical settings is to extend the methodology from the static case by assuming stochastic processes to be observed on a fixed regular time grid.
One can then `unroll' the causal graph in time and assume a discrete (auto-regressive) law governing the observed data \citep{runge2018causal,runge2019detecting,runge2020discovering,runge2023causal}.
The `discrete regular time grid' assumption fundamentally limits these approaches: \textbf{(i)} They require observations of all coordinate processes at fixed, evenly spaced time points, disallowing for irregular sampling or missing observations. \textbf{(ii)} All inferences critically depend on the sampling frequency. \textbf{(iii)} Methods are typically sensitive to the unknown maximal interaction time (i.e., the lag), which has to be chosen heuristically.
Practically, these approaches also suffer from requiring large numbers of (symmetric) conditional independence tests for causal discovery \citep{shah2020hardness,lundborg2022conditional}.

Recently, continuous time systems have received attention in approaches that examine dependence on entire path-segments.
\citet{laumann2023kernel} developed a test for conditional independence tailored to functional data, including path-valued random variables. Their approach treats these variables as standard random variables, ignoring the temporal nature, and utilizes established results from static structural causal models (SCMs). They apply the resulting global Markov property for acyclic graphs to perform causal discovery using traditional algorithms (e.g., the PC algorithm \citep{spirtes2000causation}) by testing functional data on the full intervals.
Instead, \citet{manten2024sigker} recently introduced both a conditional independence test and a Markov property that explicitly accounts for time.
By partitioning the time interval into a past and a future segment, this approach enables more informative causal discovery.
However, their approach is still restricted to \emph{acyclic} SDE models.
\citet{boekenmooij2024dynamicstructuralcausalmodels} apply the Markov property found by \citet{mooij2023constraintbased} to path-valued random variables using \emph{symmetric} 
$\sigma$-separation to model cyclic dependencies.
While allowing for cyclic dependencies -- arguably one of the key characteristics of dynamical systems -- this approach again ignores the direction of time, leading to weaker results than can be achieved by exploiting temporal order.

In this paper, we develop a graphical framework that offers several contributions: \textbf{(i)} it allows for cycles (unlike \citealp{manten2024sigker,laumann2023kernel}); \textbf{(ii)} it leverages the direction of time (unlike \citealp{boekenmooij2024dynamicstructuralcausalmodels,laumann2023kernel}); \textbf{(iii)} it is practically testable (unlike local independence methods \citealp{mogensen2018causal}); and \textbf{(iv)} it can handle partial observations, i.e., still provides (partial) results in the presence of unobserved confounding.
The last point necessitates a constraint-based approach, which we focus on in this work.
We achieve \textbf{(i)-(iv)} by a novel, more informative asymmetric version of $\sigma$-separation, called E-separation. Our criterion can be practically tested in partially observed cyclic SDE models using the conditional independence tests developed by \citet{manten2024sigker}. For empirical validation of our theoretical contributions, we provide two example experiments in \cref{app:causal_discovery_algorithm}.
In summary, our framework is strictly more comprehensive than existing methods to constraint-based causal discovery in continuous time dynamical systems along criteria \textbf{(i)-(iv)}.

\section{Stochastic Differential Equations, Graphs, and Separation}\label{sec:background}

\xhdr{Data generating process.}
We focus on processes arising from stochastic differential equations (SDEs). SDEs are often used to model a variety of systems in physics, health, finance, and beyond, and also allow for a natural causal interpretation between the different coordinate processes.
Following \citet{manten2024sigker}, we assume a stationary, path-dependent `SDE model'
\begin{equation}\label{eq:SDEmodel}
    \begin{cases}
        \dif X^k_t = \mu^k(X_{[0,t]})\dif t + \sigma^k(X_{[0,t]})\dif W^k_t, \\
        X^k_0 = x^k_0 \qquad \text{for } k \in [d] := \{1,\ldots,d\}\:,
    \end{cases}
\end{equation}
with potentially multi-dimensional $X_t^k \in \bR^{n_k}$, $n_k \in \bN_{>0}$, Brownian motions $W_t^k \in \bR^{m_k}$, $m_k \in \bN_{>0}$ and drift $\mu^k : C^{0}([0,1],\bR^{n}) \rightarrow \bR^{n_k}$ and diffusion $\mu^k : \cX^\prime \rightarrow \bR^{n_k \times m_k}$ being functions of solution paths of the SDE up to time $t$. We assume the initial conditions $x^k_0$ and Brownian motions are jointly independent. 
The structure of \Cref{eq:SDEmodel}, where each coordinate process influences the differential of others, naturally defines a directed ``dependency graph'' $\cG = (V, \cE_{d})$ with vertices $V = [d] := \{1, \ldots, d\}$ representing the individual coordinate processes. In this graph,
$i$ is a parent of $j$ ($i \in \pa^{\cG}_j$) when either $\mu^j$ or $\sigma^j$ is not constant in the $i$-th argument. Importantly, unlike \citet{manten2024sigker}, we do not impose acyclicity on this graph and allow for partial observations.

\xhdr{Basic graph terminology.}
We follow common notation and terminology from \citet{mogensen2020markov,forre2023causality,peters2017elements} and provide a concise summary of the required concepts here.
\begin{definition}[Directed (Mixed) Graph, D(M)G]
A \defined{directed mixed graph (DMG)} is a triple $\cG = (V, \cE_{d}, \cE_{bi})$ with a set of \textit{nodes} $V\cong [d]$, $d \in \bN$ and two sets of edges, the \defined{directed edges} $\cE_{d} \subset V \times V =: \cE_{d}(V)$ and the \defined{bidirected edges} $\cE_{bi} \subset \equivcls{\left((V \times V) \right)}=: \cE_{bi}(V)$ with equivalence relation $(v_1, v_2) \sim (v_3, v_4) : \Leftrightarrow \left( (v_1 = v_3) \land (v_2 = v_4) \right) \lor \left( (v_1 = v_4) \land (v_2 = v_3) \right) $ and equivalence classes denoted by \symbdef{$[(j,k)]_{bi}$}. $\cG$ is called \defined{directed graph (DG)}, if $\cE_{bi} = \emptyset$.
The \defined{induced subgraph} on $A \subset V$ by $\cG$ is the DMG $\symbdef{\restr{\cG}{A}} = \left( A , \cE_{d} \cap \cE_{d}(A),\cE_{bi} \cap \cE_{bi}(A) \right)$.
\end{definition}
Throughout this paper, $\cG$ will denote a DG or DMG, $V \cong [d]$ its node set (corresponding to coordinate processes) and $\cE_d$, $\cE_{bi}$ its edges.
For simplicity, we sometimes write $\symbdef{v \in \cG}$ for $v \in V$, $\symbdef{v_1 \to v_2 \in \cG}$ for $(v_1, v_2) \in \cE_{d}$ (we then say that the edge has a \defined{tail} at $v_1$ and a \defined{head} at $v_2$, also referred to as \defined{edge marks}), and
$\symbdef{v_1 \leftrightarrow v_2 \in \cG}$ for $[(v_1,v_2)]_{bi} \in \cE_{bi}$.
Moreover, $\symbdef{v_1 \lcirclearrow v_2 \in \cG}$ symbolizes either $v_1 \to v_2 \in \cG$ or $v_1 \leftrightarrow v_2 \in \cG$
and $\symbdef{v_1 \sim v_2 \in \cG}$ or $\symbdef{v_1 \lrcirclearrow v_2 \in \cG}$ that $v_1 \to v_2 \in \cG$ or $v_1 \leftarrow v_2 \in \cG$ or $v_1 \leftrightarrow v_2 \in \cG$, meaning that $v_1$ and $v_2$ are \defined{adjacent} in $\cG$.
The circle $\circ$ is like a placeholder for either an `arrowhead' or a `tail'.
For nodes $v,w \in \cG$, a \defined{walk from $v$ to $w$ in $\cG$}, is a finite sequence $\{ (v_k, e_k) \}_{k \in [n]}$ such that $v_0 := v$, $v_{n+1}:= w$ and $e_k \in \{ (v_k, v_{k+1}), (v_{k+1}, v_{k}),[v_k, v_{k+1}]_{bi} \}$ $\forall k \in [n]$ and often denoted $v = v_0 \overset{e_0}{\sim} v_1 \overset{e_1}{\sim} \ldots\overset{e_n}{\sim} v_{n+1} = w.$
We denote by $\pi^{-1}$ the \defined{inverse walk}, $w = v_{n+1} \overset{e_n}{\sim} v_n \overset{e_{n-1}}{\sim} \ldots \overset{e_0}{\sim} v_{0} = v$.
The \defined{trivial walk} (from $v$ to $v$), $\emptyset$ is the walk consisting of only a single node $v \in \cG$ and no edges.
We call the walk $\pi: v \sim \ldots \sim w$ \defined{bidirected} (\defined{directed}) if $e_k = [ v_k, v_{k+1} ]_{bi}\in \cE_{bi}$ ($e_k = (v_k, v_{k+1}) \in \cE_d$) for all $k \in [n]$.  
A walk $v=v_{0} \sim \ldots \sim v_{n+1}=w$ is called \defined{path from $v$ to $w$ in $\cG$}, if $\lvert \{ v_0, \ldots , v_n \} \rvert = n+1$, i.e., no node besides the endpoint occurs more than once.
Definitions of (bi)directedness for walks carry over directly from the corresponding definitions for paths.

Walks $\pi_1 = v \overset{e_0^1}{\sim} v^1_1 \overset{e_1^1}{\sim} \ldots \overset{e_n^1}{\sim} w$ and $\pi_2 = v \overset{e_0^2}{\sim} v^2_1 \overset{e_1^2}{\sim} \ldots \overset{e_m^2}{\sim} w$ are \defined{endpoint-identical}, if the marks of $e_0^1, e_0^2$ at $v$ and of $e_n^2, e_m^2$ at $w$ agree.
A DMG $\cG = (V, \cE_{d}, \cE_{bi})$ is \defined{acyclic}, if there exists no non-trivial directed walk $v \rightarrow \ldots \rightarrow v$ for all $v \in V$ $\cG$, making it an \defined{acyclic directed mixed graph (ADMG)} and an \defined{directed acyclic graph (DAG)} if in addition $\cE_{bi} = \emptyset$.

Let $\cG_1 = (V, \cE^1_{d}, \cE^1_{bi})$, $\cG_2 = (V, \cE^2_{d}, \cE^2_{bi})$ be DMGs.
We call $\cG_1$ \defined{subgraph} of $\cG_2$ ($\cG_2$ \defined{supergraph} of $\cG_1$) and write $\cG_1 \subseteq \cG_2$, if $\cE^1_d \subseteq \cE^2_d$ and $\cE^1_{bi} \subseteq \cE^2_{bi}$. Note: $\subseteq$ is a partial order over the set of DMGs over nodes $V$. Furthermore, we define the \defined{complete DG (DMG)} to be the graph with all possible edges, $(V, \cE_d = V \times V)$ ($(V, \cE_d = V \times V, \cE_{bi}=\equivcls{(V \times V)})$).
Finally, for a node $v$ in a DMG $\cG$ we define its
\defined{parents} $\paG{v} := \{w \in \cG \mid w \to v \in \cG\}$,
\defined{children} $\chG{v} := \{w \in \cG \mid v \to w \in \cG\}$,
\defined{siblings} $\sibG{v} := \{w \in \cG \mid w \leftrightarrow v \in \cG\}$,
\defined{ancestors} $\anG{v} := \{w \in \cG \mid \exists \: \text{directed path} \: w \to \ldots \to v \in \cG \}$,
\defined{descendants} $\deG{v} := \{w \in \cG \mid \exists \: \text{directed path} \: v \to \ldots \to w \in \cG\}$,
and \defined{non-descendants} $\ndG{v} := V \setminus \deG{v}$.
All notions extend to sets of nodes by unions, e.g., $\paG{A} := \bigcup_{v\in A} \paG{v}$, with $\ndG{A} := V \setminus \deG{A}$ for $A \subseteq V$.
Note, that $v \in \anG{v}, A \subseteq \anG{A}, v \in \deG{v}, A \subseteq \deG{A}$ because we allow for trivial paths.

When dealing with cycles, we also need the notion of the \defined{strongly connected component of $v$ in $\cG$}, denoted by $\sccG{v} := (\anG{v} \cap \deG{v})$, having $v \in \sccG{v}$.
The \defined{set of strongly connected components} of $\cG$ is indicated by $\mathbf{S} (\cG) := \{ A \subseteq V \mid \exists j \in V : A = \sccG{j} \}$ and it also defines an equivalence relation on $V$ given by $v \sim_{1} w : \Leftrightarrow w \in \sccG{v}$ such that the equivalence classes partition the node set $V$.
The \defined{DAG of strongly connected components} for a DG $\cG=(V,\cE_d)$ is denoted by $\mathbf{S} (\cG) := (\equivclsindexed{V}{1}, \equivclsindexed{(\cE_{d}\setminus \Delta_V)}{2})$ with the equivalence relation $(v \rightarrow w) \sim_{2} (v^\prime, w^\prime) :\Leftrightarrow (v \sim_1 v^\prime) \land (w \sim_1 w^\prime )$.

\xhdr{$\sigma$-/$d$-Separation.}
To make this paper self-contained, we proceed by briefly recalling the relevant existing separation notions.
Given a DMG $\cG = (V, \cE_{d}, \cE_{bi})$, we call a node $v_k$ (or rather the position $k \in \{ 0, \ldots , n+1 \}$) on a walk $\pi$ a \defined{non-collider on $\pi$}, if it is an end-point ($k \in \{0,n+1\}$), in a left-chain ($v_{k-1} \leftarrow v_{k} \rcirclearrow v_{k+1}$), in a right-chain ($v_{k-1} \lcirclearrow v_{k} \rightarrow v_{k+1}$), or in a fork ($v_{k-1} \leftarrow v_k \rightarrow v_{k+1}$).
It is a \defined{collider on $\pi$}, if $v_{k-1} \lcirclearrow v_k \rcirclearrow v_{k+1}$.
The set of colliders/non-colliders of $\pi$ is denoted $\symbdef{\coll{\pi}}$/$\symbdef{\ncoll{\pi}}$.
We further call a non-collider $v_k$ on walk $\pi$ an \defined{unblockable non-collider on $\pi$} if $k \notin \{ 0,n \}$ and
it is in a left-chain ($v_{k-1} \leftarrow v_{k} \rcirclearrow v_{k+1} \: \land \: v_{k-1} \in \sccG{v_k}$),
it is in a right-chain ($v_{k-1} \lcirclearrow v_{k} \rightarrow v_{k+1}  \: \land \: v_{k+1} \in \sccG{v_k}$),
or it is in a fork ($v_{k-1} \leftarrow v_k \rightarrow v_{k+1}  \: \land \: (v_{k-1} \in \sccG{v_k} \: \land \: v_{k+1} \in \sccG{v_k})$).
Otherwise, we call $v_k$ a \defined{blockable non-collider on $\pi$}. Similarly, we denote the set of blockable/unblockable non-colliders of $\pi$ by $\symbdef{\ncollb{\pi}}$/$\symbdef{\ncollub{\pi}}$.

With this setup, we can define \emph{d-separation} for acyclic graphs (e.g., \citep{pearl_d_sep} for DAGs).
\emph{d-separation:} A walk $\pi =  v_0 \overset{e_0}{\sim} \ldots \overset{e_n}{\sim} v_{n+1}$ is called \defined{$d$-open given $C$ or $d$-$C$-open} for $C \subseteq V$, if $\coll{\pi} \subseteq \anG{C}$ and $\ncoll{\pi} \cap C = \emptyset$. Otherwise it is called \defined{$d$-blocked given $C$ or $d$-$C$-blocked}, meaning that $\coll{\pi} \not\subseteq \anG{C}$ or $\ncoll{\pi} \cap C \neq \emptyset$.
If each walk $\pi = a \sim \ldots \sim b$ between sets $A,B \subseteq V$ is $d$-$C$-blocked, we call \defined{$A$ $d$-separated from $B$ given $C$}, (\symbdef{$A \indep_{d}^\cG B \given C$}); otherwise we write $\symbdef{A \notindep_{d}^\cG B \given C}$.

A more suitable extension to cyclic settings is the notion of \textit{$\sigma$-separation}, introduced by \citet{forre2017markovpropertiesgraphicalmodels}.
\emph{$\sigma$-separation:} A walk $\pi =  v_0 \overset{e_0}{\sim} \ldots \overset{e_n}{\sim} v_{n+1}$ is called \defined{$\sigma$-open given $C$ or $\sigma$-$C$-open} for $C \subseteq V$, if $\coll{\pi} \subseteq \anG{C}$ and $\ncollb{\pi} \cap C = \emptyset$.
Otherwise it is called \defined{$\sigma$-blocked given $C$ or $\sigma$-$C$-blocked}, meaning that $\coll{\pi} \not\subseteq \anG{C}$ or $\ncollb{\pi} \cap C \neq \emptyset$.
We analogously extend this notion to sets to define
\defined{$A$ $\sigma$-separated from $B$ given $C$}, (\symbdef{$A \indep_{\sigma}^\cG B \given C$}) and otherwise write $\symbdef{A \notindep_{\sigma}^\cG B \given C}$.

\section{A Dynamic Global Markov Property for DMGs on Path-Space}

\subsection{Lifted Dependency Graph and Separation}

We now define an extension of the \emph{lifted dependency graph} introduced in \citet{manten2024sigker}, which enables us to leverage the direction of time.

\xhdr{Lifted dependency graph.}
For a DMG $\cG = (V, \cE_d, \cE_{bi})$, the \defined{lifted dependency graph} $\tilde{\cG}$ is the DMG $\tilde{\cG} = (\tilde{V}, \tilde{\cE}_d, \tilde{\cE}_{bi})$, where $\tilde{V} := V_0 \sqcup V_1$ is a disjoint union of two copies of $V$, with nodes subscripted by $0/1$ to indicate set membership and edges $\tilde{\cE}_{d}:= \{ (u_0,v_0), (u_0,v_1), (u_1,v_1) \cond (u,v) \in \cE_{d} \}$ and $\tilde{\cE}_{bi}:= \{ [(u_0,v_0)], [(u_0,v_1)], [(u_1, v_0)], [(u_1,v_1)] \cond [(u,v)] \in \cE_{bi} \}$.

\begin{wrapfigure}{r}{0.45\textwidth}
    \centering
    \vspace{-5mm}
    \includegraphics[width=0.45\textwidth]{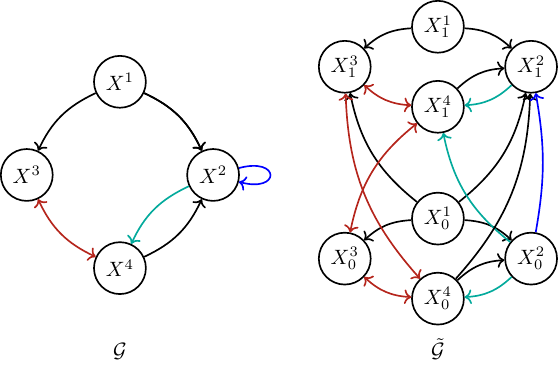}
    \vspace{-5mm}
    \caption{The lifted dependency graph $\tilde{\cG}$ for the DMG $\cG$.}
    \label{fig:lifted_dependence_graph_with_scc}
    \vspace{-8mm}
\end{wrapfigure}

Next we introduce a graphical separation criterion on the original DMG $\cG$ in terms of $\sigma$-separation and its lifted dependency graph.

\xhdr{E-Separation.}
For a DMG $\cG = (V , \cE_{d},\cE_{bi})$, $B$ is \defined{E-separated} from $A$ given $C$ in $\cG$ if
\begin{align*}
    A_0 \indep_{\sigma}^{\tilde{\cG}} B_1 \given C_0, C_1\setminus B_1,
\end{align*}
where $A,B,C \subset V$, $A_0 = \{ a_0 : a \in A \} \subseteq \tilde{V}$ and $B_1 = \{ b_1 : b \in B \} \subseteq \tilde{V}$, $C_0:= \{ c_0 : c \in C \} \subseteq \tilde{V}$, $C_1:= \{ c_1 : c \in C \} \subseteq \tilde{V}$ in the lifted dependency graph $\tilde{\cG}$ of $\cG$.
We denote this E-separation by $\symbdef{A \indep_{E}^{\cG} B \given C}$.
An example DMG $\cG$ and its lifted dependency graph are shown in \cref{fig:lifted_dependence_graph_with_scc}, illustrating how time-ordering can be modeled graphically.
The asymmetric notion implies that $X^{3}$ cannot be E-separated from $X^{1}$ due to the edge $X_0^1 \rightarrow X_0^3$, however, $X^{1}$ can be E-separated from $X^{3}$ as each walk is blocked by the empty set.
Moreover $X^2$ and $X^4$ can not be E-separated from $X^1$ due to the edge $X_{0}^{1} \rightarrow X_{1}^{2}$ directly feeding into the future strongly connected component $\{ X_1^2, X_1^4 \}$. Therefore, $X^{1}$ and $X^{4}$ are not separable, even though there is no directed edge $X^{1} \rightarrow X^{4}$.

\subsection{Asymmetric Graphoids}

Analogous to properties found in the literature on conditional independence models (see, e.g., \citealp{lauritzen1990independence}), we can also define \textit{abstract independence models} over a finite set $V$, generically denoted here by $\cI \subseteq \cP(V)^3$, which are sets of triples $(A,B,C)\subseteq V$ that admit for certain properties and encode important relations. For example, if $(A,B,C)$ is inside $\cI$, a set $A$ is `independent' from set $B$ given set $C$ (or `separated' when referring to graphical criteria).
Unlike traditional frameworks such as the notion of $\sigma$-separation, which is symmetric in $A$ and $B$, the unidirectionality of time in our context necessitates an asymmetric independence model $\cI \subseteq \cP (V)^{3}$, whose properties are sometimes referred to as \defined{asymmetric (semi) graphoid properties} (see \citet{didelez2008graphical,mogensen2020markov}):
\vspace{-0.2cm}
\begin{itemize}
        \setlength{\itemsep}{-2pt}
        \item[(LR)]\defined{Left redundancy}: $(A,B,A) \in \cI$
        \item[(RR)]\defined{Right redundancy}:  $(A,B,B) \in \cI$
        \item[(LD)]\defined{Left decomposition}: $(A,B,C) \in \cI$, $D \subset A$ $\Rightarrow$ $ (D,B,C) \in \cI$
        \item[(RD)]\defined{Right decomposition}: $(A,B,C)\in \cI$, $D \subset B$ $\Rightarrow$ $(A,D,C) \in \cI$
        \item[(LWU)]\defined{Left weak union}: $(A,B,C) \in \cI$, $D \subset A$ $\Rightarrow$ $(A,B, C \cup D)  \in \cI$
        \item[(RWU)]\defined{Right weak union}: $(A,B,C) \in \cI$, $D \subset B$ $\Rightarrow$ $(A,B, C \cup D) \in \cI$
        \item[(LC)]\defined{Left contraction}: $(A,B,C) \in \cI$, $(D,B, A \cup C) \in \cI$ $\Rightarrow$ $(A \cup D, B,C) \in \cI$
        \item[(RC)]\defined{Right contraction}: $(A,B,C) \in \cI$, $(A,D,B \cup C) \in \cI$ $\Rightarrow$ $(A, B \cup D, C) \in \cI$
        \item[(LI)]\defined{Left intersection}: $(A,B,C) \in \cI$, $(C,B,A) \in \cI$ $\Rightarrow$ $(A \cup C, B, A \cap C) \in \cI$
        \item[(RR)]\defined{Right intersection}:  $(A,B,C) \in \cI$, $(A,C,B) \in \cI$ $\Rightarrow$ $(A, B \cup C, B \cap C) \in \cI$.
        \item[(LCo)]\defined{Left composition}: $(A,B,C) \in \cI$ $\Leftrightarrow$ $(\{ a \}, B, C)\in \cI$ $\forall \: a \in A$
        \item[(RCo)]\defined{Right composition}:  $(A,B,C) \in \cI$ $\Leftrightarrow$ $ (A, \{ b \}, C) \in \cI$ $\forall \: b \in B$
\end{itemize}

\begin{proposition}[Ternary relation defined by $\indep_{E}^{\cG}$]\label{prop:esep_ternary}
Let $\cG = (V, \cE_d)$ be a DG. Then $\indep_{E}^{\cG}$ defines the following ternary relation on the node set $V$,
\begin{align*}
    \cI_{E} := \cI_{E}^{\cG} := \lbrace  (A,B,C) \in \cP (V)^3 \cond A \indep_{E}^{\cG} B \given C \rbrace
\end{align*}
which satisfies (LR), (LD), (RD), (LC), (LCo), (RCo).
\end{proposition}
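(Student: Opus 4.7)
The plan is to translate each asymmetric property via the definition $A \indep_E^\cG B \mid C \Leftrightarrow A_0 \indep_\sigma^{\tilde{\cG}} B_1 \mid C_0 \cup (C_1 \setminus B_1)$ and reduce it either to the standard (symmetric) semi-graphoid properties of $\sigma$-separation in the lifted graph $\tilde{\cG}$ or, when the change in the conditioning set from hypothesis to conclusion cannot be absorbed by decomposition and weak union, to a direct walk argument in $\tilde{\cG}$.

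For the easy properties: (LR) holds because any walk from $a_0 \in A_0$ to $B_1$ is $\sigma$-blocked at its initial endpoint, which is a blockable non-collider inside the conditioning set. (LD) and (LCo) are immediate because the conditioning set $C_0 \cup (C_1 \setminus B_1)$ does not depend on $A$, so they reduce to left decomposition of $\sigma$-separation and to partitioning walks by their starting node in $A_0$. Property (RD) requires more care since replacing $B$ by $D \subseteq B$ enlarges the conditioning set. The plan is to write $B_1 = D_1 \cup (B \setminus D)_1$, apply weak union of $\sigma$-separation to move $((B \cap C) \setminus D)_1 \subseteq (B \setminus D)_1$ into the conditioning set, then apply right decomposition to drop $((B \setminus D) \setminus C)_1$, and finally verify the set identity $(C_1 \setminus B_1) \cup ((B \cap C) \setminus D)_1 = C_1 \setminus D_1$ using $D \subseteq B$. (RCo) follows: the forward direction is (RD) specialized to a singleton, while the reverse direction is an inductive walk-decomposition on the first passage of a walk from $A_0$ through $B_1$, applying the per-element hypothesis to the initial sub-walk and transferring the blocking event back to the full walk.

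The main obstacle is (LC). Set $Z := C_0 \cup (C_1 \setminus B_1)$ and $Z' := Z \cup A_0 \cup (A_1 \setminus B_1)$. Given the hypotheses $A_0 \indep_\sigma^{\tilde{\cG}} B_1 \mid Z$ and $D_0 \indep_\sigma^{\tilde{\cG}} B_1 \mid Z'$, the plan is to argue contrapositively: a walk $\pi$ from $v \in (A \cup D)_0$ to $B_1$ that is $\sigma$-$Z$-open must yield either a $\sigma$-$Z$-open walk from $A_0$ to $B_1$ or that $\pi$ itself is $\sigma$-$Z'$-open. If $v \in A_0$, done; if $v \in D_0$ and no blockable non-collider of $\pi$ lies in $Z' \setminus Z$, then $\pi$ is itself $\sigma$-$Z'$-open. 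Otherwise such a non-collider $w \in A_0 \cup (A_1 \setminus B_1)$ exists; the case $w = a_0 \in A_0$ is handled by taking the sub-walk from $a_0$ to $B_1$, noting that $\sigma$-$Z$-openness of $\pi$ forces $a \notin C$, and observing that the sub-walk inherits $\sigma$-$Z$-openness with its internal collider/non-collider statuses intact. The genuinely hard case is $w = a_1 \in A_1 \setminus B_1$: the sub-walk from $a_1$ to $B_1$ starts in $V_1$ and does not directly fit hypothesis (1). The plan is to splice in a replacement initial edge at $a_0$ using that $a_0, a_1$ share the same directed out-neighbors inside $V_1$ and the same bidirected neighbors in all of $\tilde{V}$; the residual sub-sub-case, where the first edge of the sub-walk is an incoming directed edge $u_1 \to a_1$ from another $V_1$-node, admits no direct splice and must be handled by shifting the starting point of the new walk further upstream along $\pi$ to the last transition from $V_0$ into $V_1$ before $a_1$ and controlling the changed collider status at the splice point. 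I expect this last sub-sub-case to concentrate the main technical effort.
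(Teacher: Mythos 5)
Your overall route matches the paper's: (LR), (LD), (LCo) by endpoint and subwalk arguments, (RD) and (RCo) via weak union, decomposition and a first-passage truncation (your set identity $C_1\setminus D_1=(C_1\setminus B_1)\cup((C\cap B)\setminus D)_1$ is the right bookkeeping, and your first-passage argument for the reverse direction of (RCo) is fine), and (LC) by contraposition on a shortest $\sigma$-open walk $\pi$ from $(A\cup D)_0$ to $b_1$, splitting on whether a blockable non-collider of $\pi$ lies in $A_0\cup(A_1\setminus B_1)$. The genuine gap is exactly in the sub-case you yourself flag as hard: $a_1\in A_1\setminus B_1$ is a blockable non-collider of $\pi$ whose downstream edge points into it, i.e.\ $a_1$ sits in a left-chain $v^{k-1}_1\leftarrow a_1\leftarrow v^{k+1}$. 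Your proposed fix --- shifting the start of the new walk upstream to the last transition from $V_0$ into $V_1$ before $a_1$ --- does not yield the needed contradiction: the source $u_0$ of that crossing edge is in general not in $A_0$, so the shifted subwalk contradicts neither the hypothesis $A_0\indep_{\sigma}^{\tilde{\cG}}B_1\given Z$; and since $a_1$ (a blockable non-collider lying in $Z'\setminus Z$) remains on it, it is $Z'$-blocked and so does not contradict the second hypothesis either. The only direct way to attach $a_0$ near $a_1$ is the cross edge $a_0\to v_1^{k-1}$ (which exists because $a\to v^{k-1}\in\cG$), but splicing it in turns $v_1^{k-1}$ into a collider, and nothing guarantees $v_1^{k-1}\in\an^{\tilde{\cG}}_{Z}$ --- ``controlling the changed collider status at the splice point'' is precisely the step that has no justification as stated.

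The missing idea, which is how the paper resolves this case, is the following construction. Follow $\pi$ upstream from $a_1$ along the maximal directed chain $v_1^{r}\leftarrow v_1^{r+1}\leftarrow\cdots\leftarrow v_1^{k-1}\leftarrow a_1$; this chain stays inside $V_1$ and, since $\pi$ starts in $V_0$, must terminate at a node $v_1^{r}$ that is a \emph{collider of $\pi$}, hence already an ancestor of the conditioning set. Now discard everything of $\pi$ upstream of $v_1^{r}$ and replace it by the parallel directed path through the $V_0$-copies, $a_0\to v_0^{k-1}\to\cdots\to v_0^{r+1}\to v_1^{r}$, which exists because every edge $u\to w\in\cG$ lifts to $u_0\to w_0$ and $u_0\to w_1$ in $\tilde{\cG}$. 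The spliced walk has the same colliders as $\pi$ plus $v_1^{r}$ (already open); each new node $v_0^{j}$ is a right-chain non-collider whose blockable/unblockable status coincides with that of its copy $v_1^{j}$ on $\pi$; and the $v_0^{j}$ avoid $C_0$ because the $v_1^{j}$ were unblocked on $\pi$ and, by minimality of $\pi$, no interior node of $\pi$ lies in $B_1$ --- this last point is needed, since $v_1^{j}\notin C_1\setminus B_1$ alone would not exclude $v^{j}\in C\cap B$ and hence $v_0^{j}\in C_0$. This parallel-$V_0$-path splice into the chain-terminating collider, rather than a shift of the starting point along $\pi$, is what closes the argument; the remainder of your proposal is sound and essentially identical to the paper's proof.
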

We defer all proofs to \cref{app:proofs} due to space restrictions.
The proof of \cref{prop:esep_ternary} is in \cref{app:ternary_proofs}.
Note that right redundancy (RR) does not hold, e.g., consider the graph $\cG = (V = \{ a,b \}, \cE_{d} = \{ (a,b) \})$.
In addition, (LWU) does not hold with $\cG = (V = \{ a,b,d,e \}, \cE_d = \{ (a,e), (b,e), (e,d) \})$ yielding a counterexample for $A = \{ a \}$, $B = \{ b \}$, $D = \{ d \}$ and $C = \emptyset$.
For causal discovery, the notion of `separability' is important, which we define for abstract independence models.
\begin{definition}[Separability]
Let $\cI \subseteq \cP(V)^3$ be an independence model over $V$ and $a,b \in V$. We call $b$ \defined{separable} from $a$ if there exists a $C \subseteq V \setminus \{ a \}$ such that $(\{ a \},\{ b \},C) \in \cI$ and otherwise
\defined{inseparable}.
We denote the set of nodes inseparable from a node $b$ by 
\begin{align*}
    u (b, \cI_{E}^{\cG}) := \left\lbrace a \in V \cond (\lbrace a \rbrace,\lbrace b \rbrace,C) \notin \cI_{E}^{\cG}, \: C \subseteq V \setminus \lbrace a \rbrace \right\rbrace \:.
\end{align*}
\end{definition}
We adapt a result by \citet[Prop.~3.5]{mogensen2020markov} to relate paths and walks.
\begin{lemma}\label{lemma:lemma_walk_to_path}
Let $\cG = (V, \cE_d, \cE_{bi})$ be a DMG, $\tilde{G}$ its lifted dependency graph, $a,b \in V$ and $C \subseteq V\setminus \lbrace a \rbrace$. Then it holds: If there exists a $C_0 \sqcup C_1 \setminus \lbrace b_1 \rbrace$-$\sigma$-open walk $\pi: a_0 \sim \ldots \sim b_1$ in $\tilde{G}$, there exists $C_0 \sqcup C_1 \setminus \lbrace b_1 \rbrace$-$\sigma$-open path $\tilde{\pi} : a_0 \sim \ldots \sim b_1$ in $\tilde{\cG}$ consisting of edges from $\pi$. 
\end{lemma}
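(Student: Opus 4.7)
The plan is to adapt the classical walk-to-path reduction for $d$-separation (\`a la Verma--Pearl) to the $\sigma$-separation criterion inside the lifted graph $\tilde{\cG}$. Abbreviate $D := C_0 \sqcup C_1 \setminus \lbrace b_1 \rbrace$, so $\pi$ is by assumption $\sigma$-$D$-open. I would proceed by induction on the length of $\pi$: if $\pi$ is already a path, set $\tilde{\pi} := \pi$ and we are done; otherwise, construct a strictly shorter $\sigma$-$D$-open walk $\pi'$ from $a_0$ to $b_1$ whose edges are a subcollection of those of $\pi$, and apply the induction hypothesis to $\pi'$.

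To build $\pi'$, locate a node $v$ that appears at two positions $i<j$ in $\pi = v_0 \overset{e_0}{\sim} \ldots \overset{e_n}{\sim} v_{n+1}$, and splice
\[
\pi' := v_0 \overset{e_0}{\sim} \ldots \overset{e_{i-1}}{\sim} v_i \overset{e_j}{\sim} v_{j+1} \overset{e_{j+1}}{\sim} \ldots \overset{e_n}{\sim} v_{n+1}.
\]
If one of the occurrences of $v$ coincides with $a_0$ or $b_1$, then $\pi'$ is a suffix or prefix of $\pi$ and $\sigma$-$D$-openness is immediate. Otherwise, every interior node of $\pi'$ other than $v$ keeps its incident edges from $\pi$ and therefore retains its collider/non-collider classification (blockable or unblockable) and its ancestor/conditioning status. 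The only vertex whose type could change is $v$, whose incident edges in $\pi'$ are $e_{i-1}$ and $e_j$. A case split on the marks of these edges at $v$ shows that whenever $v$ is a collider in $\pi'$ it was already a collider at position $i$ of $\pi$ (so, by openness of $\pi$, $v$ is an ancestor of $D$ in $\tilde{\cG}$ and the collider condition is preserved), and whenever $v$ is a non-collider in $\pi'$ it was already a non-collider at position $j$ of $\pi$ (so either $v \notin D$ and the blockable case is handled, or $v$ was unblockable in $\pi$ and this has to be transferred).

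I expect the main obstacle to be the unblockable non-collider sub-case, which is the genuine departure from the $d$-separation reduction and the reason we cite the adapted argument of \citet{mogensen2020markov}. One must verify that when the unblockability of $v$ at position $j$ of $\pi$ is certified by a neighbour lying inside $\mathrm{scc}^{\tilde{\cG}}(v)$, the same certificate continues to work in $\pi'$. The key observation is that the edge carrying the SCC certificate is precisely $e_j$ (mirrored at $e_{i-1}$ for the symmetric scenario), and this edge is still incident to $v$ in $\pi'$, so the same SCC membership renders $v$ unblockable in $\pi'$ as well. With this point settled, the remaining sub-cases (two heads, two tails, fork/chain flips, and swapped left/right roles) are a routine bookkeeping exercise on edge marks, which together yield $\sigma$-$D$-openness of $\pi'$, close the induction, and produce the desired open path $\tilde{\pi}$ whose edges all come from $\pi$.
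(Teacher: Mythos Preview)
Your overall strategy (induction on walk length, splice at a repeated node, check the single node whose local configuration changes) matches the paper's proof. The non-collider discussion is roughly right, though your fixed reference to ``position $j$'' should really be ``position $i$ or $j$'' depending on which of $e_{i-1},e_j$ carries the tail at $v$; the SCC-certificate transfer then needs input from \emph{both} positions in the fork sub-case, not just one.

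The genuine gap is in the collider case. Your claim that ``whenever $v$ is a collider in $\pi'$ it was already a collider at position $i$ of $\pi$'' is false. Take
\[
\ldots \overset{e_{i-1}}{\lcirclearrow} v \overset{e_i}{\rightarrow} v_{i+1} \sim \ldots \sim v_{j-1} \overset{e_{j-1}}{\leftarrow} v \overset{e_j}{\rcirclearrow} \ldots
\]
in $\pi$: here $e_{i-1}$ and $e_j$ both have heads at $v$, so $v$ is a collider in $\pi'$, yet at position $i$ of $\pi$ the edge $e_i$ has a tail at $v$ (right-chain), and at position $j$ the edge $e_{j-1}$ has a tail at $v$ (left-chain), so $v$ is a non-collider at \emph{both} original positions. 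Nothing in your case split produces $v\in\an^{\tilde{\cG}}_{D}$ here.

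The paper handles exactly this sub-case by looking \emph{into} the excised segment: starting from $v \overset{e_i}{\rightarrow} v_{i+1}$, follow directed edges to the right until the first position $\hat{k}\in[i+1,j-1]$ where a head points back, i.e.\ $v_{\hat{k}-1}\rightarrow v_{\hat{k}} \rcirclearrow v_{\hat{k}+1}$. Such a $\hat{k}$ exists because the segment is entered with a head at $v_{i+1}$ and (since $e_{j-1}$ has a tail at $v$) also with a head at $v_{j-1}$ from the right. Then $v_{\hat{k}}$ is a collider on $\pi$, hence in $\an^{\tilde{\cG}}_{D}$ by openness of $\pi$, and $v\rightarrow v_{i+1}\rightarrow\ldots\rightarrow v_{\hat{k}}$ gives $v\in\an^{\tilde{\cG}}_{D}$. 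You need this extra step; the splice alone does not suffice.
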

The proof can be found in \cref{app:proofs_on_graphs}.

\subsection{Conditional Independence}

In this section, we introduce our independence concept for stochastic dynamical systems based on an asymmetric notion of conditional independence that also takes into account time.
We then show that it satisfies the global Markov property with respect to a DG using E-separation.
\begin{definition}[Future-extended $h$-locally CI]\label{def:shCI}
Let $\{ X^{i}_t \}_{i \in [d]}$ be the coordinate processes of a solution of \cref{eq:SDEmodel} for $t \in [0,1]$, $[0,s], [s,s+h] \subseteq [0,1]$ be subintervals for $s,h > 0$ and $A,B,C \subseteq [d]$.
We say that $X^A$ is \defined{future-extended $h$-locally conditionally independent (CI)} of $X^B$ given $X^C$ at $s$, (\symbdef{$X^{A} \indepsh X^{B} \given X^C$}) if
\begin{align}\label{eq:general_future_h}
X^A_{[0,s]} \indep X^B_{[s,s+h]} \given X^{C}_{[0,s]}, X^{C \setminus B}_{[0,s+h]}
\end{align}
\end{definition}
\begin{remark*}
By the CI statement \eqref{eq:general_future_h}, we refer to the independence of increments rather than the independence of consecutive path segments. By construction, path-valued random variables $\omega \mapsto ([a,b] \ni t \mapsto X^i_t(\omega) - X^i_a(\omega))$, $i \in [d]$,  do not depend on the initial conditions and only on subsequent increments. For brevity, we denote these paths by $X_t^{i}$. 
We note that signature kernel–based CI-tests implicitly address this nuance (see, e.g., \citet{manten2024sigker}) as the signature transform is translation invariant ($S(X(t))_{a,b} = S(X(t)- X(a))_{a,b}$).
\end{remark*}
\Cref{def:shCI} differs from the one by \citet{manten2024sigker} in that $B$ is not necessarily inside the conditioning set $C$.
We now embed $\indepsh$ into the context of asymmetric independence models.
\begin{proposition}[$\indepsh$ as a ternary relation]\label{prop:indepsh_ternary}
    Under the conditions of \Cref{def:shCI}, $\indepsh$ defines the following ternary relation on $V$, 
    \begin{align*}
        \cI_{s,h} = \lbrace (A,B,C) \in \cP(V)^3 \cond X^{A} \indepsh X^{B} \given X^{C} \rbrace
    \end{align*}
    which satisfies (LR) and (LD).
\end{proposition}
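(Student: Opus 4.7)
The plan is to verify both axioms by unfolding \cref{def:shCI} and invoking the most elementary conditional independence (CI) properties of the underlying path-valued random variables, essentially with no moving parts beyond a careful bookkeeping of the conditioning sets.

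For (LR), I would simply unfold the claim $(A,B,A) \in \cI_{s,h}$ to
$X^A_{[0,s]} \indep X^B_{[s,s+h]} \given X^{A}_{[0,s]}, X^{A \setminus B}_{[0,s+h]}$,
and observe that this holds trivially because $X^A_{[0,s]}$ itself appears in the conditioning set: any random variable is conditionally independent of anything given itself (together with any further variables).

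For (LD), given $(A,B,C) \in \cI_{s,h}$ and $D \subset A$, I would write $X^A_{[0,s]} = (X^D_{[0,s]}, X^{A \setminus D}_{[0,s]})$ and apply the standard (symmetric graphoid) decomposition property on the left, i.e., if $(Y_1,Y_2) \indep Z \given W$ then $Y_1 \indep Z \given W$. This immediately yields $X^D_{[0,s]} \indep X^B_{[s,s+h]} \given X^{C}_{[0,s]}, X^{C \setminus B}_{[0,s+h]}$, which is precisely the CI statement required for $(D,B,C) \in \cI_{s,h}$. Crucially, the conditioning set in \cref{def:shCI} depends only on $B$ and $C$ (not on the first argument), so replacing $A$ by $D$ on the left leaves the conditioning set untouched and no further adjustment is needed.

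I do not anticipate any substantive obstacle: both properties reduce to the most basic CI axioms. The one nuance worth flagging, which is harmless, is that the relevant `random variables' are path-valued (in fact, increment paths, as explained in the \textit{Remark} following \cref{def:shCI}); all standard CI axioms carry over verbatim in this Polish-space setting, so no additional measure-theoretic care is required beyond the standing assumptions on \cref{eq:SDEmodel}.
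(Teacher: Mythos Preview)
Your proposal is correct and matches the paper's own proof in spirit and substance: both (LR) and (LD) are reduced to the trivial CI facts that a variable is independent of anything once it sits in the conditioning set, and that CI is monotone (decomposable) in the first argument. The paper phrases these via $\sigma$-algebras ($\cF_s^A$ absorbed into the conditioning $\sigma$-algebra for (LR), and $\cF_s^D \subseteq \cF_s^A$ for (LD)) while you invoke the graphoid axioms directly, but the arguments are the same.
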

Because \Cref{prop:esep_ternary,prop:indepsh_ternary} show that both independence models satisfy (LR) and (LD), we assume from now on that $A \cap C = \emptyset$ in statements of the form $(A,B,C) \in \cI_E$ ($\cI_{s,h}$ respectively). 

\begin{proposition}[Global Markov property for E-separation and $\indepsh$]\label{prop:global_MP_emilio_granger}
 Let $\{ X^{i}_t \}_{i \in [d]}$ be the coordinate processes of a solution of \cref{eq:SDEmodel} for $t \in [0,1]$, $\cG = (V \cong [d], \cE_{d})$ the adjacency graph defined by \cref{eq:SDEmodel}, $A,B,C \subseteq V$.
 Then
    \begin{align}\label{eq:sigma_global_MP}
        A \indep_{E}^{\cG} B \given C \: \Rightarrow \: X^{A} \indepsh X^{B} \given X^{C}.
    \end{align}
\end{proposition}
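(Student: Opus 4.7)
The plan is to embed the coordinate processes of \eqref{eq:SDEmodel} into a larger structural system indexed by $\tilde{V} = V_0 \sqcup V_1$ whose induced dependency graph is exactly $\tilde{\cG}$, and then invoke a cyclic global Markov property for $\sigma$-separation on that lifted system. Since E-separation in $\cG$ is by definition $\sigma$-separation in $\tilde{\cG}$, this will immediately translate the hypothesis into the desired conditional independence.

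First I would introduce, for each $v \in V$, two path-valued random variables: the past path $Z^{v}_{0} := X^{v}_{[0,s]}$ and the future increment $Z^{v}_{1} := (X^{v}_{t} - X^{v}_{s})_{t \in [s,s+h]}$. Reading off \eqref{eq:SDEmodel}, $Z^{v}_{0}$ is a measurable functional of $\{Z^{u}_{0}\}_{u \in \paG{v}}$, the initial value $x^{v}_{0}$, and the Brownian path $W^{v}_{[0,s]}$, while $Z^{v}_{1}$ is a measurable functional of $\{Z^{u}_{0}, Z^{u}_{1}\}_{u \in \paG{v}}$ together with $W^{v}_{[s,s+h]}$. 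Crucially, no past variable can depend on any future variable, since the It\^o integrals defining $X^{v}_{t}$ for $t \leq s$ involve only $W^{v}_{[0,s]}$ and path values up to time $s$. These functional dependencies reproduce exactly the edges of $\tilde{\cG}$: an edge $u \to v$ in $\cG$ produces $u_{0} \to v_{0}$, $u_{0} \to v_{1}$, and $u_{1} \to v_{1}$ in $\tilde{\cG}$, but never $u_{1} \to v_{0}$. Since the $(x^{v}_{0}, W^{v})_{v \in V}$ are jointly independent, the lifted system is a structural model on $\tilde{V}$ with independent exogenous noise and dependency graph $\tilde{\cG}$.

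Second, I would invoke a cyclic global Markov property for $\sigma$-separation (in the spirit of \citealp{forre2017markovpropertiesgraphicalmodels}) applied to the lifted system on $\tilde{V}$. The E-separation hypothesis $A_{0} \indep_{\sigma}^{\tilde{\cG}} B_{1} \given C_{0} \cup (C_{1} \setminus B_{1})$ then yields
$$Z^{A}_{0} \indep Z^{B}_{1} \given Z^{C}_{0}, \; Z^{C \setminus B}_{1}.$$
Because $C \setminus B \subseteq C$, the endpoint $X^{C \setminus B}_{s}$ is already measurable with respect to $X^{C}_{[0,s]}$, so $\sigma(Z^{C}_{0}, Z^{C \setminus B}_{1}) = \sigma(X^{C}_{[0,s]}, X^{C \setminus B}_{[0,s+h]})$. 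Using the translation-invariance remark after \cref{def:shCI} to identify path segments with their increments, this CI statement is exactly $X^{A} \indepsh X^{B} \given X^{C}$.

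The main obstacle will be rigorously applying a cyclic $\sigma$-separation global Markov property in the second step, because existing versions are typically stated for finite-dimensional SCMs with unique solvability, whereas here the ``variables'' are path-valued and the structural maps involve stochastic integrals. One likely needs to assume a unique strong solution of \eqref{eq:SDEmodel} and to verify measurability and noise independence carefully. A useful simplification is that cycles in $\tilde{\cG}$ are confined within $V_{0}$ or within $V_{1}$: no cycle crosses the past--future boundary because no edge $u_{1} \to v_{0}$ exists. This suggests a two-step strategy, first establishing the Markov property for the past layer (a cyclic system on $V_{0}$ governing paths on $[0,s]$ with the same cycle structure as $\cG$), then, conditionally on the past, for the future layer (a cyclic system on $V_{1}$ governing increments on $[s,s+h]$ with ``boundary inputs'' supplied by the past). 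Combining the two via the tower property for conditional independence would then deliver the full claim.
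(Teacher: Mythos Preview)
Your high-level plan---embed the process as a structural system on $\tilde V$ with dependency graph $\tilde\cG$ and read off the desired CI from a $\sigma$-separation Markov property there---is exactly the paper's strategy. The execution differs in how the technical obstacle you correctly identify is handled. Rather than invoking a cyclic $\sigma$-separation global Markov property as a black box (which is stated for finite-dimensional SCMs), the paper first passes to the \emph{acyclification} $\tilde\cG^{\mathrm{acy}}$: since $\sigma$-separation in $\tilde\cG$ implies $d$-separation in $\tilde\cG^{\mathrm{acy}}$, it suffices to establish the $d$-separation global Markov property on that DAG. This is done by induction on the number of strongly connected components of $\cG$. At each step a terminal SCC is attached, with the solution on that SCC written in \emph{solved form at the SCC level}---each $X^{j}_{[0,s]}$ (resp.\ $X^{j}_{[s,s+h]}$) as a measurable function of the entire SCC's initial conditions and Brownian paths together with the paths of the SCC's \emph{external} parents---and its lifted nodes $v^{1}_{0},\dots,v^{r}_{0},v^{1}_{1},\dots,v^{r}_{1}$ are added one by one via the generic ``measurable function of parents plus independent noise preserves the $d$-Markov property'' lemma of Forr\'e--Mooij. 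This bypasses the path-valued-variable issue entirely: on the acyclified DAG no cyclic theorem is needed, only that robust measurable-dependence argument. Two smaller contrasts: your node-level equations ``$Z^{v}_{0}$ is a functional of $\{Z^{u}_{0}\}_{u\in\paG{v}}$'' are implicit inside a cycle, whereas the paper works with the explicit SCC-level solved form; and your past/future two-layer fallback still leaves cycles inside each layer, which is precisely what the acyclification-plus-SCC-induction dissolves (the paper in fact combines both cuts, iterating over SCCs and, for each, adding past copies before future copies).
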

The proof relies on an application of \citet[Theorem B.2]{Forre_Constraint} based on the concept of \textit{acyclification} and is worked out in detail in \cref{app:proofmarkov}.

If the summary graph $\cG = (V, \cE_d)$ is acyclic except for self-loops, the lifted dependency graph $\tilde{\cG}$ becomes a DAG, hence all non-colliders are blockable and E-separation reduces to $A_0 \indep_{d}^{\tilde{\cG}} B_1 \given C_0, C_1\setminus B_1$.
Hence, the Markov property found by \citet{manten2024sigker} is a simple special case of ours.
This special case can be used in causal discovery algorithms that identify the full graph in the acyclic case (including self-loops) by testing $\indepsh$ ( under a faithfulness assumption). 
However, when allowing for cycles beyond self-loops, i.e., there are strongly connected components of size at least 2, self-loops cannot be distinguished anymore from data alone and multiple ground truth graphs are possible.
As a result, a maximally informative graphical representation is desirable to capture all that can be consistently inferred about the dependency graph from observational data.

\subsection{A Characterization of Markov Equivalence in E-Separation DGs}\label{sect:markov_equiv_class}

Several graphs may encode the same set of independence triples under some graphical separation criterion. When this occurs, the graphs are referred to as being \defined{Markov equivalent} and one typically aims at characterizing and finding a useful representation of the entire Markov equivalence class.
In the case of DAGs under $d$-separation, a common approach is to use a CPDAG to represent a Markov equivalence class of DAGs, despite the CPDAG itself not being a DAG \citep{pearl2009causality}.
In DGs with $\sigma$-separation, there need not exist a greatest element within an equivalence class of DGs (see \cref{app:sigma_no_maximal_element} for an example). 
This section will demonstrate that for DGs under E-separation, each equivalence class contains a greatest element, which serves as an informative representative of the class.
\begin{definition}[Markov equivalence]
Let $\cG^1 = (V, \cE_{d}^1, \cE_{bi}^1)$, $\cG^{2} = (V, \cE_{d}^{2}, \cE_{bi}^2)$ be two DMGs over a common node set $V$. We say that $\cG^1$ and  $\cG^2$ are \defined{(E-separation) Markov equivalent} if $\cI_{E}^{\cG^1} = \cI_{E}^{\cG^2}$. E-separation Markov equivalence induces an equivalence relation on the set of DMGs over $V$, and we denote the \defined{(Markov) equivalence class} of $\cG^{1}$  by \defined{$[\cG^1]_{E}$}.
\end{definition}
\begin{definition}[Maximal DMGs]
We call a DMG $\cG = (V, \cE_d, \cE_{bi})$  \defined{maximal}, if $\cG$ is complete or for all $e \in (V \times V)\setminus \cE_d$ we have $\cI_{E}^{\cG \cup \lbrace e \rbrace} \neq \cI_{E}^{\cG}$, where \defined{$\cG \cup \lbrace e \rbrace:= (V, \cE_{d} \cup \lbrace e \rbrace)$}. 
\end{definition}
\begin{definition}[Greatest element]
    We say that $\cG_0$ is the \defined{greatest element} of an equivalence class $[\cG_1]_E$ if $\cG_0 \in [\cG_1]_E$ and $\cG_2 \subseteq \cG_0$ for all $\cG_2 \in [\cG_1]_E$.
\end{definition}
To characterize Markov equivalence classes, we first show that a strongly connected component can be shielded from the outside given itself and its parents. Note that $\sccG{v} \subseteq \paG{\sccG{v}}$ if $\lvert \sccG{v} \rvert \geq 2$.
\begin{proposition}\label{prop:local_MP_Esep}
    Let $\cG = (V, \cE_d)$ be a DG and let $v \in V$. Then $(V \setminus \paG{\sccG{v}}, \sccG{v}, \paG{\sccG{v}}) \in \cI_{E}^{\cG}$.
    Moreover, we have:
    \begin{enumerate}
        \item[(i)] If $w \in V$, $w \neq v$ such that $\sccG{v} \neq \sccG{w}$, then either $(\{ v \}, \{ w \}, \paG{\sccG{w}}) \in \cI_{E}^{\cG}$ or\\ $(\{ w \}, \{ v \}, \paG{\sccG{v}}) \in \cI_{E}^{\cG}$.
        \item[(ii)] If $\{ v \} = \sccG{v}$, meaning $v$'s strongly connected component is a singleton, then 
        $(v,v) \notin \cE_d$ if and only if $(\{ v \}, \{ v \}, \paG{}) \in \cI_{E}^{\cG}$.
    \end{enumerate}    
\end{proposition}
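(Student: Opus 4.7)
The plan is to prove the main separation assertion $(V \setminus \paG{\sccG{v}}, \sccG{v}, \paG{\sccG{v}}) \in \cI_{E}^{\cG}$ first, and then derive parts (i) and (ii) from it using the (LD) and (RD) rules of \Cref{prop:esep_ternary}. Throughout, I abbreviate $S := \sccG{v}$, $P := \paG{S}$, and $Z := P_0 \cup (P_1 \setminus S_1)$. Unfolding the definition of E-separation reduces the main claim to the $\sigma$-separation $(V \setminus P)_0 \indep_{\sigma}^{\tilde{\cG}} S_1 \given Z$ in the lifted dependency graph, and by \Cref{lemma:lemma_walk_to_path}, applied for each $a \in V \setminus P$ and $b \in S$, it suffices to show that every \emph{path} $\pi$ from $a_0$ to $b_1$ in $\tilde{\cG}$ is $\sigma$-blocked by $Z$.

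The structural key I would exploit is that $\tilde{\cG}$ has no edges from layer $1$ back to layer $0$; consequently every SCC of $\tilde{\cG}$ is confined to a single layer, and the SCC of any node $u_j$ in $\tilde{\cG}$ is precisely the layer-$j$ copy of $\sccG{u}$. Moreover, when $|S| \geq 2$ one has $S \subseteq P$, and hence $S_0 \subseteq Z$. Given a candidate path $\pi = v^{(0)}_{i_0} \sim \ldots \sim v^{(n+1)}_{i_{n+1}}$, let $m$ be the largest index with $v^{(m)}_{i_m} \notin S_1$ (so that $i_{m+1}=1$, $v^{(m+1)} \in S$, and $m \geq 0$ because $a \notin S$), and I would case-split on the orientation of the edge $e_m$.

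If $e_m$ has its tail at $v^{(m)}_{i_m}$, then $v^{(m)} \to v^{(m+1)}$ in $\cG$ forces $v^{(m)} \in P \setminus S$ and hence $v^{(m)}_{i_m} \in Z$; since $a \notin P$ forces $m \geq 1$, this node is an internal non-collider (right-chain or fork), and the layer-confinement of SCCs guarantees that $v^{(m+1)}_1 \in S_1$ does not lie in the $\tilde{\cG}$-SCC of $v^{(m)}_{i_m}$, so the non-collider is \emph{blockable} and $\pi$ is $\sigma$-blocked at $v^{(m)}_{i_m}$. The hard case is when $e_m$ has its head at $v^{(m)}_{i_m}$: here $v^{(m+1)} \to v^{(m)}$ in $\cG$, the node $v^{(m)}$ is a strict descendant of $S$ (membership in either $S$ or $P$ would place $v^{(m)}$ and $v^{(m+1)}$ in a common SCC), the node $v^{(m)}_{i_m}$ necessarily lies in layer $1$ and outside $Z$, and it could even be a collider on $\pi$. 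To close this case I would iterate backward along $\pi$: if $v^{(m)}_1$ is a collider, its descendants in $\tilde{\cG}$ are confined to layer $1$ and to SCCs strictly below $S$ in the SCC-DAG, so they cannot meet $Z$ and the collider is not activated; otherwise $v^{(m)}_1$ must be a left-chain non-collider with a directed edge $v^{(m)}_1 \to v^{(m-1)}_1$ in $\tilde{\cG}$, placing $v^{(m-1)}$ again in layer $1$ and in an SCC strictly below $S$, whereupon the same analysis repeats at position $m-1$. Because $a_0$ sits in layer $0$ and no edge of $\tilde{\cG}$ crosses from layer $1$ down to layer $0$, this backward chain cannot reach position $0$: the first break must occur at a layer-$1$ node that also receives an arrowhead from a cross-layer edge $v^{(k-1)}_0 \to v^{(k)}_1$, turning that node into a collider whose descendants again miss $Z$. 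I expect this descendant-entering case to be the main obstacle; an equivalent and perhaps more uniform route is to pass to the acyclification $\tilde{\cG}^{\mathrm{acy}}$ of \citet{Forre_Constraint}, where $S_1$ becomes a bidirected clique whose only directed parents from outside are exactly $Z$, and then apply the equivalence of $\sigma$-separation in $\tilde{\cG}$ with $d$-separation in $\tilde{\cG}^{\mathrm{acy}}$.

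Parts (i) and (ii) follow by short derivations. For (i), the DAG of strongly connected components is acyclic, so at least one of ``$v \in \anG{\sccG{w}}$'' and ``$w \in \anG{\sccG{v}}$'' must fail; in the failing direction (say $v \notin \anG{\sccG{w}}$) one has $v \notin \paG{\sccG{w}}$ because $\paG{\sccG{w}} \subseteq \anG{\sccG{w}}$, so $v \in V \setminus \paG{\sccG{w}}$, and the main statement combined with (LD) and (RD) from \Cref{prop:esep_ternary} yields $(\{v\},\{w\},\paG{\sccG{w}}) \in \cI_E^{\cG}$; the symmetric direction produces the other disjunct. For (ii) with $S = \{v\}$: if $(v,v) \notin \cE_d$ then $v \notin \paG{v}$, so $v \in V \setminus \paG{v}$ and the main statement with (LD) gives $(\{v\},\{v\},\paG{v}) \in \cI_E^{\cG}$; conversely, if $(v,v) \in \cE_d$ then the direct edge $v_0 \to v_1$ in $\tilde{\cG}$ is itself a $\sigma$-open walk between its endpoints under any conditioning set, precluding the independence.
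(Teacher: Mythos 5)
Your proposal is correct and follows essentially the same route as the paper's own proof: it reduces E-separation to $\sigma$-blocking in the lifted graph and case-splits on how a walk last enters the strongly connected component, blocking either at a conditioned blockable non-collider (entry with a tail, i.e.\ through a parent or a past copy of the component) or at a collider that is a strict descendant of the component and hence not an ancestor of the conditioning set (entry from the future through a child), with (i) and (ii) then derived from the main claim via (LD)/(RD) and acyclicity of the SCC-DAG, just as in the paper. Two cosmetic points only: in your tail case the node $v^{(m)}$ need not lie in $P \setminus S$ (for $i_m=0$ it may be a layer-$0$ copy of an SCC node), though the needed conclusion $v^{(m)}_{i_m} \in Z$ still holds because $P_0 \subseteq Z$; and in (ii) the edge $v_0 \to v_1$ is $\sigma$-open only because the conditioning set excludes $v_0$ (the paper's standing convention $A \cap C = \emptyset$), not ``under any conditioning set,'' since conditioning on an endpoint $\sigma$-blocks a walk.
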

The proof can be found in \cref{app:markov_equivalence_classes_of_dgs}.
Next, we characterize which types of edges can be added to a DG $\cG$ and their impact on the underlying independence model $\cI_{E}^{\cG}$.
\begin{lemma}\label{lemma:equivalent_edge_adding}
    Let $\cG = (V, \cE_d)$ be a directed graph, $i,j \in V$, $(i,j) \notin \cE_d$ and denote $\cG^\prime = (V, \cE_d \cup \lbrace (i,j) \rbrace)$. We then have for $i \neq j$: 
    \vspace{-2mm}
    \begin{enumerate}
        \setlength{\itemsep}{-2pt}
        \item[(i)] If $i \in \sccG{j}$, then $\cI_{E}^{\cG} = \cI_{E}^{\cG^\prime}$.
        \item[(ii)] If $i \notin \sccG{j}$ and there is a $j^\prime \in \sccG{j}$ such that $(i,j^\prime) \in \cE_d$ then $\cI_{E}^{\cG} = \cI_{E}^{\cG^\prime}$.
        \item[(iii)] If $i \notin \sccG{j}$ and there is a $j^\prime \in \sccG{j}$ such that $(i,j^\prime) \in \cE_d$ then $\cI_{E}^{\cG} \neq \cI_{E}^{\cG^\prime}$. 
    \end{enumerate}
    \vspace{-2mm}
    For $i = j$ we have:
    \vspace{-2mm}
    \begin{enumerate}
        \setlength{\itemsep}{-2pt}
        \item[(iv)] If $\lvert \sccG{j} \vert = 1$, then $\cI_{E}^{\cG} \neq \cI_{E}^{\cG^\prime}$.
    \end{enumerate}
\end{lemma}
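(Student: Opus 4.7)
Since $\cG \subseteq \cG'$, every $\sigma$-open walk in $\tilde{\cG}$ remains a $\sigma$-open walk in $\tilde{\cG'}$, so $\cI_E^{\cG'} \subseteq \cI_E^{\cG}$ is automatic. The plan is therefore to establish the reverse inclusion for (i) and (ii), and to exhibit a triple in $\cI_E^{\cG} \setminus \cI_E^{\cG'}$ for (iii) and (iv).

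For (i) and (ii), I would assume for contradiction a $\sigma$-$\tilde C$-open walk $\pi$ in $\tilde{\cG'}$ between $A_0$ and $B_1$ with $\tilde C := C_0 \cup (C_1 \setminus B_1)$, and reroute every occurrence of one of the three new edges $\{i_0 \to j_0,\ i_0 \to j_1,\ i_1 \to j_1\}$ through a path that is already present in $\tilde{\cG}$. In case (i), the hypothesis $i \in \sccG{j}$, $i \neq j$, supplies a directed $\cG$-path $i \to v_1 \to \dots \to v_n \to j$ with each $v_\ell \in \sccG{j}$; in case (ii), the edge $i \to j'$ prepends to such a path inside $\sccG{j}$ from $j'$ to $j$. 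Lifted to the appropriate layer(s) of $V_0 \sqcup V_1$, the interior nodes of the replacement become right-chains with both neighbors in the same SCC of $\tilde{\cG}$, hence unblockable non-colliders. At the original endpoints $i_\ell, j_m$ of the replaced edge, the incident edge marks at these vertices are unchanged, so the collider/non-collider type is preserved; a short case analysis over the mark of the other edge at $i_\ell$ (and symmetrically at $j_m$) shows that $\sigma$-openness survives. Iterating produces a $\sigma$-$\tilde C$-open walk inside $\tilde{\cG}$, contradicting $(A,B,C) \in \cI_E^{\cG}$.

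For (iii), which I read as ``$i \notin \sccG{j}$ and no $j' \in \sccG{j}$ satisfies $(i,j') \in \cE_d$'' (otherwise the statement would directly contradict (ii)), the hypotheses give $i \in V \setminus \paG{\sccG{j}}$. \Cref{prop:local_MP_Esep} then yields $(V \setminus \paG{\sccG{j}},\, \sccG{j},\, \paG{\sccG{j}}) \in \cI_E^{\cG}$, and left/right decomposition from \Cref{prop:esep_ternary} refines this to $(\{i\},\{j\},\paG{\sccG{j}}) \in \cI_E^{\cG}$. In $\tilde{\cG'}$, however, the new edge lifts in particular to the one-edge walk $i_0 \to j_1$; its only nodes are the endpoints $i_0 \in A_0$ and $j_1 \in B_1$, and neither lies in $\tilde C = \paG{\sccG{j}}_0 \cup (\paG{\sccG{j}}_1 \setminus \{j_1\})$ (using $i \notin \paG{\sccG{j}}$ and $B_1 \cap \tilde C = \emptyset$). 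The walk is therefore $\sigma$-$\tilde C$-open, so the triple leaves $\cI_E^{\cG'}$. For (iv), adding a self-loop to a singleton SCC keeps it a singleton, so \Cref{prop:local_MP_Esep}(ii) applies in both $\cG$ and $\cG'$: in $\cG$ it gives $(\{j\},\{j\},\paG{j}) \in \cI_E^{\cG}$ because $(j,j) \notin \cE_d$, while in $\tilde{\cG'}$ the lifted self-loop $j_0 \to j_1$ is a one-edge $\sigma$-open walk (its endpoints avoid the conditioning set because $j \notin \paG{j}$ in $\cG$), so the same triple leaves $\cI_E^{\cG'}$.

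The hard part will be the endpoint analysis in the rerouting step for (i) and (ii): at $i_\ell$ one must check, case by case on the incoming mark in $\pi$, that the non-collider type at $i_\ell$ and its blockable/unblockable status under $\tilde C$ transfer from the edge $i_\ell \to j_m$ of $\pi$ to the edge $i_\ell \to (v_1)_\ell$ (or $i_\ell \to j'_\ell$) of the replacement, and symmetrically at $j_m$. A convenient observation specifically for (ii) is that $j \notin \sccG{i}$ prevents $i_\ell$ from being an unblockable non-collider in $\pi$ via a fork or right-chain ending at $j_m$, so $i_\ell$ must have been a blockable non-collider with $i_\ell \notin \tilde C$ in the original walk, and this condition is independent of which outgoing neighbor replaces $j_m$ and thus persists after the rerouting.
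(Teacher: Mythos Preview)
Your proposal is correct and follows essentially the same route as the paper's proof: the rerouting argument for (i) and (ii), replacing each lifted copy of the new edge by an endpoint-identical directed path through $\sccG{j}$ (via $j'$ in case (ii)), and then invoking \cref{prop:local_MP_Esep} for (iii) and (iv). Your endpoint analysis at $i_\ell$ is in fact more carefully spelled out than the paper's, which simply asserts that $i_0$ (respectively $i_0,i_1$) is the only blockable non-collider introduced and cannot lie in the conditioning set; your observation that $j\notin\sccG{i}$ forces $i_\ell$ to already be a blockable non-collider in $\pi'$ is exactly the mechanism the paper uses implicitly. You also correctly handle (iv), which the paper's proof leaves unstated, and correctly read (iii) as requiring no such $j'$.
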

The proof is in \Cref{app:markov_equivalence_classes_of_dgs}.
It works by showing that an open path in one graph can be used to construct an open path in the other graph.
\Cref{lemma:equivalent_edge_adding} explicitly states which edges can be added or removed from a graph to reach all other graphs within the same Markov equivalence class. It also implies that when $\cG$ is an acyclic graph with potential self-loops, its Markov equivalence class consists solely of $\cG$ itself, enabling identification of the full graph via causal discovery.

\Cref{fig:dg_equivalence_class} shows 12 DGs (i)-(xii) over 4 nodes $X^1, X^2, X^3, X^4$, all having the same set of irrelevance relations and with graph (xii) being the greatest element.
This example illustrates the characterization in \Cref{lemma:equivalent_edge_adding}.
Since $(\{ X^1\}, \{ X^1 \}, \emptyset) \in \cI_E$, none of the graphs can contain the self loop $(X^1, X^1) \in \cE_d$, however no changes to $\cI_E$ occur when adding self-loops to nodes of $\sccG{X^2} = \{ X^2, X^4 \}$.
Moreover there always exists an edge going from $\sccG{X^1}$ into $\sccG{X^2} = \{ X^2, X^4 \}$, but all options are possible. Finally, the edge $X^4 \rightarrow X^3$ can be excluded even though $X^2 \rightarrow X^4$ is always present.
With \cref{lemma:equivalent_edge_adding} we can now prove that each Markov equivalence class of DGs has a greatest element.

\begin{figure}
    \centering
    \includegraphics[width=0.8\textwidth]{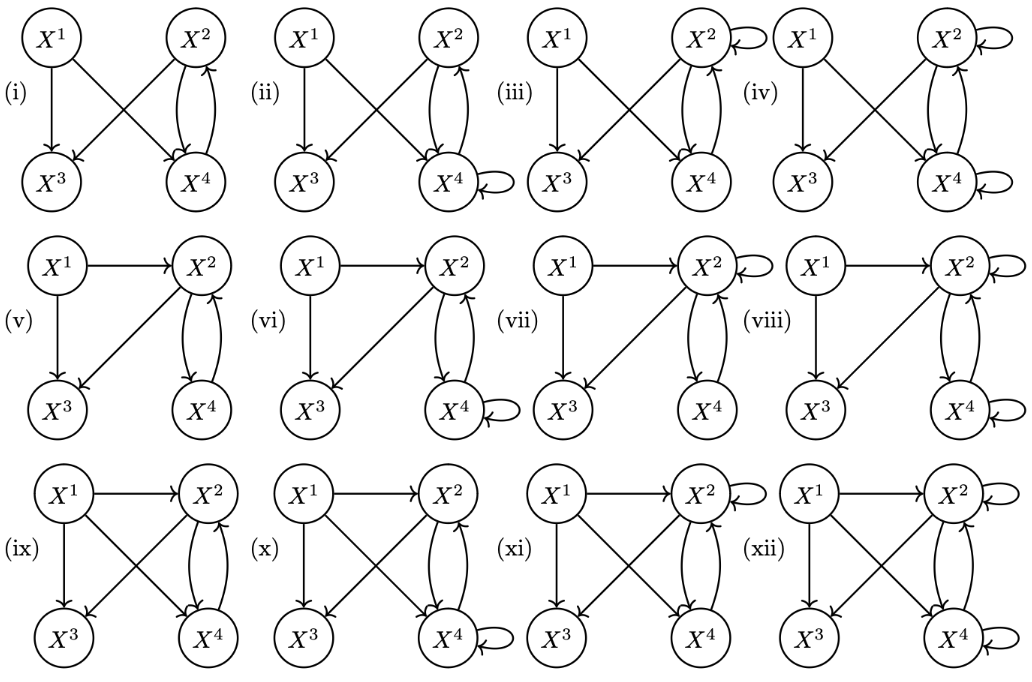}
    \vspace{-4mm}
    \caption{DGs (i)-(xii) are the elements of one Markov equivalence and (xii) is its greatest element.}\label{fig:dg_equivalence_class}
    \vspace{-3mm}
\end{figure}

\begin{theorem}[Greatest Markov equivalent DG]\label{thm:existence_max_el_DG} Let $\cG = (V, \cE_d)$ be a DG. The Markov equivalence class of $\cG$, $[\cG]_E$, contains a greatest element with respect to the partial order $\subseteq$. That is, there exists a $\hat{\cG} \in [\cG]_E$ such that $\cG^{\prime} \subseteq \hat{\cG}$ for all $\cG^{\prime} \in [\cG]_E$.

Finally, let $\cG_0, \cG_1, \ldots, \cG_m$, $\cG_{i} = (V, \cE_{i})$ be a finite sequence of DGs such that $\cG_0 = \hat{\cG}$ and $\cE_{i+1} = \cE_{i} \setminus \{ (v_{i}, w_{i}) \}$. Then for any of the following cases
\vspace{-2mm}
\begin{itemize}
    \setlength{\itemsep}{-2pt}
    \item[(i)] $v_i \in \scc_{w_i}^{\cG_{i}}$, $v_i \neq w_i$ and $\scc_{w_{i+1}}^{\cG_{i}} \in \mathbf{S}[\cG_{i+1}]$, meaning the strongly connected component remains strongly connected after removing the edge;
    \item[(ii)] $v_i \notin \scc_{w_i}^{\cG_{i}}$ and $\lvert \{ (v_i, w^\prime) \in \cE^{i}_{d} \: : \: w^\prime \in \scc_{w_i}^{\cG_{i}} \} \rvert \geq 2$;
    \item[(iii)] $v_i = w_i$ and $\lvert \scc_{w_i}^{\cG_{i}} \rvert \geq 2$    
\end{itemize}
we have that $\cI_{E}^{\cG_{i}} = \cI_{E}^{\hat{\cG}} = \cI_{E}^{\cG}$ for all $i \in [m]$.
\end{theorem}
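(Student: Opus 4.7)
The plan is to prove Part~1 by constructing $\hat{\cG}$ explicitly and then showing it dominates the equivalence class via a few invariance facts, and to prove Part~2 by induction, inverting \cref{lemma:equivalent_edge_adding} at each step.

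First, I would take $\hat{\cG}$ to be $\cG$ augmented by every edge permitted by \cref{lemma:equivalent_edge_adding}(i) or (ii): namely all within-SCC edges (so each SCC becomes a clique and every node in an SCC of size $\geq 2$ carries a self-loop), together with every $(i,j)$ satisfying $i \notin \sccG{j}$ and $i \in \paG{\sccG{j}}$. A short check confirms the SCC partition is preserved throughout: within-SCC additions cannot split or merge SCCs, and an edge $(i,j)$ with $i \notin \sccG{j}$ together with an existing $i \to j' \in \sccG{j}$ cannot close a new cycle, since a hypothetical path $j \to \dots \to i$ would force $i \in \sccG{j}$. Iteratively applying \cref{lemma:equivalent_edge_adding}(i) and (ii) then gives $\hat{\cG} \in [\cG]_E$.

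For maximality, I would take any $\cG' \in [\cG]_E$ and show $\cG' \subseteq \hat{\cG}$ by establishing three invariants of $\cI_E$: (1) the SCC partition, (2) $\paG{T} \setminus T$ for each SCC $T$, and (3) self-loops on singleton SCCs. For (1), if $|\scc^{\cG'}_v| \geq 2$ and $w \in \scc^{\cG'}_v$, any directed path $v \to \dots \to w$ inside the SCC lifts to a walk $v_0 \to \dots \to w_1$ in $\tilde{\cG}'$ whose internal non-colliders all lie in $(\scc^{\cG'}_v)_0$, the single SCC of $v_0$ in $\tilde{\cG}'$, and are thus unblockable, forcing mutual inseparability of $v$ and $w$; conversely \cref{prop:local_MP_Esep}(i) produces a separator for nodes in distinct SCCs. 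For (2), \cref{prop:local_MP_Esep} combined with (LD) and (RD) yields $(\{i\}, \{j\}, \paG{\sccG{j}}) \in \cI_E$ whenever $i \notin \paG{\sccG{j}} \cup \sccG{j}$, while a lifted walk $i_0 \to j'_1 \to \dots \to j_1$ (all internal nodes unblockable in $(\sccG{j})_1$) shows non-separation when $i \in \paG{\sccG{j}} \setminus \sccG{j}$. For (3), \cref{prop:local_MP_Esep}(ii) characterizes the self-loop directly via $\cI_E$. Given (1)--(3), every edge of $\cG'$ matches an edge of $\hat{\cG}$.

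For Part~2, induction on $i$ reduces the claim to showing each single removal preserves $\cI_E$. Case~(i) is exactly the reverse of \cref{lemma:equivalent_edge_adding}(i), since $v_i \in \scc^{\cG_{i+1}}_{w_i}$ by hypothesis. Case~(ii) is the reverse of \cref{lemma:equivalent_edge_adding}(ii), since the spare edge ensures $v_i \in \pa^{\cG_{i+1}}_{\scc^{\cG_{i+1}}_{w_i}}$. Case~(iii) is not literally covered by \cref{lemma:equivalent_edge_adding} and needs a separate but easy argument: any walk in $\tilde{\cG}_i$ using a self-loop edge at $v_i$ can be rerouted through the cycle in $\scc^{\cG_i}_{v_i}$ (present since $|\scc^{\cG_i}_{v_i}| \geq 2$), and the newly introduced internal nodes lie in the same $\tilde{\cG}_i$-SCC and are unblockable, so open/blocked status is preserved. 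The hard part is invariant~(3): \cref{prop:local_MP_Esep}(ii) involves a triple $(\{v\}, \{v\}, \paG{v})$ whose $A$ and $C$ overlap precisely when a self-loop is present, so cleanly comparing Markov-equivalent $\cG$ and $\cG'$ requires splitting $\paG{v}$ into the invariant cross-SCC contribution (controlled by~(2)) and the potential self-loop, then applying \cref{prop:local_MP_Esep}(ii) in both graphs.
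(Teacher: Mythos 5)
Your proposal is correct and follows essentially the same route as the paper: both arguments identify the SCC partition, the external parent set of each strongly connected component, and self-loops at singleton components as invariants of $\cI_{E}$ (via \cref{prop:local_MP_Esep} and open lifted walks through a strongly connected copy), and then combine these invariants with \cref{lemma:equivalent_edge_adding} to obtain the greatest element; you are simply more explicit than the paper about constructing $\hat{\cG}$, about the inductive edge-removal statement, and about the self-loop case that \cref{lemma:equivalent_edge_adding} does not literally cover. One small slip to fix: the lifted walk certifying mutual inseparability inside a component should jump to the future copy at the first step, i.e.\ $v_0 \to u^1_1 \to \dots \to w_1$ with all internal nodes in $(\scc^{\cG^\prime}_v)_1$, because if the internal nodes stay in $(\scc^{\cG^\prime}_v)_0$ as you wrote, the last node before $w_1$ is a \emph{blockable} non-collider (its successor $w_1$ lies outside its SCC in the lifted graph) and the walk could be blocked.
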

The proof is in \cref{app:markov_equivalence_classes_of_dgs}.
DMGs have also been used as representations of so-called local independence in multivariate stochastic process using a graphical separation criterion known as $\mu$-separation. Markov equivalence classes of DMGs under $\mu$-separation, as well as so-called weak equivalence classes, also have a greatest element \citep{mogensen2020markov,mogensen2024weak}.

In analogy to the DAG case, where two DAGs are Markov equivalent if and only if they have the same skeleton and the same v-structures (see, e.g., \citet[Theorem 2.62]{lectures_lauritzen}), we can now establish the following graphical characterization result, which can be shown using \cref{lemma:equivalent_edge_adding}.
\begin{theorem}[Characterization of Markov Equivalent DGs]
Let $\cG_1 = (V, \cE_d^{1})$ and $\cG_2 = (V, \cE_d^{2})$ be DGs over $V$. Then $\cI_{E}^{\cG_1} = \cI_{E}^{\cG_2}$ is equivalent to the following three properties:
\vspace{-2mm}
\begin{enumerate}
    \setlength{\itemsep}{-2pt}
    \item[(i)] $\mathbf{S} (\cG_1 ) = \mathbf{S} (\cG_2)$, i.e., the strongly connected components coincide;
    \item[(ii)] for $[v] \in \mathbf{S} (\cG_1)$ with $\lvert \scc_{v}^{\cG_1} \rvert = 1$
    \vspace{-2mm}
    \begin{align*}
        \pa_{\scc_{v}^{\cG_1}} = \pa_{\scc_{v}^{\cG_2}}\:;
    \end{align*}
    \vspace{-4mm}
    \item[(iii)] for $[v] \in \mathbf{S} (\cG_1)$ with $\lvert \scc_{v}^{\cG_1} \rvert \geq 2$
    \vspace{-2mm}
    \begin{align*}
        \pa_{\scc_{v}^{\cG_1}} \setminus \scc_{v}^{\cG_1} = \pa_{\scc_{v}^{\cG_2}} \setminus \scc_{v}^{\cG_2}\:;
    \end{align*}
    \vspace{-4mm}
\end{enumerate}  
\end{theorem}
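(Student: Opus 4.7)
The plan is to prove both directions using the edge-characterization in \cref{lemma:equivalent_edge_adding} together with the separability analysis underlying \cref{prop:local_MP_Esep}. The $(\Leftarrow)$ direction will follow by showing that $\cG_1$ and $\cG_2$ saturate to the same greatest element of their equivalence class. The $(\Rightarrow)$ direction will follow by showing that the SCC structure and the relevant parent sets are recoverable from $\cI_E$ alone via separability.

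For $(\Leftarrow)$, starting from each $\cG_i$ I construct a saturated graph $\hat{\cG}_i$ by applying the $\cI_E$-preserving edge-additions of \cref{lemma:equivalent_edge_adding}: all within-SCC edges (case (i)) together with every self-loop on a non-singleton SCC (justified by \cref{thm:existence_max_el_DG}(iii) read in reverse), and every edge $p \to s$ for which some $s' \in \sccG{s}$ already has $p$ as a parent (case (ii)). By \cref{thm:existence_max_el_DG}, $\hat{\cG}_i$ is the greatest element of $[\cG_i]_E$, so $\cI_E^{\cG_i} = \cI_E^{\hat{\cG}_i}$. Under (i)--(iii), the two saturations coincide: within-SCC edges and non-singleton self-loops are pinned down by the common SCC structure; external-parent edges into each non-singleton SCC are pinned down by (iii); and the full parent set (including any self-loop) of each singleton SCC is pinned down by (ii). Hence $\hat{\cG}_1 = \hat{\cG}_2$, and $\cI_E^{\cG_1} = \cI_E^{\hat{\cG}_1} = \cI_E^{\hat{\cG}_2} = \cI_E^{\cG_2}$.

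For $(\Rightarrow)$, the key lemma is that for $a \neq b$, the node $b$ is separable from $a$ in $\cG$ if and only if $a \notin \paG{\sccG{b}}$. The ``if'' direction follows from \cref{prop:local_MP_Esep} combined with (LD) and (RD) from \cref{prop:esep_ternary}, applied to the triple $(V \setminus \paG{\sccG{b}}, \sccG{b}, \paG{\sccG{b}}) \in \cI_E^{\cG}$. The ``only if'' direction constructs an explicit $\sigma$-open walk in the lifted graph: when $a \to b'$ for some $b' \in \sccG{b}$, pick any directed path $b' \to v^{(1)} \to \ldots \to v^{(n-1)} \to b$ in $\cG$, whose intermediate nodes automatically lie in $\sccG{b}$ by a standard SCC-reachability argument. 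The lifted walk $a_0 \to b'_1 \to v^{(1)}_1 \to \ldots \to b_1$ contains no colliders, and since the $V_1$-restriction of $\tilde{\cG}$ is isomorphic to $\cG$, all intermediate non-colliders lie in the SCC of $b_1$ in $\tilde{\cG}$ and are therefore unblockable. Hence this walk is $\sigma$-open for every $C \subseteq V \setminus \{a\}$. With the separability characterization in hand, the SCC structure is recovered by ``$w \in \sccG{v}$ for $v \neq w$ iff $v$ and $w$ are mutually inseparable''; external parents of each SCC are recovered via the same condition; and self-loops on singleton SCCs are recovered using \cref{prop:local_MP_Esep}(ii). Since these features are determined by $\cI_E$ alone, $\cI_E^{\cG_1} = \cI_E^{\cG_2}$ forces (i), (ii), and (iii).

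The main obstacle is verifying that the constructed lifted walk is $\sigma$-open uniformly over all admissible conditioning sets $C \subseteq V \setminus \{a\}$. This reduces to two structural facts about $\tilde{\cG}$: the $V_1$-restriction mirrors $\cG$ as a DG and there are no cross-edges $V_1 \to V_0$, so SCCs transport faithfully and no new cycles appear that could break unblockability; and the endpoint $b_1$ is excluded from the conditioning set by construction of E-separation. Additional care is needed to distinguish singleton SCCs with and without self-loops when establishing condition (ii), since \cref{lemma:equivalent_edge_adding}(iv) shows these two situations are not $\cI_E$-equivalent and must be separated from the non-singleton analysis handled by (iii).
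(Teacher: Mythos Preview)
Your proposal is correct and follows essentially the same route the paper indicates: the $(\Leftarrow)$ direction via the $\cI_E$-preserving edge additions of \cref{lemma:equivalent_edge_adding} (saturating both graphs to a common supergraph), and the $(\Rightarrow)$ direction via the separability analysis of \cref{prop:local_MP_Esep}, which is exactly how the paper recovers the SCC partition, the external parents, and singleton self-loops in the proof of \cref{thm:existence_max_el_DG}. Your explicit ``key lemma'' that $b$ is inseparable from $a$ (for $a\neq b$) iff $a\in\paG{\sccG{b}}$, together with your direct lifted-walk argument for the only-if direction, makes precise what the paper leaves implicit, but it is the same underlying mechanism.
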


\subsection{E-Separation in the Presence of Latent Variables}

Latent (or unobserved) variables are typically represented in graphical frameworks by mixed edges, e.g., bi-directed edges.
The main purpose of graphs with mixed edges is to graphically represent independencies in marginalized distributions (where the latents have been marginalized out).
Accordingly, one can obtain graphs with mixed edges via a `marginalization' operation on the DG that still contains the latent variables.
For example, ADMGs are often obtained via a latent projection of a DAG and represents (conditional) independencies in the marginal joint distribution over observed variables.

In this section, we show that the global Markov property, which we have shown for DGs in \cref{prop:global_MP_emilio_granger}, with respect to a DMG obtained via latent projection from a larger DG is inherited from the DG.
To keep this paper self-contained, we briefly recap the notion of \emph{latent projection} and \emph{marginal independence model}.
\begin{definition}[The latent projection]
    Let $\cG = (V, \cE_d, \cE_{bi})$ be a DMG, $V = V_{obs} \cupdot V_{lat}$. The \defined{latent projection of $\cG$ on $V_{obs}$} :=  $\cG^\prime = (V_{obs}, \cE_{d}^\prime, \cE_{bi}^\prime )$ with 
    \begin{align*}
        \cE_{d}^\prime := \big\lbrace &(v_1, v_2) \mid v_1, v_2 \in V_{obs} \text{ and there exists a non-trivial walk} \\
        & \pi = \lbrace (w_i, e_i)\rbrace_{i \in [n]} = v_1 \rightarrow \ldots \rightarrow v_2 \text{ with } w_i \in V_{lat} \text{ for all } i \in [n] \big\rbrace\:, \\
        \cE_{bi}^\prime := \lbrace &[(v_1, v_2)] \mid v_1, v_2 \in V_{obs}  \text{ and there exists a non-trivial walk} \\
        & \pi = \big\lbrace (w_i, e_i)\rbrace_{i \in [n]} = v_1 \rcirclearrow  \ldots \lcirclearrow v_2\text{ with } w_i \in V_{lat} \text{ for all } i \in [n] \text{ such that } \coll{\pi} = \emptyset \big\rbrace\:.
    \end{align*}
\end{definition}
\begin{definition}[Marginal independence model]
Assume $\cI \subseteq \cP(V)^3$ is an abstract independence model over V. The \defined{marginal independence model of $\cI$ over $V_{obs} \subseteq V$} is defined as 
\begin{equation*}
    \restr{\cI}{V_{obs}}:= \big\lbrace (A,B,C) \in \cI \: \mid \: A,B,C \subseteq V_{obs} \big\rbrace\:.
\end{equation*}    
\end{definition}
Analogous to other graphical frameworks (e.g., \citealp[Theorem 3.12]{mogensen2020markov}) we can show that a DMG obtained by marginalization of a larger directed graph has the same independence relations among the observed (`not-marginalized-out') nodes than the original graph with respect to general E-separation.
\begin{proposition}\label{prop:for_Markov_property_partially_observed}
Let $\cG = (V, \cE_d)$ be a directed graph, $V = V_{obs} \cupdot V_{lat}$, $A,B,C \subseteq V_{obs}$ (we assume without loss of generality that $A \cap C = \emptyset$) and $\cG^\prime = (V_{obs}, \cE_{d}^\prime, \cE_{bi}^\prime )$ the latent projection of $\cG$. Then 
\begin{equation*}
    (A,B,C) \in \cI_{E}^{\cG} \: \Leftrightarrow \: (A,B,C) \in \cI_{E}^{\cG^{\prime}}\:.
\end{equation*}
\end{proposition}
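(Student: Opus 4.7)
The plan is to reduce the claim, via the definition of E-separation, to an equivalence of $\sigma$-separation statements in the lifted dependency graphs $\tilde{\cG}$ and $\widetilde{\cG'}$. By definition, $(A,B,C)\in\cI_{E}^{\cG}$ means $A_0\indep_{\sigma}^{\tilde{\cG}} B_1\given C_0\cup(C_1\setminus B_1)$, and analogously for $\cG'$. Since all relevant nodes $A_0,B_1,C_0,C_1\setminus B_1$ lie in $V_{obs,0}\cup V_{obs,1}\subseteq \tilde{V}$, it will suffice to establish two facts: (i) $\widetilde{\cG'}$ coincides with the latent projection of $\tilde{\cG}$ onto $V_{obs,0}\cup V_{obs,1}$, and (ii) $\sigma$-separation is invariant under the latent projection operation on DMGs.

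For (i), I would carry out a direct combinatorial check of edge sets. The key observation is that the lift rule $(u,v)\mapsto\{(u_0,v_0),(u_0,v_1),(u_1,v_1)\}$ forces any walk in $\tilde{\cG}$ to have non-decreasing subscripts along each directed edge, so a directed walk $v_1\to w_1\to\cdots\to w_n\to v_2$ through latents in $\cG$ lifts to walks through $V_{lat,0}\cup V_{lat,1}$ in $\tilde{\cG}$ whose endpoint subscripts satisfy $j_{v_1}\leq j_{v_2}$. Contracting such walks under latent projection produces exactly the three directed edges $\{(v_1)_0\to(v_2)_0,\ (v_1)_0\to(v_2)_1,\ (v_1)_1\to(v_2)_1\}$, which is precisely the lift of the single edge $v_1\to v_2\in\cE_{d}'$. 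A parallel count for a fork walk $v_1\leftarrow w_1\leftarrow\cdots\leftarrow w_k\to\cdots\to w_n\to v_2$ through non-colliding latents yields all four bidirected edges $(v_1)_i\leftrightarrow(v_2)_j$ in the projection, matching the lift of $v_1\leftrightarrow v_2\in\cE_{bi}'$. No spurious edges arise because colliders at latent nodes in $\tilde{\cG}$ correspond to colliders at latent nodes in $\cG$, so such walks are excluded on both sides.

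For (ii), I would invoke the standard marginalization invariance of $\sigma$-separation (as established in \citet{forre2017markovpropertiesgraphicalmodels} and adapted to related separation criteria in \citet{mogensen2020markov}). Concretely, any $\sigma$-open walk in the ambient graph between observed endpoints can be contracted at each maximal latent segment into a single directed or bidirected edge of the projection, with the ancestor relation of colliders at observed nodes and the blockable/unblockable status at each retained non-collider preserved. Conversely, each edge of the projection expands back into a latent walk in the ambient graph, so openness transfers in the other direction as well.

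The hard part will be (ii): unlike $d$-separation, $\sigma$-openness depends on strong connectivity through the notion of blockable non-colliders, so one must argue that the SCC structure among observed nodes is faithfully transferred under latent projection. This amounts to checking that two observed nodes lie in the same SCC in $\tilde{\cG}$ if and only if they lie in the same SCC of its latent projection, because any directed cycle among observed nodes passing through latents collapses to a directed cycle in the projection, and conversely. Combining (i) and (ii) yields the equivalence of the two $\sigma$-separation statements and hence of the two E-separation triples, completing the proof.
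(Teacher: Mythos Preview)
Your proposal is correct and takes a genuinely different route from the paper.

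The paper argues directly at the level of walks in the two lifted graphs $\tilde{\cG}$ and $\widetilde{\cG'}$: given a $\sigma$-open walk in $\tilde{\cG}$ between $A_0$ and $B_1$, it first reroutes every collider through the conditioning set (so that each collider actually lies in $C_0\cup(C_1\setminus B_1)$), then collapses each maximal latent segment to a single edge of $\widetilde{\cG'}$ with the same endpoint marks; the converse direction simply expands each projected edge back into a latent subwalk. There is no intermediate object and no appeal to an external marginalization lemma.

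Your decomposition instead factors through the auxiliary claim that lifting commutes with latent projection, i.e., that $\widetilde{\cG'}$ equals the latent projection of $\tilde{\cG}$ onto $V_{obs,0}\cup V_{obs,1}$, and then invokes the known marginalization invariance of $\sigma$-separation. This is cleaner and more modular: the monotone-subscript argument for (i) is a pleasant observation that isolates exactly where the temporal structure of the lift enters, and (ii) lets you reuse a general result rather than redo the walk surgery by hand. It also forces you to make explicit the one point the paper's proof leaves implicit, namely that SCC membership among observed nodes is preserved under projection (so blockability of non-colliders is unchanged). The price you pay is a dependency on the cited $\sigma$-separation marginalization theorem, whose hypotheses you should check match the paper's definition of latent projection (in particular the treatment of self-loops arising from latent cycles); the paper's direct argument is self-contained in this respect.
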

The proof is in \cref{app:proof_latent_markov}. With this we can then extend the global Markov property from \cref{prop:global_MP_emilio_granger} to the partially observed setting.
\begin{proposition}[Global Markov property for Latent Models]\label{prop:global_MP_latent_models}
    Let $V \cong [d], d \in \bN$, $\lbrace X^{i}_t \rbrace_{i \in V}$ the coordinate processes of a solution of \cref{eq:SDEmodel} for $t \in [0,1]$, $\cG = (V , \cE_{d})$ the adjacency graph defined by \cref{eq:SDEmodel}, $V= V_{obs} \cupdot V_{lat}$ a partition of $V$ into observed and latent nodes and $\cG^{\prime} = (V_{obs}, \cE^{\prime}_d, \cE^{\prime}_{bi})$ the latent projection of $\cG$ on $V_{obs}$. Then for $A,B,C \subseteq V_{obs}$ it holds
    \begin{align}\label{eq:sigma_global_MP_hidden}
        A \indep_{E}^{\cG^\prime} B \given C \: \Rightarrow \: X^{A} \indepsh X^{B} \given X^{C}\:.
    \end{align}
\end{proposition}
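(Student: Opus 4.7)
The plan is to obtain this proposition essentially for free by chaining the two earlier results: \Cref{prop:for_Markov_property_partially_observed}, which says that latent projection preserves E-separation among the observed nodes, and \Cref{prop:global_MP_emilio_granger}, the global Markov property for DGs under E-separation.

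More concretely, suppose $A, B, C \subseteq V_{obs}$ and $A \indep_{E}^{\cG^\prime} B \given C$. Since $\cG^\prime$ is the latent projection of $\cG$ on $V_{obs}$ and $A, B, C$ all lie in $V_{obs}$, \Cref{prop:for_Markov_property_partially_observed} immediately yields
\[
(A, B, C) \in \cI_{E}^{\cG^\prime} \;\Leftrightarrow\; (A, B, C) \in \cI_{E}^{\cG},
\]
so in particular $A \indep_{E}^{\cG} B \given C$ in the underlying (fully observed) DG $\cG$ that governs the SDE \cref{eq:SDEmodel}. Applying \Cref{prop:global_MP_emilio_granger} to this full graph with the same triple $(A,B,C)$ then gives $X^{A} \indepsh X^{B} \given X^{C}$, which is exactly the desired conclusion.

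The only thing to verify carefully is that the hypotheses of the two invoked results are satisfied in the latent setting without modification. For \Cref{prop:for_Markov_property_partially_observed}, this is immediate once we note that $A, B, C \subseteq V_{obs}$ and $A \cap C = \emptyset$ (as remarked after \Cref{prop:indepsh_ternary} we may assume this without loss of generality). For \Cref{prop:global_MP_emilio_granger}, note that the statement of that proposition does not require $A, B, C$ to exhaust $V$, and the CI statement \eqref{eq:general_future_h} only involves the coordinate processes indexed by $A$, $B$, $C$, so nothing extra is needed to restrict to observed coordinates.

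I do not expect a genuine obstacle here: the content of the result has been absorbed entirely into \Cref{prop:for_Markov_property_partially_observed}, which is the nontrivial graphical lemma. The present proposition is the clean downstream consequence of combining that graphical transfer with the already established global Markov property, and should therefore be a one-line deduction.
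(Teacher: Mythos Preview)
Your proposal is correct and matches the paper's approach: the paper treats \cref{prop:global_MP_latent_models} as an immediate corollary, stating in the main text that ``with this [\cref{prop:for_Markov_property_partially_observed}] we can then extend the global Markov property from \cref{prop:global_MP_emilio_granger} to the partially observed setting.'' The detailed argument the paper places in \cref{app:proof_latent_markov} is in fact the proof of the graphical transfer lemma \cref{prop:for_Markov_property_partially_observed}, after which \cref{prop:global_MP_latent_models} follows by exactly the one-line chaining you describe.
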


\subsection{Properties of DMGs with respect to E-Separation}

\begin{wrapfigure}{r}{0.45\textwidth}
    \centering
    \vspace{-5mm}
    \includegraphics[width=0.40\textwidth]{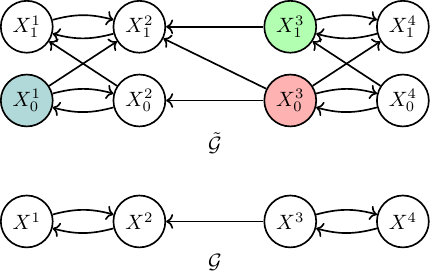}
    \vspace{-5mm}
    \caption{An inducing path connects $X^1$ and $X^3$ in $\cG$, and $X^3$ (red) E-separates $X^3$ (green) from $X^1$ (teal).}
    \label{fig:new_notion_inducing_path}
    \vspace{-9mm}
\end{wrapfigure}
In this section, we investigate the properties of Markov equivalence classes of general DMGs with respect to the graphical separation criterion $\indep_E$. We once more align our methodology with the approach outlined by \citet{mogensen2020markov} for $\mu$-separation.

The separability of pairs of nodes and their subsequent inclusion in the independence model is reflected by the presence of so-called \emph{inducing walks} or \emph{paths} in various graph classes and their graphical separation criteria (see, e.g., \citet{jiri_zhang_fci} for ancestral graphs, \citet[Chapter 11]{forre2023causality} for CDMGs 
and $\sigma$-separation, or \citet{mogensen2020markov} for DMGs and $\mu$-separation).
Since E-separation is an asymmetric version of $\sigma$-separation, we also have to define an altered, asymmetric version of inducing paths first.
\Cref{fig:new_notion_inducing_path} highlights that this definition has to incorporate that the last edge before entering a strongly connected component $\sccG{w}$ has to point into the strongly connected component. 

\begin{definition}[(Asymmetric) inducing walks] 
    Let $\cG = (V, \cE_d, \cE_{bi})$ be a DMG and $v,w \in V$. An \defined{(asymmetric) inducing path from $v$ to $w$} in $\cG$ is a non-trivial path $v=v_0 \overset{e_0}{\sim} v_1  \overset{e_1}{\sim} \ldots \overset{e_{n-1}}{\sim} v_n \overset{e_n}{\sim} v_{n+1} = w$ such that $\coll{\pi} \subseteq \anG{\lbrace v,w \rbrace }$, each $v_i \in \ncoll{\pi}\setminus \lbrace v_0 \rbrace$ is unblockable, and the last edge before entering $\sccG{w}$ has to be into the $\sccG{w}$. 

\end{definition}
We can now show the first part of the relation between separability and inducing paths. 
\begin{proposition}\label{prop:ind_path_inseparable_first_direction}
    Let $\cG = (V, \cE_d, \cE_{bi})$ be a DMG. If there exists an (asymmetric) inducing path $\nu: v=v^{0} \overset{e_0}{\sim} \ldots \overset{e_n}{\sim} w$ in $\cG$ from $v$ to $w$, then $(\lbrace v \rbrace,\lbrace w \rbrace,C) \notin \cI_{E}$ for all $C \subseteq V \setminus \lbrace v \rbrace$, meaning that $w$ is inseparable from $v$ in $\cG$. 
\end{proposition}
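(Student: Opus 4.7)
The plan is to show, for every $C \subseteq V \setminus \{v\}$, the existence of a $\sigma$-open walk from $v_0$ to $w_1$ in the lifted dependency graph $\tilde{\cG}$ given the conditioning set $\tilde{C} := C_0 \cup (C_1 \setminus \{w_1\})$; by the definition of E-separation, this yields $(\{v\}, \{w\}, C) \notin \cI_E^{\cG}$. Throughout, we use that $v_0, w_1 \notin \tilde{C}$, the first because $v \notin C$ (assumed $A \cap C = \emptyset$) and the second by construction.

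First, I would identify the ``last entry'' index $j$ at which $\nu$ crosses into $\sccG{w}$ from outside (setting $j := 0$ if $v \in \sccG{w}$); by the inducing path condition, $e_{j-1}$ carries an arrowhead at $v_j$. I would then lift $\nu$ to a candidate walk $\tilde{\nu}$ in $\tilde{\cG}$: level-$0$ copies of $v_0, \ldots, v_{j-1}$ joined by level-$0$ copies of $e_0, \ldots, e_{j-2}$, the cross lift of $e_{j-1}$ from $v_{j-1,0}$ to $v_{j,1}$ (which exists because $e_{j-1}$ has a head at $v_j$), and then level-$1$ copies of $v_j, \ldots, v_{n+1} = w$ joined by level-$1$ copies of $e_j, \ldots, e_n$, ending at $w_1$. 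Because each level of $\tilde{\cG}$ is an isomorphic copy of $\cG$ with the same SCC structure, the collider/non-collider pattern of $\tilde{\nu}$ mirrors that of $\nu$; let $\ell(k) \in \{0,1\}$ denote the level of $v_k$ in $\tilde{\nu}$. Unblockability of non-colliders is inherited directly away from the transition. At position $j$, if $v_j$ is a non-collider then the arrowhead of $e_{j-1}$ at $v_j$ forces a right-chain $v_{j-1} \lcirclearrow v_j \to v_{j+1}$, and unblockability follows from $v_{j+1} \in \sccG{w} = \sccG{v_j}$. At position $j-1$, the fact that $v_{j-1} \notin \sccG{w}$ while $v_j \in \sccG{w}$ forces any non-collider $v_{j-1}$ into a left-chain with $e_{j-1}$ bidirected and $v_{j-2} \in \sccG{v_{j-1}}$, which lifts correctly.

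The main obstacle is verifying the collider condition $\coll{\tilde{\nu}} \subseteq \an^{\tilde{\cG}}(\tilde{C})$. By the inducing path definition each collider $v_k$ satisfies $v_k \in \anG{\{v, w\}}$ in $\cG$, but $v_0, w_1 \notin \tilde{C}$, so ancestry to the endpoints alone is insufficient. I would split by cases on whether a witness directed path $\rho$ from $v_k$ to $\{v, w\}$ in $\cG$ passes through $C$. If it does at some interior node $u \neq v, w$, then $u_1 \in C_1 \setminus \{w_1\} \subseteq \tilde{C}$ (or $u_0 \in C_0 \subseteq \tilde{C}$), and $v_{k, \ell(k)}$ reaches $u_1$ or $u_0$ in $\tilde{\cG}$ via the lifted $\rho$ (using a cross edge where necessary). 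If it does not, I would modify $\tilde{\nu}$ by grafting in the lifted $\rho$ in place of the $\tilde{\nu}$-segment from $v_k$ onward: when $v_k \in \anG{w}$ and $k \geq j$, the path $\rho$ stays entirely within $\sccG{w}$ (since both endpoints are in $\sccG{w}$), so all resulting right-chains at level $1$ are unblockable; when $v_k \in \anG{w}$ and $k < j$, any blockable right-chain node introduced is an interior node of $\rho$ and thus outside $C$, hence outside $\tilde{C}$. The subcase $v_k \in \anG{v} \setminus \anG{w}$ is handled analogously by rerouting through a level-$0$ lift and leveraging the unblockable-non-collider chain surrounding $v_k$ on $\nu$ to propagate ancestry information. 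Iterating the modification over all ``problematic'' colliders (which can be done left-to-right since each modification truncates the tail of the walk) yields a single modified walk that is $\sigma$-open given $\tilde{C}$.

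Finally, applying \cref{lemma:lemma_walk_to_path} to the resulting $\sigma$-open walk extracts a $\sigma$-open path from $v_0$ to $w_1$ in $\tilde{\cG}$ given $\tilde{C}$, so by the definition of E-separation $(\{v\}, \{w\}, C) \notin \cI_E^{\cG}$. The hardest part is the bookkeeping in the collider case analysis: each walk modification must simultaneously preserve the unblockability of prior non-colliders, the admissibility of the cross-edge transition, and the non-blocking of any new interior nodes. The asymmetric inducing path definition, particularly the constraint that the last entry into $\sccG{w}$ be via an arrowhead, is precisely what makes these modifications compose without breaking $\sigma$-openness.
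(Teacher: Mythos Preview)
Your proposal is correct and follows essentially the same route as the paper: lift the inducing path to $\tilde{\cG}$, observe that the lifted walk has only unblockable non-colliders and colliders in $\an^{\tilde{\cG}}_{\{v_0,w_1\}\cup\tilde C}$, and then reroute each collider that is an ancestor of an endpoint but not of $\tilde C$ via a $C$-avoiding directed path to that endpoint. The paper packages the reroute step as a minimal-collider extremal argument over a set $S_C$ of admissible walks rather than an explicit iteration; your parenthetical ``each modification truncates the tail'' is inaccurate for the $\anG{v}$-reroute (which replaces the prefix), but termination is still guaranteed because every modification strictly decreases the number of colliders.
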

A direct consequence is the following `symmetry' relation of inducing paths to nodes within a strongly connected component. 
\begin{lemma}
    Let $\cG = (V, \cE_d, \cE_{bi})$ be a DMG. If there exists an (asymmetric) inducing path $\nu: v=v^{0} \overset{e_0}{\sim} \ldots \overset{e_n}{\sim} w$ in $\cG$ from $v$ to $w$ and $u \in \sccG{w}$ then there also exists an (asymmetric) inducing path from $v$ to $u$.
\end{lemma}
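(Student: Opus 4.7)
The plan is to glue the inducing path $\nu$ to a directed path inside $\sccG{w}$ reaching $u$, and then prune any repetitions to obtain a simple path, checking that the three defining conditions of an inducing path survive. If $u = w$ the statement holds with $\nu$ itself, so I assume $u \neq w$. Since $u \in \sccG{w}$, one can pick a simple directed path $\rho\colon w = w_0 \to w_1 \to \ldots \to w_m = u$ entirely inside $\sccG{w}$ (intermediate nodes of such a directed path automatically lie in $\sccG{w}$ because they are both descendants and ancestors of $w$). Form the walk $\mu := \nu \cdot \rho$ and iteratively remove cycles: whenever $\mu_i = \mu_j$ for $i < j$, collapse the segment $\mu_i \sim \ldots \sim \mu_j$ to the single node $\mu_i$. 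Because $\nu$ and $\rho$ are themselves paths, any repetition must pair a node of the $\nu$-part with a node of the $\rho$-part, so the repeated node automatically lies in $\sccG{w}$, and the invariant that the current walk's edges form a prefix of $\nu$ followed by a suffix of $\rho$ is preserved. The process terminates in a simple path
\[
\tilde{\nu}\colon v = v_0 \sim v_1 \sim \ldots \sim v_I = w_J \to w_{J+1} \to \ldots \to w_m = u
\]
for some $I \in \{0, \ldots, n+1\}$ and $J \in \{0, \ldots, m\}$; the case without any shortcut corresponds to $I = n+1$, $J = 0$.

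Next I would verify the inducing conditions for $\tilde{\nu}$. Internal nodes split into three groups. The interior $v_1, \ldots, v_{I-1}$ of the $\nu$-prefix keeps its neighboring edges, so its colliders stay colliders in $\nu$ and lie in $\anG{\{v,w\}} \subseteq \anG{\{v,u\}}$ (using $w \in \anG{u}$, a consequence of $u \in \sccG{w}$), while its non-colliders inherit unblockability from $\nu$. The interior $w_{J+1}, \ldots, w_{m-1}$ of the $\rho$-suffix sits in right-chains $w_{k-1} \to w_k \to w_{k+1}$ with $w_{k+1} \in \sccG{w_k}$, so they are unblockable non-colliders. At the join $v_I = w_J$, the outgoing edge is $v_I \to w_{J+1}$ with $w_{J+1} \in \sccG{v_I}$: if the incoming edge $v_{I-1} \sim v_I$ has an arrowhead at $v_I$, the join is an unblockable right-chain; if it has a tail at $v_I$ and $1 \leq I \leq n$, then $v_I$ is either a left-chain or fork in $\nu$ and $\nu$-unblockability forces $v_{I-1} \in \sccG{v_I}$, making the fork $v_{I-1} \leftarrow v_I \to w_{J+1}$ unblockable. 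The boundary case $I = n+1$ (so $v_I = w$) is handled analogously: either $v_n \notin \sccG{w}$, in which case $\nu$'s last-entry property forces an arrowhead at $w$, or $v_n \in \sccG{w}$, in which case $v_{I-1} \in \sccG{v_I}$ holds trivially.

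The main obstacle is the final requirement that the last edge of $\tilde{\nu}$ before entering $\sccG{u} = \sccG{w}$ carries an arrowhead into $\sccG{w}$, since shortcutting can move $\tilde{\nu}$'s last entry strictly earlier than the designated last entry of $\nu$, so $\nu$'s stated inducing property does not directly apply. The resolution is to prove the following strengthening of that property: for \emph{every} index $k \geq 1$ of $\nu$ with $v_{k-1} \notin \sccG{w}$ and $v_k \in \sccG{w}$, the edge $v_{k-1} \sim v_k$ already has an arrowhead at $v_k$. If $v_k$ is a collider in $\nu$, this is immediate; if $v_k$ is a non-collider, unblockability excludes both the left-chain and fork configurations, each of which would require $v_{k-1} \in \sccG{v_k} = \sccG{w}$, contradicting $v_{k-1} \notin \sccG{w}$; only the right-chain remains, which places an arrowhead at $v_k$. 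Since every transition from outside to inside $\sccG{w}$ in $\tilde{\nu}$ occurs within its $\nu$-prefix and is therefore a transition of $\nu$, this stronger statement transfers to $\tilde{\nu}$, completing the verification that $\tilde{\nu}$ is an asymmetric inducing path from $v$ to $u$.
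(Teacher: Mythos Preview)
Your construction is correct. The paper itself does not supply a proof of this lemma; it merely labels the statement ``a direct consequence'' of the preceding inseparability proposition (an asymmetric inducing path from $v$ to $w$ forces $w$ to be inseparable from $v$). Taken literally, that route would also need the converse direction (inseparability $\Rightarrow$ inducing path), which the paper never establishes, so your direct path-level argument is in fact more explicit and more self-contained than what the paper offers. The decisive observation---that \emph{every} transition of $\nu$ from outside $\sccG{w}$ into $\sccG{w}$, not only the designated last one, already carries an arrowhead---is exactly what lets the last-entry condition survive the shortcut, and it drops out of unblockability of the interior non-colliders as you note (for the endpoint index $k=n+1$ the same conclusion is supplied by $\nu$'s explicit last-entry hypothesis, which you invoke in the $I=n+1$ discussion). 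One small boundary case to tidy: if $v=u\ (\neq w)$ your pruning collapses $\tilde{\nu}$ to a trivial walk; there one simply takes a directed cycle through $v$ inside $\sccG{w}$, whose interior nodes are unblockable right-chains and whose last-entry condition is vacuous.
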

With a similar approach as in \cref{lemma:equivalent_edge_adding}, we can now show the following characterizations of the independence model, which states that two nodes from different strongly connected components, that are connected by a bidirected edge can not be separated. 
\begin{lemma}\label{lemma:trivial}
Let $\cG = (V, \cE_{d}, \cE_{bi})$ be a DMG and $v,w \in V$ nodes such that $w \notin \sccG{v}$. If there exists $v^\prime , w^\prime \in V$ such that $v^\prime \in \sccG{v}$, $w^\prime \in \sccG{w}$ and $[(v^\prime, w^\prime)] \in \cE_{bi}$ then
\begin{align}
    (\lbrace \hat{v} \rbrace,\lbrace \hat{w} \rbrace,C) \notin \cI_{E} \qquad \forall \hat{v}, \hat{w} \in \sccG{v} \cupdot \sccG{w}, C \subseteq V \setminus \lbrace \hat{v} \rbrace\:.
\end{align}
\end{lemma}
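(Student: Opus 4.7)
The plan is to construct, for every $(\hat{v}, \hat{w}) \in (\sccG{v} \cupdot \sccG{w})^2$, an asymmetric inducing path from $\hat{v}$ to $\hat{w}$ in $\cG$, and then invoke \cref{prop:ind_path_inseparable_first_direction} to conclude $(\{\hat{v}\}, \{\hat{w}\}, C) \notin \cI_{E}^{\cG}$ for every $C \subseteq V \setminus \{\hat{v}\}$. The bidirected edge $v' \leftrightarrow w'$ is the only bridge I need to cross between the two strongly connected components, so the construction hinges on using it exactly once at the right place.

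I would split the construction into three cases. \textbf{(i)} If $\hat{v} \in \sccG{v}$ and $\hat{w} \in \sccG{w}$, concatenate a directed path $\hat{v} \to \cdots \to v'$ lying entirely in $\sccG{v}$, the bidirected edge $v' \leftrightarrow w'$, and a directed path $w' \to \cdots \to \hat{w}$ lying entirely in $\sccG{w}$. \textbf{(ii)} If $\hat{v} \in \sccG{w}$ and $\hat{w} \in \sccG{v}$, use the mirror-image construction with the bidirected edge traversed in the opposite orientation. \textbf{(iii)} If $\hat{v}$ and $\hat{w}$ belong to the same SCC, a directed path within that SCC (or a directed cycle through $\hat{v}$ when $\hat{v} = \hat{w}$ and $|\sccG{\hat{v}}| \geq 2$) is itself the desired inducing path, since every intermediate non-collider is flanked by neighbors in the same SCC and the terminal edge lies inside $\sccG{\hat{w}}$.

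For case \textbf{(i)} the verification of the inducing-path conditions is routine: the unique collider is $v'$, which satisfies $v' \in \sccG{\hat{v}} \subseteq \anG{\hat{v}}$; each intermediate node in a directed segment is a chain non-collider whose relevant neighbor lies in its own SCC, hence unblockable; the node $w'$ is a right-chain non-collider whose successor lies in $\sccG{w}$, hence also unblockable; and the last edge entering $\sccG{\hat{w}} = \sccG{w}$, namely $v' \leftrightarrow w'$, carries an arrowhead at $w'$. Case \textbf{(ii)} follows by symmetry, and case \textbf{(iii)} is immediate once the inducing-path definition is checked against a path composed exclusively of interior edges of an SCC.

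The main obstacle is the corner case inside \textbf{(iii)} when $\hat{v} = \hat{w}$ and $|\sccG{\hat{v}}| = 1$: no directed cycle exists, and any attempt to route through $v' \leftrightarrow w'$ yields a walk whose unique collider is $w'$, which need not be an ancestor of $\hat{v}$. In this degenerate configuration I would leave the inducing-path framework aside and argue directly in the lifted graph $\tilde{\cG}$, exhibiting for each $C$ a $\sigma$-open walk from $\hat{v}_0$ to $\hat{w}_1$ that traverses one of the four bidirected copies of $v' \leftrightarrow w'$ (e.g.\ $v'_0 \leftrightarrow w'_1$), and then applying \cref{lemma:lemma_walk_to_path} to refine the walk to a path. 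Outside of this degenerate slot the lemma is a direct corollary of \cref{prop:ind_path_inseparable_first_direction}.
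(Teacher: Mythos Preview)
Your route through asymmetric inducing paths is sound for the cross-SCC cases (i) and (ii), and your case (iii) with $|\sccG{\hat v}|\ge 2$ is also fine. The degenerate slot you isolate, however, is not a gap in your argument but in the lemma as literally quantified: take $V=\{v,w\}$, $\cE_d=\emptyset$, $\cE_{bi}=\{[v,w]\}$. Then $\sccG{v}=\{v\}$, and for $\hat v=\hat w=v$, $C=\emptyset$ every walk from $v_0$ to $v_1$ in $\tilde\cG$ must pass through a collider at some $w_0$ or $w_1$, which is not in $\an^{\tilde\cG}_{\emptyset}=\emptyset$. Hence $(\{v\},\{v\},\emptyset)\in\cI_E^{\cG}$, and no lifted-graph fallback can rescue that case. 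The paper's surrounding prose (``two nodes from different strongly connected components'') and its one-line justification treat only the cross-SCC configuration, so the intended reading is almost certainly $\hat v\in\sccG{v}$, $\hat w\in\sccG{w}$ together with the symmetric variant; your extra case (iii) is then harmless but not required.

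For that cross-SCC configuration the paper takes a shorter route than yours. Instead of building an inducing path in $\cG$ and invoking \cref{prop:ind_path_inseparable_first_direction}, it works directly in $\tilde\cG$ and exhibits
\[
\hat v_0 \leftarrow \cdots \leftarrow v'_0 \leftrightarrow w'_1 \rightarrow \cdots \rightarrow \hat w_1,
\]
with the first directed segment oriented opposite to yours. That single flip turns $v'_0$ into a left-chain non-collider (unblockable, since its left neighbor lies in $\scc^{\tilde\cG}_{v'_0}$) rather than a collider, so the walk is collider-free and every interior non-collider is unblockable; $\sigma$-openness for every conditioning set is then immediate, with no appeal to the inducing-path machinery. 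Your argument is correct on its domain but trades this one-line trick for the heavier \cref{prop:ind_path_inseparable_first_direction}.
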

This follows from the existence of the path $\hat{v}_0 \leftarrow \ldots \leftarrow v^\prime_0 \leftrightarrow w^\prime_1 \rightarrow \ldots \rightarrow \hat{w}_{0}$ in $\tilde{\cG}$. Finally, in \cref{app:E_Sep_Ind_DMGs}, we prove that one can and arbitrary many bidirected edges between strongly connected components that are already connected by such a bidirected edge without altering the independence model. 
\begin{proposition}\label{prop:equivalent_bidirected_edge_adding}
    Let $\cG = (V, \cE_d, \cE_{bi})$ be a DMG, $i,j \in V$, $i\neq j$, $[(i,j)] \notin \cE_{bi}$ and denote $\cG^\prime = (V, \cE_d, \cE_{bi} \cup \lbrace [(i,j)] \rbrace )$. Then, if $i \notin \sccG{j}$ and there exist $i^\prime \in \sccG{i}, j^\prime \in \sccG{j}$ such that $[(i^\prime,j^\prime)] \in \cE_{bi}$, we have that $\cI_{E}^{\cG} = \cI_{E}^{\cG^\prime}$.
\end{proposition}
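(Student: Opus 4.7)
The inclusion $\cI_{E}^{\cG^\prime} \subseteq \cI_{E}^{\cG}$ is immediate from $\cG \subseteq \cG^\prime$: every $\sigma$-open walk in $\tilde{\cG}$ already lies in $\tilde{\cG^\prime}$, and collider status, ancestor relations, and the SCC structure of the lifted graph depend only on the directed edges, which coincide between $\tilde{\cG}$ and $\tilde{\cG^\prime}$. For the nontrivial direction $\cI_{E}^{\cG} \subseteq \cI_{E}^{\cG^\prime}$ I will proceed by contrapositive, in direct analogy with the argument for \cref{lemma:equivalent_edge_adding}: starting from a $\sigma$-open walk $\pi^\prime$ in $\tilde{\cG^\prime}$ witnessing $(A,B,C) \notin \cI_{E}^{\cG^\prime}$, I reroute each occurrence of a \emph{new} bidirected edge through the pre-existing connection $i^\prime \leftrightarrow j^\prime$, producing a $\sigma$-open walk $\pi$ that already lives in $\tilde{\cG}$.

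\textbf{Construction of the detour.} The new edge $[(i,j)]$ of $\cG^\prime$ lifts to the four bidirected edges $[(i_k, j_\ell)]$, $k,\ell \in \{0,1\}$, in $\tilde{\cG^\prime}$. Whenever $\pi^\prime$ traverses any of them, I replace that single edge by
\begin{equation*}
    i_k \leftarrow u_n \leftarrow \cdots \leftarrow u_1 \leftarrow i^\prime_k \leftrightarrow j^\prime_\ell \to w_1 \to \cdots \to w_m \to j_\ell\:,
\end{equation*}
where $u_1, \ldots, u_n$ are the level-$k$ copies of the internal vertices of a directed path from $i^\prime$ to $i$ inside $\sccG{i}$, and $w_1, \ldots, w_m$ are the level-$\ell$ copies of the internal vertices of a directed path from $j^\prime$ to $j$ inside $\sccG{j}$. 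Both sub-paths exist because the directed edges of $\tilde{\cG}$ restricted to $V_0$ (resp.\ $V_1$) are exactly those of $\cG$, so every SCC of $\cG$ is faithfully copied into each level; the middle bidirected edge $i^\prime_k \leftrightarrow j^\prime_\ell$ lies in $\tilde{\cG}$ because $[(i^\prime, j^\prime)] \in \cE_{bi}$. The degenerate cases $i=i^\prime$ or $j=j^\prime$ collapse the corresponding sub-path to the trivial walk, and iterating the replacement handles multiple uses of new bidirected edges on the same walk.

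\textbf{Verifying $\sigma$-openness.} Because no directed edges were added, ancestor sets and SCCs in $\tilde{\cG}$ agree with those in $\tilde{\cG^\prime}$, so I only have to check collider / non-collider conditions at the modified portion. At $i_k$ and $j_\ell$ the first and last edges of the detour ($i_k \leftarrow u_n$ and $w_m \to j_\ell$, or their degenerate variants) still carry arrowheads at those endpoints, so the classification of $i_k$ and $j_\ell$ in $\pi$ is inherited unchanged from $\pi^\prime$. Every intermediate detour node, as well as $i^\prime_k$ and $j^\prime_\ell$ themselves, sits in a left- or right-chain whose neighbour on the ``arrow''-side lies in the same SCC (the level-$k$ copy of $\sccG{i}$ or the level-$\ell$ copy of $\sccG{j}$) and is therefore an \emph{unblockable} non-collider. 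Hence $\coll{\pi} = \coll{\pi^\prime}$ and $\ncollb{\pi} = \ncollb{\pi^\prime}$, so both $\sigma$-openness conditions transfer from $\pi^\prime$ to $\pi$, yielding the desired witness that $(A,B,C) \notin \cI_{E}^{\cG}$. The main delicate step I anticipate is precisely this bookkeeping: verifying case-by-case that the replacement creates \emph{only} unblockable non-colliders and never flips the edge marks at $i_k$ or $j_\ell$, so that no new blockable non-collider is introduced into the conditioning set and no collider silently loses its ancestor into it.
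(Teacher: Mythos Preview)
Your proof is correct and follows essentially the same approach as the paper: argue by contrapositive, take a $\sigma$-open walk in $\tilde{\cG^\prime}$ that uses one of the new lifted bidirected edges $i_k \leftrightarrow j_\ell$, and replace each such edge by the endpoint-identical detour $i_k \leftarrow \cdots \leftarrow i^\prime_k \leftrightarrow j^\prime_\ell \to \cdots \to j_\ell$ through the pre-existing bidirected edge, noting that every inserted intermediate vertex is an unblockable non-collider since the chain stays within a level-$k$ (resp.\ level-$\ell$) copy of the SCC. Your write-up is in fact more explicit than the paper's about why the SCC and ancestor structure are unchanged (bidirected edges do not affect directed paths) and about the bookkeeping at the endpoints $i_k, j_\ell$.
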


\xhdr{A Conjecture about the greatest equivalent DMG.}
We conjecture that each E-separation Markov equivalence class of directed mixed graphs (DMGs) contains a greatest element, analogous to the findings for directed graphs (DGs) in \cref{thm:existence_max_el_DG} and to $\mu$-separation DMGs \cite[Theorem 5.9]{mogensen2020graphical}. This conjecture is supported by experiments conducted on DMGs up to $d=4$, which we describe in the following section.

\section{Experiments}

For initial insights into whether each E-separation Markov equivalence class of DMGs contains a greatest element, we computationally tested whether this is true exhaustively for all DMGs up to $d=4$ nodes.
After procedurally generating all possible DGs and DMGs with $d \in \{2, 3, 4\}$ nodes, we calculate all separation triples for each one. Among all graphs with the same separation triples (i.e., within each Markov equivalence class), we then exhaustively search for a greatest element. For $d=4$, we searched over 65536 DGs and 4194304 DMGs, respectively. Due to left- and right composability of $\cI_E$ (see \cref{prop:esep_ternary}), we only need to consider single nodes (instead of sets of nodes) in the first and second argument, which helped speeding up the computations. Our hypothesis holds true for up to 4 nodes and we hypothesize that it holds for all $d$, i.e., a greatest element exists not only for DGs, but also for all DMGs. Moreover, in \cref{app:causal_discovery_algorithm}, we provide empirical evidence that the Markov equivalence class of SDE models of the form \cref{eq:SDEmodel} can be estimated from data, serving as a proof-of-concept to validate our theoretical findings.

\section{Discussion and Conclusion}
In this work, we have extended the causal discovery framework of \citet{manten2024sigker} to cyclic and partially observed SDE-models. We introduce E-separation by partitioning the time interval into past and future segments to obtain an asymmetric separation criterion that is sensitive to the direction of time. E-separation thus provides an extension of $\sigma$-separation. 
Our framework accommodates not only cyclic dependencies---arguably ubiquitous in dynamical settings---but also partial observations. This facilitates causal discovery in realistic, continuous-time models. In the fully observed setting, we characterized the Markov equivalence class explicitly and showed the existence of a greatest element---a parsimonious representation of the Markov equivalence class that is discoverable from observational data (\cref{app:causal_discovery_algorithm}). Furthermore, based on computational verification for DMGs with up to four nodes, we conjecture that E-separation Markov equivalence classes of DMGs also contain a greatest element.

\acks{This work is supported by the DAAD programme Konrad Zuse Schools of Excellence in Artificial Intelligence, sponsored by the Federal Ministry of Education and Research. This work was supported by the Helmholtz Association’s Initiative and Networking Fund on the HAICORE@FZJ partition. We thank Emilio Ferrucci and Cristopher Salvi for useful discussions throughout the project. We also thank Emilio for being the namesake of E-separation.}

\bibliography{bibliography}

\begin{thebibliography}{35}
\providecommand{\natexlab}[1]{#1}
\providecommand{\url}[1]{\texttt{#1}}
\expandafter\ifx\csname urlstyle\endcsname\relax
  \providecommand{\doi}[1]{doi: #1}\else
  \providecommand{\doi}{doi: \begingroup \urlstyle{rm}\Url}\fi

\bibitem[Boeken and Mooij(2024)]{boekenmooij2024dynamicstructuralcausalmodels}
Philip Boeken and Joris~M. Mooij.
\newblock Dynamic structural causal models, 2024.
\newblock URL \url{https://arxiv.org/abs/2406.01161}.

\bibitem[Chickering(2002)]{chickering2002optimal}
David~Maxwell Chickering.
\newblock Optimal structure identification with greedy search.
\newblock \emph{Journal of machine learning research}, 3\penalty0
  (Nov):\penalty0 507--554, 2002.

\bibitem[Christgau et~al.(2023)Christgau, Petersen, and Hansen]{CPH23}
Alexander~Mangulad Christgau, Lasse Petersen, and Niels~Richard Hansen.
\newblock {Nonparametric conditional local independence testing}.
\newblock \emph{The Annals of Statistics}, 51\penalty0 (5):\penalty0 2116 --
  2144, 2023.
\newblock \doi{10.1214/23-AOS2323}.
\newblock URL \url{https://doi.org/10.1214/23-AOS2323}.

\bibitem[Didelez(2008)]{didelez2008graphical}
Vanessa Didelez.
\newblock Graphical models for marked point processes based on local
  independence.
\newblock \emph{Journal of the Royal Statistical Society: Series B (Statistical
  Methodology)}, 70\penalty0 (1):\penalty0 245--264, 2008.

\bibitem[Forr{\'{e}} and Mooij(2018)]{Forre_Constraint}
Patrick Forr{\'{e}} and Joris~M. Mooij.
\newblock Constraint-based causal discovery for non-linear structural causal
  models with cycles and latent confounders.
\newblock In Amir Globerson and Ricardo Silva, editors, \emph{Proceedings of
  the Thirty-Fourth Conference on Uncertainty in Artificial Intelligence, {UAI}
  2018, Monterey, California, USA, August 6-10, 2018}, pages 269--278. {AUAI}
  Press, 2018.
\newblock URL \url{http://auai.org/uai2018/proceedings/papers/117.pdf}.

\bibitem[Forré and Mooij(2017)]{forre2017markovpropertiesgraphicalmodels}
Patrick Forré and Joris~M. Mooij.
\newblock Markov properties for graphical models with cycles and latent
  variables, 2017.
\newblock URL \url{https://arxiv.org/abs/1710.08775}.

\bibitem[Forré and Mooij(2023)]{forre2023causality}
Patrick Forré and Joris~M. Mooij.
\newblock A mathematical introduction to causality: Lecture notes, September
  2023.
\newblock URL
  \url{https://staff.fnwi.uva.nl/j.m.mooij/articles/causality_lecture_notes_2023.pdf}.
\newblock Accessed: 2024-09-12.

\bibitem[Laumann et~al.(2023)Laumann, Von~K{\"u}gelgen, Park, Sch{\"o}lkopf,
  and Barahona]{laumann2023kernel}
Felix Laumann, Julius Von~K{\"u}gelgen, Junhyung Park, Bernhard Sch{\"o}lkopf,
  and Mauricio Barahona.
\newblock Kernel-based independence tests for causal structure learning on
  functional data.
\newblock \emph{Entropy}, 25\penalty0 (12):\penalty0 1597, 2023.

\bibitem[Lauritzen(2001)]{lauritzen2001causal}
Steffen~L Lauritzen.
\newblock Causal inference from graphical models.
\newblock \emph{Monographs on Statistics and Applied Probability}, 87:\penalty0
  63--108, 2001.

\bibitem[Lauritzen(2019)]{lectures_lauritzen}
{Steffen L.} Lauritzen.
\newblock \emph{Lectures on Graphical Models, 3rd edition}.
\newblock Department of Mathematical Sciences, Faculty of Science, University
  of Copenhagen, 2019.
\newblock ISBN 978-87-70787-53-6.

\bibitem[Lauritzen et~al.(1990)Lauritzen, Dawid, Larsen, and
  Leimer]{lauritzen1990independence}
Steffen~L Lauritzen, A~Philip Dawid, Birgitte~N Larsen, and H-G Leimer.
\newblock Independence properties of directed markov fields.
\newblock \emph{Networks}, 20\penalty0 (5):\penalty0 491--505, 1990.

\bibitem[Lundborg et~al.(2022)Lundborg, Shah, and
  Peters]{lundborg2022conditional}
Anton~Rask Lundborg, Rajen~D Shah, and Jonas Peters.
\newblock Conditional independence testing in hilbert spaces with applications
  to functional data analysis.
\newblock \emph{Journal of the Royal Statistical Society Series B: Statistical
  Methodology}, 84\penalty0 (5):\penalty0 1821--1850, 2022.

\bibitem[Manten et~al.(2024)Manten, Casolo, Ferrucci, Mogensen, Salvi, and
  Kilbertus]{manten2024sigker}
Georg Manten, Cecilia Casolo, Emilio Ferrucci, Søren~Wengel Mogensen,
  Cristopher Salvi, and Niki Kilbertus.
\newblock Signature kernel conditional independence tests in causal discovery
  for stochastic processes, 2024.
\newblock URL \url{https://arxiv.org/abs/2402.18477}.

\bibitem[Meek(2014)]{meek2014toward}
Christopher Meek.
\newblock Toward learning graphical and causal process models.
\newblock In \emph{CI@ UAI}, pages 43--48, 2014.

\bibitem[Mogensen(2025)]{mogensen2024weak}
S{\o}ren~Wengel Mogensen.
\newblock Weak equivalence of local independence graphs.
\newblock \emph{Bernoulli}, 2025.
\newblock (to appear).

\bibitem[Mogensen and Hansen(2020)]{mogensen2020markov}
S{\o}ren~Wengel Mogensen and Niels~Richard Hansen.
\newblock {Markov equivalence of marginalized local independence graphs}.
\newblock \emph{The Annals of Statistics}, 48\penalty0 (1):\penalty0 539 --
  559, 2020.

\bibitem[Mogensen and Hansen(2022)]{mogensen2020graphical}
S{\o}ren~Wengel Mogensen and Niels~Richard Hansen.
\newblock {Graphical modeling of stochastic processes driven by correlated
  noise}.
\newblock \emph{Bernoulli}, 28\penalty0 (4):\penalty0 3023 -- 3050, 2022.

\bibitem[Mogensen et~al.(2018)Mogensen, Malinsky, and
  Hansen]{mogensen2018causal}
S{\o}ren~Wengel Mogensen, Daniel Malinsky, and Niels~Richard Hansen.
\newblock Causal learning for partially observed stochastic dynamical systems.
\newblock In \emph{Conference on Uncertainty in Artificial Intelligence,},
  pages 350--360, 2018.

\bibitem[Mooij and Claassen(2023)]{mooij2023constraintbased}
Joris~M. Mooij and Tom Claassen.
\newblock Constraint-based causal discovery using partial ancestral graphs in
  the presence of cycles, 2023.

\bibitem[Pearl(1985)]{pearl_d_sep}
Judea Pearl.
\newblock A constraint - propagation approach to probabilistic reasoning.
\newblock In \emph{Proceedings of the First Conference on Uncertainty in
  Artificial Intelligence}, UAI'85, page 31–42, Arlington, Virginia, USA,
  1985. AUAI Press.
\newblock ISBN 0444700587.

\bibitem[Pearl(1995)]{pearl1995causal}
Judea Pearl.
\newblock Causal diagrams for empirical research.
\newblock \emph{Biometrika}, 82\penalty0 (4):\penalty0 669--688, 1995.

\bibitem[Pearl(2009)]{pearl2009causality}
Judea Pearl.
\newblock \emph{Causality}.
\newblock Cambridge university press, 2009.

\bibitem[Peters et~al.(2017)Peters, Janzing, and
  Sch{\"o}lkopf]{peters2017elements}
Jonas Peters, Dominik Janzing, and Bernhard Sch{\"o}lkopf.
\newblock \emph{Elements of causal inference: foundations and learning
  algorithms}.
\newblock The MIT Press, 2017.

\bibitem[Rogers and Williams(2000)]{RW00}
L.~C.~G. Rogers and David Williams.
\newblock \emph{Diffusions, {M}arkov processes, and martingales. {V}ol. 2}.
\newblock Cambridge Mathematical Library. Cambridge University Press,
  Cambridge, 2000.
\newblock ISBN 0-521-77593-0.
\newblock \doi{10.1017/CBO9781107590120}.
\newblock URL \url{https://doi.org/10.1017/CBO9781107590120}.
\newblock It\^o calculus, Reprint of the second (1994) edition.

\bibitem[Runge(2018)]{runge2018causal}
Jakob Runge.
\newblock Causal network reconstruction from time series: From theoretical
  assumptions to practical estimation.
\newblock \emph{Chaos: An Interdisciplinary Journal of Nonlinear Science},
  28\penalty0 (7):\penalty0 075310, 2018.

\bibitem[Runge(2020)]{runge2020discovering}
Jakob Runge.
\newblock Discovering contemporaneous and lagged causal relations in
  autocorrelated nonlinear time series datasets.
\newblock In \emph{Conference on Uncertainty in Artificial Intelligence}, pages
  1388--1397. PMLR, 2020.

\bibitem[Runge et~al.(2019)Runge, Nowack, Kretschmer, Flaxman, and
  Sejdinovic]{runge2019detecting}
Jakob Runge, Peer Nowack, Marlene Kretschmer, Seth Flaxman, and Dino
  Sejdinovic.
\newblock Detecting and quantifying causal associations in large nonlinear time
  series datasets.
\newblock \emph{Science advances}, 5\penalty0 (11):\penalty0 eaau4996, 2019.

\bibitem[Runge et~al.(2023)Runge, Gerhardus, Varando, Eyring, and
  Camps-Valls]{runge2023causal}
Jakob Runge, Andreas Gerhardus, Gherardo Varando, Veronika Eyring, and Gustau
  Camps-Valls.
\newblock Causal inference for time series.
\newblock \emph{Nature Reviews Earth \& Environment}, 4\penalty0 (7):\penalty0
  487--505, 2023.

\bibitem[Schweder(1970)]{schweder1970composable}
Tore Schweder.
\newblock Composable markov processes.
\newblock \emph{Journal of applied probability}, 7\penalty0 (2):\penalty0
  400--410, 1970.

\bibitem[Shah and Peters(2020)]{shah2020hardness}
Rajen~D Shah and Jonas Peters.
\newblock The hardness of conditional independence testing and the generalised
  covariance measure.
\newblock \emph{The Annals of Statistics}, 48\penalty0 (3):\penalty0
  1514--1538, 2020.

\bibitem[Spirtes et~al.(1995)Spirtes, Meek, and Richardson]{spirtes1995causal}
Peter Spirtes, Christopher Meek, and Thomas Richardson.
\newblock Causal inference in the presence of latent variables and selection
  bias.
\newblock In \emph{Proceedings of the Eleventh conference on Uncertainty in
  artificial intelligence}, pages 499--506, 1995.

\bibitem[Spirtes et~al.(2000)Spirtes, Glymour, and
  Scheines]{spirtes2000causation}
Peter Spirtes, Clark~N Glymour, and Richard Scheines.
\newblock \emph{Causation, prediction, and search}.
\newblock MIT press, 2000.

\bibitem[Vowels et~al.(2022)Vowels, Camgoz, and Bowden]{vowels2022d}
Matthew~J Vowels, Necati~Cihan Camgoz, and Richard Bowden.
\newblock D’ya like dags? a survey on structure learning and causal
  discovery.
\newblock \emph{ACM Computing Surveys}, 55\penalty0 (4):\penalty0 1--36, 2022.

\bibitem[Zhang(2008)]{jiri_zhang_fci}
Jiji Zhang.
\newblock On the completeness of orientation rules for causal discovery in the
  presence of latent confounders and selection bias.
\newblock \emph{Artificial Intelligence}, 172\penalty0 (16):\penalty0
  1873--1896, 2008.
\newblock ISSN 0004-3702.
\newblock \doi{https://doi.org/10.1016/j.artint.2008.08.001}.
\newblock URL
  \url{https://www.sciencedirect.com/science/article/pii/S0004370208001008}.

\bibitem[Zheng et~al.(2018)Zheng, Aragam, Ravikumar, and Xing]{zheng2018dags}
Xun Zheng, Bryon Aragam, Pradeep~K Ravikumar, and Eric~P Xing.
\newblock Dags with no tears: Continuous optimization for structure learning.
\newblock \emph{Advances in neural information processing systems}, 31, 2018.

\end{thebibliography}
\clearpage

\appendix

\section{Proofs}\label{app:proofs}

\subsection{Proofs of Asymmetric Graphoid Properties \cref{prop:esep_ternary} and \cref{prop:indepsh_ternary}}\label{app:ternary_proofs}

\begin{proof}[Proof of \cref{prop:esep_ternary}]
\begin{enumerate}
    \setlength{\itemsep}{-2pt}
    \item[(LR):] Let $A,B \subseteq V$ and $\pi = a_0 \sim \ldots \sim b^{1}$ a walk in $\tilde{\cG}$, $a \in A$, $b \in B$. As conditioning on an endpoint $\sigma$-blocks the walk, clearly $\pi$ is $\sigma$-$\left( A_0 \cup (A_1 \setminus B_1)\right)$-blocked.  
    \item[(LD):] Assume $(A,B,C) =  (A^\prime \cupdot D,B,C) \in \cI_E^\cG$ with $A^\prime := A \setminus D$, $d \in D$, $b \in B$ and let $\pi = d_0 \sim \ldots \sim b^1$ be a walk in $\tilde{\cG}$. Then $\pi$ is also a walk from $A_{0}$ to $B_1$. By assumption it is $\sigma$-$\left( C_0 \cup (C_1 \setminus B_1) \right)$-blocked. Thus $(D,B,C) \in \cI_E^\cG$. 
    \item[(LC):] We have to show that $ (A,B,C)  \in \cI_{E}^{\cG} \: \land \: (D,B,A \cup C) \in \cI_{E}^{\cG}$ implies $(A \cup D,B,C) \in \cI_{E}^{\cG}$.\\
    Assume instead $(A \cup D,B,C) \notin \cI_{E}^{\cG}$; let $\pi = a_0 \sim \ldots \sim b_{1}$, $b \in B$, $a \in A \cup D$ 
    $\sigma$-$(C_0 \cup (C_1 \setminus B_1))$-open shortest path. This means the colliders satisfy $\coll{\pi} \subseteq \an_{(C_0 \cup (C_1 \setminus B_1))}^{\tilde{\cG}}$ and for the blockable noncolliders it holds $\ncollb{\pi} \cap (C_0 \cup (C_1 \setminus B_1)) = \emptyset$. Moreover by assumption $a_0 \in A_0 \cup D_0$, $b_1 \in B_1$ the only elements of $A_0 \cup D_0$ and $B_1$ on $\pi$, respectively.
    
    \defineddeutsch{Case $a_0 \in D_0$, $a_0 \in A_0$:} Then we have a path $\pi: a_0 \sim \ldots \sim b_1$ from $D_0$ to $B_1$ such that $\coll{\pi} \subseteq \an_{C_0 \cup (C_1 \setminus B_1)}^{\tilde{\cG}} \subseteq \an_{A_0 \cup C_0 \cup (A_1 \cup C_1 \setminus B_1)}^{\tilde{\cG}}$ (meaning all colliders are still open) and we therefore only have to care about the blockable noncolliders. Note that $\ncollb{\pi} \cap (C_0 \cup (C_1 \setminus B_1)) = \emptyset$. Furthermore we can assume $\ncollb{\pi} \cap (A_0 \cup A_1 \setminus B:1) = \emptyset$. This is because by assumption $\pi \cap A_0 = \emptyset$ and if there exists an $\tilde{a}_1 \in A_1$ which is a blockable non-collider on $\pi$, meaning there exists a $k \in \bN$ such that $\overset{e_k}{\rcirclearrow} \tilde{a}_1 \overset{e_{k+1}}{\lcirclearrow}$ (with possibly other orientations). If $e_{k+1} = \lcirclearrow$ then we can immediately construct an open path from $A_0$ to $B_1$ and if $e_k$ and $e_{k+1}$ are of the form $\leftarrow$, there has to exists an $r<k$ and a collider at $v^{k}$ which is by definition open as well as all non-colliders on the subpath $v^{r}_1 \leftarrow v^{r+1}_{1} \leftarrow \ldots \leftarrow \tilde{a}_1$. But then we can simply construct the open path $a_{0} \rightarrow \ldots \rightarrow v^{r+1}_0 \rightarrow v^{r}_1 \leftarrow v^{r+1}_1 \leftarrow \ldots \leftarrow \tilde{a}_1 \sim \ldots \sim b_1$ contradicting $(A,B,C) \in \cI_{E}^{\cG}$. Thus we can assume also $\ncollb{\pi} \cap (A_0 \cup A_1 \setminus B:1) = \emptyset$.
    
    But since now all blockable noncolliders are also open, making $\pi$ $\sigma$-$(A_0 \cup C_0 \cup ((A_1 \cup C_1) \setminus B_1)$-open, we obtain a contradiction to $(D,B,A \cup C) \in \cI_{E}^{\cG}$.
    
    \defineddeutsch{Case $a_0 \in A_0$:} Then we have a path $\pi: a_0 \sim \ldots \sim b_1$ from $D_0$ to $B_1$ such that $\coll{\pi} \subseteq \an_{C_0 \cup (C_1 \setminus B_1)}^{\tilde{\cG}}$ and also $\ncollb{\pi} \cap (C_0 \cup (C_1 \setminus B_1)) = \emptyset$. We therefore obtain a contradiction to $(A,B,C) \in \cI_{E}^{\cG}$.
    \item[(LCo):] This is obvious as $E$-separation is defined in terms of $\sigma$-separation, which is defined node-wise.
    \item[(RCo):] Holds with the same justification as (LCo).
\end{enumerate}
\end{proof}
\noindent To prove \cref{prop:indepsh_ternary}, we first recall the following two concepts, taken from \citet{lectures_lauritzen}. 
\begin{definition}[Conditional independence]
Let $(\Omega, \cF, P)$ be a probability space, $\cA, \cB \subseteq \cF$ and $\cH$ a sub-$\sigma$-algebra of $\cF$. The two sets $\cA$ and $\cB$ are \defined{conditional independent} given $\sigma$-algebra $\cH$, \symbdef{$\cA \indep \cB \given \cH$} $:\Leftrightarrow$
\begin{align*}
    \cA \indep \cB \given \cH \: :\Leftrightarrow& \: A \indep B \given \cH \quad \forall A \in \cA, B \in \cB \\
    \Leftrightarrow&\:  P(A \cap B \given \cH) \overset{\text{a.s.}}{=}  P(A \given \cH)  P( B \given \cH)
\end{align*}
    where $P(A \given \cH ) = \bE [\chi_{A} \given \cH]$.
\end{definition}
An equivalent definition of the above conditional independence is given by the following theorem (\citet[Theorem 2.10]{lectures_lauritzen}): 
\begin{theorem}
    Let $(\Omega, \cF, P)$ be a probability space, $\cA, \cB \subseteq \cF$ and $\cH$ a sub-$\sigma$-algebra of $\cF$. Then: 
    \begin{align*}
        \cA \indep \cB \given \cH \: \Leftrightarrow \: P(B \given \cA \lor \cH) = P(B \given  \cH) \quad B \in \cB
    \end{align*}
\end{theorem}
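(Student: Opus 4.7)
The plan is to prove the equivalence by a standard measure-theoretic argument, using the definition of conditional probability via conditional expectation and a $\pi$-system / monotone class reduction.

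For the $(\Rightarrow)$ direction, I assume $\cA \indep \cB \given \cH$, i.e., $P(A \cap B \given \cH) \overset{\text{a.s.}}{=} P(A \given \cH)\, P(B \given \cH)$ for all $A\in\cA$, $B\in\cB$, and want to show that $P(B \given \cA \lor \cH) \overset{\text{a.s.}}{=} P(B \given \cH)$. The candidate version of $P(B \given \cA \lor \cH)$ will be $P(B \given \cH)$, which is $\cH$-measurable and hence $(\cA \lor \cH)$-measurable. By the defining property of conditional expectation, it suffices to verify $\int_{C} P(B \given \cH)\,dP = P(B \cap C)$ for every $C$ in a $\pi$-system generating $\cA \lor \cH$. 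The natural such $\pi$-system consists of intersections $C = A \cap H$ with $A\in\cA$, $H\in\cH$. For such $C$, I compute $\int_{A\cap H} P(B\given \cH)\,dP = \int_H \chi_A P(B\given\cH)\,dP$, pull $P(B\given\cH)$ out of the conditional expectation given $\cH$ (since it is $\cH$-measurable), and obtain $\int_H P(A\given\cH) P(B\given\cH)\,dP$. By the assumed CI this equals $\int_H P(A\cap B\given\cH)\,dP = P(A\cap B\cap H)$. A standard monotone class / Dynkin argument then extends the identity from this $\pi$-system to all of $\cA \lor \cH$, yielding $P(B\given\cA\lor\cH) \overset{\text{a.s.}}{=} P(B\given\cH)$.

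For the $(\Leftarrow)$ direction, I assume $P(B \given \cA \lor \cH) \overset{\text{a.s.}}{=} P(B \given \cH)$ for all $B \in \cB$, and want the product form of $P(A \cap B \given \cH)$. Fix $A\in\cA$, $B\in\cB$, $H\in\cH$. On the one hand, $P(A\cap B \cap H) = \int_{A\cap H} \chi_B \,dP = \int_{A\cap H} P(B\given \cA\lor\cH)\,dP = \int_{A\cap H} P(B\given\cH)\,dP = \int_H \chi_A P(B\given\cH)\,dP$, and then by $\cH$-measurability of $P(B\given\cH)$ this equals $\int_H P(A\given\cH)P(B\given\cH)\,dP$. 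On the other hand, $P(A\cap B \cap H) = \int_H P(A\cap B \given \cH)\,dP$. Since the two $\cH$-measurable random variables $P(A\cap B\given\cH)$ and $P(A\given\cH)P(B\given\cH)$ have equal integrals over every $H\in\cH$, they agree almost surely, which is exactly the CI condition.

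The only real technical step is the monotone class argument in the forward direction, extending equality from rectangles $A \cap H$ to all of $\cA \lor \cH$; everything else is bookkeeping with the pull-out property of conditional expectations and the a.s.\ uniqueness of Radon--Nikodym derivatives. I would carry out the two directions in this order because the pull-out calculation is essentially the same in both and, once established on rectangles, the forward direction is the only one requiring the generating-$\pi$-system extension.
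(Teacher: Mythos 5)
The paper contains no proof of this statement: it is quoted as a known result from \citet[Theorem 2.10]{lectures_lauritzen}, so there is no internal argument to compare against. Your proof is the standard one for that cited theorem and it is correct: the backward direction by integrating $\chi_B$ over $A \cap H \in \cA \lor \cH$ and using the pull-out property of $\cH$-measurable factors, the forward direction by verifying the defining property of $P(B \given \cA \lor \cH)$ on the rectangles $A \cap H$ and extending via Dynkin's $\pi$-$\lambda$ theorem (both sides are finite measures agreeing on $\Omega$, so the class where they agree is a $\lambda$-system). One caveat you should make explicit: for $\{A \cap H : A \in \cA,\, H \in \cH\}$ to be a $\pi$-system containing $\Omega$, you need $\cA$ closed under finite intersections with $\Omega \in \cA$ --- e.g.\ a sub-$\sigma$-algebra, which is the intended reading given the join notation $\cA \lor \cH$, but not what the paper's loose hypothesis ``$\cA, \cB \subseteq \cF$'' literally guarantees. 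For arbitrary families the forward implication is genuinely false (two independent fair coin tosses, $\cA$ the two individual ``heads'' events, $B$ their symmetric difference: $B$ is independent of each $A_i$ but $P(B \given A_1 \cap A_2) = 0 \neq P(B)$), so this hypothesis is doing real work in your $\pi$-system step.
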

\begin{proof}[Proof of \cref{prop:indepsh_ternary}]
    The stated properties are all direct consequences of conditional independence (of $\sigma$-algebras). We use the notation: 
    \begin{align*}
        \cF_{s}(X^{A}) &:= \sigma \left( \lbrace X_{t^\prime}^{a} \: : \: t^\prime \leq s , a \in A \rbrace \right) \\
        \cF_{s,h}(X^{A}) &:= \sigma \left( \lbrace X_{t^\prime}^{a} \: : \: s \leq t^\prime \leq s+h , a \in A \rbrace \right)
    \end{align*}
    and $P(A \given \cH) := \bE [\chi_{A} \given \cH]$ where $\cH$ a sub-$\sigma$-algebra. 
    \begin{enumerate}
        \item[(LR):] We have to show that $ (A,B,A) \in \cI_{s,h}$. \\
        This holds as 
        \begin{align*}
            P (\tilde{B} \mid \underbrace{\cF_s^{A} \lor \cF_{s+h}^{A}}_{=\cF_{s+h}^{A}}) = P (\tilde{B} \mid \cF_{s+h}^{A}) \qquad \forall \tilde{B} \in \cF_{s,h}^{B}
        \end{align*}
        \item[(LD):] We have to show that $(A,B,C) \in \cI_{s,h}$, $D \subseteq A$ implies $(D,B,C) \in \cI_{s,h}$ \\
        By assumption 
        \begin{align*}
            P(\tilde{A} \cap \tilde{B} \mid \cF_{s+h}^{C}) \overset{a.s.}{=} P(\tilde{A} \mid \cF_{s+h}^{C}) P(\tilde{B} \mid \cF_{s+h}^{C}) \quad \tilde{A} \in \cF_{s}^{A}, \tilde{B} \in \cF_{s}^{B}
        \end{align*}
        and since $\cF_{s}^{A} \subseteq \cF_{s}^{D}$ the statement follows.
    \end{enumerate}
\end{proof}

\subsection{Basic Properties of the Lifted Dependency Graph}\label{app:proofs_on_graphs}

\begin{proof}[Proof of \cref{lemma:lemma_walk_to_path}]
    Let $\pi : a_0 \overset{e_0}{\sim} v^1_{i_1} \overset{e_0}{\sim} \ldots \overset{e_n}{\sim} v^{n+1}_{i_{n+1}}= b_1$ be $C_0 \sqcup (C_1 \setminus \lbrace b_1 \rbrace)$-$\sigma$-open walk. and  assume that a node appears multiple, thus we take the smallest $k \in [n]$ such that 
    \begin{align*}
        \ell_k = \max \lbrace \lbrace \ell \in \dblbrackets{k+1, n+1} : v^{k}_{i_k} = v^{\ell}_{i_\ell} \rbrace > k
    \end{align*}
    If $v^k_{i_k} = b_1$ we can just use the subwalk $\tilde{\pi} : a_0 \overset{e_0}{v^{1}_{i_1}} \overset{e_1}{\sim} \ldots \overset{e_{k-1}}{\sim} v^{k}_{i_k} = b_1$. We can therefore assume $k < \ell_k < n+1$ and we consider the walk 
    \begin{align*}
        \tilde{\pi} : a_0 \overset{e_0}{\sim} v^{1}_{i_1} \overset{e_1}{\sim} \ldots \overset{e_{k-1}}{\sim} v^{k}_{i_k} \overset{e_{\ell + 1}}{\sim} v^{\ell_k +1}_{i_{\ell_l + 1}} \sim \ldots \sim b_1
    \end{align*}
    and we have to show now that it is still $C_0 \sqcup C_1 \setminus \lbrace b_1 \rbrace$-$\sigma$-open.
    
    \defineddeutsch{Case $\overset{e_{k-1}}{\lcirclearrow} v^{k}_{i_k} \overset{e_{\ell_k}}{\rcirclearrow}$:} if $\overset{e_k}{\rcirclearrow}$ or $\overset{e_{\ell_k}}{\lcirclearrow}$ then we immediately have that $v^{k}_{i_k} \in \anG{C_0 \sqcup C_1 \setminus \lbrace b_1 \rbrace}$ hence the collider and thus $\tilde{\pi}$ are open.
    
    If we instead have $\overset{e_k}{\rightarrow}$ or $\overset{e_{\ell -1}}{\leftarrow}$ then there exists 
    \begin{align*}
        \hat{k} = \max \lbrace \hat{k}^\prime \geq k : \overset{e_{\hat{k}^\prime- 1}}{\rightarrow} v^{\hat{k}^\prime}_{i^{\hat{k}^\prime}} \rcirclearrow \rbrace \leq \ell_k -1
    \end{align*}
    and again we have $\an_{C_0 \sqcup C_1 \setminus \lbrace b_1 \rbrace }^{\tilde{\cG}}$ as the collider $v^{\hat{k}}_{i_{\hat{k}}}$ needs to be open.
    
    \defineddeutsch{Case $v^{k}_{i_{k}} \in \ncollb{\tilde{\pi}}^{\tilde{\cG}}$: ($\overset{e_{k-1}}{\leftarrow} v^{k}_{i_k} \overset{e_{\ell_k}}{\lrcirclearrow}$ and $v^{k-1}_{i_{k-1}} \notin \scc_{v^{k}_{i_k}}^{\tilde{\cG}}$) or ($\overset{e_{k-1}}{\lrcirclearrow} v^{k}_{i_{k}} \overset{e_{\ell_{k}}}{\rightarrow}$ and $v^{\ell_{k}}_{i_{\ell_{k}}} \notin \scc_{v^{k}_{i_k}}^{\tilde{\cG}}$):} We consider the first case, the second follows analog. Then $v^{k}_{i_{k}}$ is already a blockable non-collider on $\pi$ with $v^{k-1}_{i_{k-1}} \notin \scc_{v^{k}_{i_k}}^{\tilde{\cG}}$ thus $v^{k}_{i_{k}} \notin C_0 \sqcup C_1 \setminus \{ b_1 \}$.
\end{proof}

\subsection{Proof of the global Markov Property \texorpdfstring{\cref{prop:global_MP_emilio_granger}}{Proposition global Markov Property}}\label{app:proofmarkov}

Before giving the proof, we first need to define the concept of \textit{acyclification} (for an overview, see e.g., \citet{forre2023causality}). For our purposes, we use the following definition from \citet{forre2017markovpropertiesgraphicalmodels}: 
\begin{definition}[\defined{Acyclification}]
    Let $\cG = (V, \cE_{d})$ be a directed graph. A DAG $\cG^\prime = (V, \cE_{d}^\prime)$ is called an \defined{acyclification of $\cG$}$:\Leftrightarrow$ 
    \begin{align}
        v \to w \in \cE_{d}^\prime \: \Leftrightarrow \: v \notin \sccG{w} \: \land \: \exists w^\prime \in \sccG{w}: \: v \rightarrow w^\prime \in \cE_d
    \end{align}
    This means that we include a directed edge from $v$ to every node $w$ in a different strongly connected component, $\sccG{w}$, if $v$ has an outgoing edge into $\sccG{w}$, and we remove all edges within each strongly connected component. 
\end{definition}

\begin{remark} 
    Note that there are other definitions of acyclifications (e.g., in \citet{forre2023causality}) which also fully connect nodes within each strongly connected component in such a way that one still obtains a DAG. Such a DAG is in general not unique. In this paper, we use the minimal version above and denote the acyclification of a DG $\cG = (V, \cE_{d})$ by \symbdef{$\cG^{acy} = (V, \cE_{d}^{acy})$}. 
\end{remark}
The reason for using acyclifications is the following relation between $\sigma$- and d-separation  \cite[Theorem 2.8.2]{forre2017markovpropertiesgraphicalmodels} (note that we only state the result for directed graphs):
\begin{theorem}\label{thm:relation_sigma_d}
    Let $\cG = (V, \cE_d)$ be a DG, and let $\cG^\prime = (V, \cE_{d}^\prime)$ be a DG with $\cE_{d}^\prime \subseteq \cE_d^{acy} \cup \cE_d^{scc}$ where $\cE_d^{scc}:= \lbrace v \to w \: \mid \: v \in V, w \in \sccG{v} \rbrace$. Then for $A,B,C \subseteq V$: 
    \begin{align*}
        A \indep_{\sigma}^\cG B \given C \: \Rightarrow  \: A \indep_{d}^{\cG^\prime} B \given C 
    \end{align*}
\end{theorem}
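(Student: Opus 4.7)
The plan is to argue by contrapositive: assume there is a $d$-$C$-open walk $\pi'$ in $\cG'$ from some $a \in A$ to some $b \in B$, and construct a $\sigma$-$C$-open walk $\pi$ in $\cG$ between the same endpoints. The construction proceeds edge by edge. Each directed edge of $\pi'$ lies in $\cE_d^{acy} \cup \cE_d^{scc}$, and I would lift it to a directed segment in $\cG$. An edge $v \to w \in \cE_d^{scc}$ satisfies $w \in \sccG{v}$, so $\cG$ already contains a directed walk $v \to \ldots \to w$ entirely within $\sccG{v}$. An edge $v \to w \in \cE_d^{acy}$ supplies some $w' \in \sccG{w}$ with $v \to w' \in \cE_d$, which I concatenate with a directed walk $w' \to \ldots \to w$ inside $\sccG{w}$. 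Replacing each edge of $\pi'$ with the corresponding directed segment then yields a walk $\pi$ in $\cG$ sharing the endpoints of $\pi'$.

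Next I would verify that $\pi$ is $\sigma$-$C$-open by sorting its nodes into two groups. First, nodes of $\pi$ that originate from $\pi'$ inherit exactly the same local orientation, because each edge of $\pi'$ has been replaced by a directed segment, so their (non-)collider status in $\pi$ matches that in $\pi'$. Consequently, non-colliders of this kind remain outside $C$ by assumption on $\pi'$, while colliders of this kind lie in $\an_C^{\cG}$ since $\an_C^{\cG'} \subseteq \an_C^{\cG}$ (every directed edge of $\cG'$ lifts to a directed walk in $\cG$). Second, the newly inserted interior nodes of each lifted segment are non-colliders of $\pi$ whose two neighbours on $\pi$ lie in a common SCC with them; by the paper's definition they are therefore unblockable non-colliders and cannot block $\pi$ regardless of membership in $C$. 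Combining these observations shows $\pi$ is $\sigma$-$C$-open, contradicting $A \indep_{\sigma}^{\cG} B \given C$.

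The hard part will be the junction analysis at a shared endpoint of two consecutive lifted segments, where the original node's classification must be reconciled with the new local structure. If that node is a collider in $\pi'$, both incident lifted segments terminate with arrowheads pointing into it, so it remains a collider of $\pi$. If it is a non-collider in $\pi'$, the outgoing orientations propagate as directed segments, but the node may become a blockable non-collider in $\pi$ since the SCC reached on its left need not coincide with the SCC reached on its right (e.g., after lifting an $\cE_d^{acy}$ edge $v \to w$ the node $w$ sits in $\sccG{w}$ while its continuation on $\pi$ may leave that SCC). Crucially, such a node was already a non-collider of $\pi'$ and hence outside $C$, so $\pi$ is still not blocked at it. Executing this finite case split over all pairs of incoming/outgoing edge types in $\pi'$ is what rounds off the argument.
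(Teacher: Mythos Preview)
The paper does not supply its own proof of this statement: it is quoted from Forr\'e (2017, Theorem~2.8.2) and used as a black box inside the proof of the global Markov property. So there is no in-paper argument to compare against. Your contrapositive edge-lifting construction is the natural way to establish the implication and is correct in outline.

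One small slip worth tightening: you assert that every newly inserted interior node has \emph{both} neighbours in its strongly connected component. This fails for the first interior node after lifting an $\cE_d^{acy}$-edge $v \to w$: in the replacement segment $v \to w' \to \ldots \to w$ (with $w',\ldots,w \in \sccG{w}$ but $v \notin \sccG{w}$), the node $w'$ has left neighbour $v \notin \sccG{w'}$. The conclusion is unaffected, because $w'$ sits in a right-chain with its right neighbour in $\sccG{w'}$, and by the paper's definition that alone makes it an unblockable non-collider. So only that one justification sentence needs adjusting; the rest of your argument---preservation of collider/non-collider status at the original nodes, the inclusion $\an_C^{\cG'} \subseteq \an_C^{\cG}$, and the observation that a blockable non-collider inherited from $\pi'$ was already outside $C$---goes through as written.
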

We now come to the proof of the global Markov property: 
\begin{proof}[Proof of \cref{prop:global_MP_emilio_granger}]
The proof is a direct application of \citet[Theorem B.2]{Forre_Constraint}, based on the concept of \textit{acyclification} (for each DMG with $\sigma$-separation, there exists an ADMG that encodes the same $d$-separation triple) and the fact that each coordinate process on the intervals $[0,s]$ (and $[s,s+h]$) is determined by the Borel-measurable functions $F_{0}^{j}$ defined as

\begin{align*}\scriptstyle{
    \begin{cases}
\bR^{\dim (\sccG{j})} \times C^{0} \left( [0,s], \bR^{\dim^{W}(\sccG{j})}\right) \times C^{0} \left( [0,s], \bR^{\dim(\paG{\sccG{j}}\setminus \sccG{j})}\right) \rightarrow C^{0} \left( [0,s], \bR^{n_j}\right) \\ 
(X_0^{\sccG{j}}, W_{[0,s]}^{\sccG{j}}, X_{[0,s]}^{\paG{\sccG{j}}\setminus \sccG{j}}) \mapsto X_{[0,s]}^{j}
    \end{cases}}
\end{align*}
respectively $F_{1}^{j}$ being defined as 
\begin{align*}\scriptstyle{
\begin{cases}
C^{0} \left( [s,s+h], \bR^{\dim^{W}(\sccG{j})}\right) \times C^{0} \left( [0,s], \bR^{\dim(\sccG{j})}\right) \times \\ C^{0} \left( [0,s+h], \bR^{\dim(\paG{\sccG{j}}\setminus \sccG{j})}\right) \rightarrow C^{0} \left( [s,s+h], \bR^{n_j}\right) \\ 
\left((W_{s+t}^{\sccG{j}} - W_{s}^{\sccG{j}})_{0\leq t\leq h},X_{[0,s]}^{\sccG{j}},  X_{[0,s+h]}^{\paG{\sccG{j}}\setminus \sccG{j}}\right) \mapsto X_{[s,s+h]}^{j}
    \end{cases}}
\end{align*}
for each $j \in \sccG{k} = \sccG{j}$ where $X_0^j$ and, respectively, $X_0^{\sccG{j}}$) are  the initial conditions, which are independent of  the Brownian path segments $W_{[0,s]}^{\sccG{j}}, W_{[s,s+h]}^{\sccG{j}}$.

Note that the above definitions for the mappings are not defined over the entire space of continuous functions on the interval and it also suffices to define them on a measurable set of paths that includes all solutions to the SDEs over the respective intervals as for instance $F^j_1$ is defined as 
\begin{align*}
& F^{j}_1((W_{s+t}^{\sccG{j}} - W_{s}^{\sccG{j}})_{0\leq t\leq h},X_{[0,s]}^{\sccG{j}},  X_{[0,s+h]}^{\paG{\sccG{j}}\setminus \sccG{j}}) = X^j_{[s,s+h]} = \text{solution of} \: X_t^{j}   \\ = & \int_{s}^{t} \mu^{j} (X_{[0,s]}^{\sccG{j}} * X_{[s,t^\prime]}^{\sccG{j}}, X_{[0,s]}^{\scriptstyle{\paG{\sccG{j}} \setminus \sccG{j}}} * X_{[s,t^\prime]}^{\scriptstyle{\paG{\sccG{j}} \setminus \sccG{j}}}) dt^\prime\\
+& \int_{s}^{t} \sigma^{j} (X_{[0,s]}^{\sccG{j}} * X_{[s,t^\prime]}^{\sccG{j}}, X_{[0,s]}^{\scriptstyle{\paG{\sccG{j}} \setminus \sccG{j}}} * X_{[s,t^\prime]}^{\scriptstyle{\paG{\sccG{j}} \setminus \sccG{j}}}) dW^{\dim^W_j}_{t^\prime} 
\end{align*}
where $*$ denotes the path concatenation of the solution on the two intervals and we have removed the dependence on the initial condition $X^{j}_s$. By \citet[Theorem 10.4]{RW00}, $F^{j}_0$ and $F^{j}_1$ are well-defined and measurable.
We can now adapt the technique used in \citet[Theorem B.2]{Forre_Constraint} to our setting. The proof works by induction on the number of strongly connected components $\lvert \mathbf{S} (\cG)\rvert  = N$. 
\begin{itemize}
        \item[$(IB)$:] Let $V = \sccG{v}$, $v \in V$. (the case $N=1$) \\
        \defineddeutsch{Case $((v,v) \in \cE_d \land \lvert \sccG{v} \rvert = 1) \lor \lvert \sccG{v} \rvert \geq 2$}: For $a,b \in V$ there exists a directed walk $a_0 \to v_1^1 \to v_1^2 \to \ldots \to b_1$ that is only $\sigma-C$-blocked if $a_0 \in C$ (`left-redundancy') hence clear the global MP holds. In addition we have the acyclification of $\tilde{\cG}$ given by $\tilde{\cG}^{acy} = (V_0 \sqcup V_1, \tilde{\cE_d}^{acy})$ DAG with $\tilde{\cE_d}^{acy} = \lbrace (i_0, j_1) \mid \forall i,j \in V \rbrace$ and most importantly, satisfying the following global MP with respect to d-separation: 
        \begin{align}\label{eq:dsep_globalMP}
            A_0 \indep_{d}^{\tilde{\cG}^{acy}} B_1 \given C_0, C_1 \backslash B_1 \: \Rightarrow \: X^{A}_{[0,s]} \indep X^B_{[s,s+h]} \given X^{C}_{[0,s]}, X^{C \setminus B}_{[s,s+h]}
        \end{align}
        for $A,B,C \subseteq V$ and by \cref{thm:relation_sigma_d}: 
        \begin{align}
            A_0 \indep_{\sigma}^{\tilde{\cG}} B_1 \given C_0, C_1 \backslash B_1  \Rightarrow A_0 \indep_{d}^{\tilde{\cG}^{acy}} B_1 \given C_0, C_1 \backslash B_1
        \end{align}
        \defineddeutsch{Case $\cG = (\lbrace v \rbrace, \emptyset)$}: This case is obvious.
        \item[$(IH)$:] Assume for a DG $\cG$ with $\lvert \mathbf{S} (\cG)\rvert  = N$, we have given an acyclification for $\tilde{\cG}$ for which the global Markov property \cref{eq:dsep_globalMP} holds. 
        \item[$(IS)$:] Assume now $\lvert \mathbf{S} (\cG)\rvert  = N+1$ and let $[v] = \sccG{v} \in \mathbf{S} (\cG)$ a terminal strongly connected component for some $v \in V$. Denoting $\cG^\prime := \restr{\cG}{V \setminus \sccG{v}}$, by (IH) we have the acyclification of the lifted dependency graph $\tilde{\cG^\prime}$, $\left( \tilde{\cG^\prime}\right)^{acy} = (V_0^\prime \sqcup V_1^\prime, (\tilde{\cE_d^\prime})^{acy})$ where $V^\prime = V \setminus \sccG{v}$.
        We assume $\lvert \sccG{v} \rvert \geq 2$, the case $\lvert \sccG{v} \rvert = 1$ with or without $(v,v) \in \cE_d$ works with a similar argument. 
        
        Then the acyclification $\tilde{\cG}^{acy}$ of $\tilde{\cG}$ is given by $\tilde{\cG}^{acy} = (V_0 \sqcup V_1, \tilde{\cE_d}^{acy})$ with edges
        \begin{align*}
            \tilde{\cE_d}^{acy} =& (\tilde{\cE_d^\prime})^{acy} \cup \lbrace (i_0,j_1) \mid i,j \in \sccG{v} \rbrace
             \\ &\cup \lbrace (i_0, j_0), (i_0, j_1), (i_1,j_1) \mid j \in \sccG{v}, i \in \paG{\sccG{v}} \setminus \sccG{v} \rbrace
        \end{align*}
        Given an arbitrary enumeration of the nodes in the strongly connected component, $\sccG{v} = \lbrace v^{1}, \ldots , v^{r} \rbrace$, we can now add each node in the order $v_{0}^{1}, \ldots, v_{0}^{r}, v_{1}^{1}, \ldots , v_{1}^{r}$ to $\tilde{\cG^\prime}$:
        So, starting from the DAG $\tilde{\cG^\prime}$, we obtain a sequence of DAGs: 
        \begin{align*}\scriptstyle
        \hat{\cG}_{1}:=\restr{\tilde{\cG}^{acy}}{V_0^\prime \sqcup V_1^\prime \cup \lbrace v_0^{1} \rbrace} ,\hat{\cG}_{2}:=\restr{\tilde{\cG}^{acy}}{V_0^\prime \sqcup V_1^\prime \cup \lbrace v_0^{1}, v_{0}^{2} \rbrace}\ldots, \hat{\cG}_{r+1}:=\restr{\tilde{\cG}^{acy}}{V_0^\prime \sqcup V_1^\prime \cup \lbrace v_0^{1}, \ldots v_0^{r}, v_{1}^{1} \rbrace}, \ldots , \hat{\cG}_{2r}:=\tilde{\cG}^{acy}.
        \end{align*}
        We can then add nodes inductively. First add the `past-nodes' $v_0^{i}$, $i \in [r]$. Since $X_{[0,s]}^{v^{i}}$ can be written as a function of $\lbrace X_{[0,s]}^{v^\prime}\rbrace_{v^\prime \in \paG{\sccG{v}}\setminus \sccG{v}}$ and the independent noises $\lbrace W_{[0,s]}^{v^\prime} \rbrace_{v^\prime \in \sccG{v}}$ (for example by Picard's successive approximation method), we can apply \citet[Lemma B.1]{Forre_Constraint} to establish 
         \begin{align}\label{eq:type_of_global_MP}
            A_0 \indep_{d}^{\hat{\cG}_{i}} B_1 \given C_0, C_1 \backslash B_1 \: \Rightarrow \: X^{A}_{[0,s]} \indep X^B_{[s,s+h]} \given X^{C}_{[0,s]}, X^{C \setminus B}_{[s,s+h]}
        \end{align}
        for $A,B,C \subseteq V$ with $B\cap \sccG{v} = \emptyset$. Adding the `future-nodes' $v_{1}^{i}$, $i \in [r]$ in similar fashion gives a global Markov property with respect to to the acyclification $\tilde{\cG}^{acy}$, 
        \begin{align*}
            A_0 \indep_{d}^{\tilde{\cG}^{acy}} B_1 \given C_0, C_1 \backslash B_1 \: \Rightarrow \: X^{A}_{[0,s]} \indep X^B_{[s,s+h]} \given X^{C}_{[0,s]}, X^{C \setminus B}_{[s,s+h]}, \quad A,B,C \subseteq V.
        \end{align*}
        \cref{thm:relation_sigma_d} then establishes \cref{eq:sigma_global_MP} and concludes the proof.
\end{itemize}
\end{proof}

\subsection{Proof of \cref{prop:global_MP_latent_models} for the latent Markov Property}\label{app:proof_latent_markov}

\begin{proof}[Proof of \cref{prop:global_MP_latent_models}]
    The proof works in both directions via contraposition.
    \defineddeutsch{`$\Leftarrow$':} Let $\pi : a_0=v^0_0 \overset{\hat{e}_0}{\sim} v^1_{\ell_1} \overset{\hat{e}_1}{\sim} \ldots \overset{\hat{e}_n}{\sim}v^{n+1}_1 = b_1$ be a $\sigma$-$C_0 \cup (C_1 \setminus B_1)$-open walk in $\tilde{\cG}$, hence $\coll{\pi} \subseteq \an_{C_0 \cup (C_1 \setminus B_1)}^{\tilde{\cG}}$ and $\ncollb{\pi} \cap (C_0 \cup (C_1 \setminus B_1)) = \emptyset$. Then for each collider $v^{k}_{\ell_k} \in \coll{\pi}$ there exists a $c^k \in (C_0 \cup (C_1 \setminus B_1))$ such that $v^{k}_{\ell_k} \in \an_{c^l}^{\tilde{\cG}}$, therefore there exists a directed path $v^{k}_{\ell_k} \rightarrow \ldots \rightarrow c^k$ of minimal length, implying that there is no other element of $(C_0 \cup (C_1 \setminus B_1))$ on it. By inserting the subwalk $v^k_{\ell_k} \rightarrow \ldots \rightarrow c^k \leftarrow \ldots \leftarrow v^{k}_{\ell_k}$ into $\pi$ for each collider $v^{k}_{\ell_k}$, we obtain a walk $\bar{\pi}: a_0=u^0_0 \overset{e_0}{\sim} u^1_{l_1} \overset{e_1}{\sim}\ldots \overset{e_m}{\sim} u^{m+1}_1 = b_1$ such that $\coll{\bar{\pi}} \subseteq (C_0 \cup (C_1 \setminus B_1))$ and $\ncollb{\pi} \cap (C_0 \cup (C_1 \setminus B_1)) = \emptyset$. Then, for each $u^{i}_{l_{i}}$ which is also in  $V_{0}^{lat} \sqcup V_1^{lat}$, there exists $k_i, \hat{k}_{i} \in \bN$ and a subwalk segment $u_{l_{i-k_i}}^{i-k_i} \sim \ldots \sim u_{l_{i+\hat{k}_i}}^{i+\hat{k}_i}$ of $\bar{\pi}$ without colliders and only the endpoints are in $V_0^{obs} \sqcup V_1^{obs}$. As there are no colliders on these segments, there exists edges $u_{l_{i-k_i}}^{i-k_i} \lrcirclearrow u_{l_{i+\hat{k}_i}}^{i+\hat{k}_i}$ in $\tilde{\cG^{\prime}}$ with the same endpoint marks. Hence by replacing the segments in question by those edges, we obtain a 
    $\sigma$-$C_0 \cup (C_1 \setminus B_1)$-open walk $\pi^\prime$ in $\tilde{\cG^\prime}$.
    \defineddeutsch{`$\Rightarrow$':} $\pi^\prime : a_0=v^0_0 \overset{\hat{e}_0}{\sim} v^1_{\ell_1} \overset{\hat{e}_1}{\sim} \ldots \overset{\hat{e}_n}{\sim}v^{n+1}_1 = b_1$ be a $\sigma$-$C_0 \cup (C_1 \setminus B_1)$-open walk in $\tilde{\cG^\prime}$. For each $e_i \notin \tilde{\cG}$ there exists an endpoint mark identical subwalk without colliders with nodes in $V_{0}^{lat} \sqcup V_1^{lat}$. By replacing these edges by those corresponding line-segments and as $C_0 \cup (C_1 \setminus B_1) \subseteq V_0^{obs} \sqcup V_1^{obs}$, we obtain a  $\sigma$-$C_0 \cup (C_1 \setminus B_1)$-open walk $\pi$ in $\tilde{\cG}$.
\end{proof}

\subsection{Proofs to establish the Markov equivalence class of DGs}\label{app:markov_equivalence_classes_of_dgs}

\begin{proof}[Proof of \cref{prop:local_MP_Esep}]
We have to show that each path (or walk) $\pi : u_0 \sim v^1_{\ell_1} \sim \ldots \sim v^{n}_{\ell_n} \sim v_1$ from an $u \notin \paG{\sccG{v}}$ is $\sigma$-$C_0 \cup C_1 \setminus \{ v_1 \}$-blocked where $\ell_k \in \{ 0,1 \}$ for each $ k \in [n]$ and $C:= \paG{\sccG{v}}$. We denote by $r := \max \{ k \in [n] \: : \: v^{k} \notin \sccG{v} \}$
the last index before entering the strongly connected component of $v$. We therefore have to consider the following three cases:\\
\defineddeutsch{Case `The walk enters $\sccG{v}$ through its parents:'} Then we have $v^r \in \paG{\sccG{v}} \setminus \sccG{j}$, $v^{r+1} \in \sccG{j}$ and $v^{r}_{\ell_r} \in \ncollb{\pi}^{\tilde{\cG}}$ a blockable non-collider on the walk $\pi$ which is conditioned upon. Hence $\pi$ is $\sigma$ $(C_0 \cup (C_1 \setminus \{ v_1 \}))$ blocked.\\
\defineddeutsch{Case `The walk enters through the children of $\sccG{v}$ but its past:'} This means we have $v^{r+1} \in \sccG{v}$, $\ell_{r+1}= 0$ such that $v_{0}^{r+1} \rightarrow v^{r}_{\ell_r}$. Then again $v^{r+1}_{0} \in \ncollb{\pi}^{\tilde{\cG}}$ a blockable collider on the walk $\pi$ which is conditioned upon. Hence $\pi$ is $\sigma$ $(C_0 \cup (C_1 \setminus \{ v_1 \}))$ blocked.\\
\defineddeutsch{Case `The walk enters through the children of $\sccG{v}$ but its future:'} This means $v^{r+1} \in \sccG{v}$, $v^{r} \in \chG{\sccG{j}} \setminus \sccG{j}$ and $v_1^{r+1} \rightarrow v_1^{r}$ the taken edge. Denoting $s = \max \{ k \in [n] \: : \: \ell_{k} \neq 1 \}$ the largest index on the path before being entirely in the future node components, we can assume $s < r$. (Otherwise the path $\pi$ would already be blocked since we condition on the past $\scc_{v_0}^{\tilde{\cG}})$.
In the case that $v^{s} = \deG{v}$, then there is a collider on the walk because $\pi$ contains edges $v_{0}^{s} \rightarrow v_{1}^{s+1}$ and $v_{1}^{r} \leftarrow v_{1}^{r+1}$ and this collider is not opened as we are in the descendants of the conditioning set; thus $\pi$ is blocked. 
If $v^{s} \notin \deG{v}$, then there $\exists s^\prime > s$ such that $v^{s^\prime} \notin \deG{v}$ but $v^{s^\prime + 1} \in \deG{v}$, hence we have the edge $v^{s^\prime} \rightarrow v^{s^\prime +1}$ and again, since $v_{1}^{r} \leftarrow v_{1}^{r+1}$, $\pi$ contains collider that is not open as it is in the descendants of the conditioning set; hence $\pi$ is blocked. This concludes the first statement.

The other two statement are immediate consequences of it:
\begin{enumerate}
    \item[(i)] Assume $v,w \in V$, $v \neq w$ and $\sccG{v} \neq \sccG{w}$ (meaning they are disjoint as it is an equivalence relation). If $v \notin \paG{\scc{w}} \setminus \sccG{w}$, then the statement follows from the preceding statement. If however $v \in \paG{\scc{w}} \setminus \sccG{w}$ in the parents of the strongly connected component of $w$, the $w$ can be E-separated from $v$ by $\paG{\sccG{w}}$ as otherwise $w$ and $v$ would be in the same strongly connected component.
    \item[(ii)] This works with a similar argument then the first since each path $v_0 \sim \ldots \sim v_1$ can be blocked by the parental set (which does not contain $v$ itself). 
\end{enumerate}
\end{proof}

\begin{proof}[Proof of \cref{lemma:equivalent_edge_adding}]
\vspace{-3mm}
    \begin{enumerate}
        \setlength{\itemsep}{-2pt}
        \item[(i)] Let $i \in \sccG{j} \: \land \: i \neq j$. Then obviously $\lvert \sccG{j} \rvert \geq 2$. \\ 
        Assume the contradiction, meaning $\exists $ $A,B, C \subseteq V$: $(A,B,C) \in \cI_{E}^{\cG}$ but $(A,B,C) \notin \cI_{E}^{\cG^\prime}$. \\ 
        Then $\exists$ $\sigma- (C_0 \cup C_1 \setminus B_1)$-open walk $\pi^\prime = a_0 \sim v^1_{k_1} \sim \ldots \sim v^{n}_{k_n} \sim b^{1}$ in $\tilde{\cG^\prime}$ for $a \in A$ and $b \in B$. If $(i_0, j_1), (i_0, j_0), (i_1, j_1) \in \pi^\prime$, then we can replace each of these edges by the respective corresponding, endpoint-identical, directed path segments $i_0 \rightarrow j^\prime_1 \rightarrow \ldots \rightarrow j_1$ (with $j^\prime_1 \rightarrow \ldots \rightarrow j_1$ in $\scc_{j_1}^{\tilde{\cG}}$), $i_0 \rightarrow  \ldots \rightarrow j_0 $ (which is inside $\scc^{\tilde{\cG}}_{i_0}$), $i_1 \rightarrow  \ldots \rightarrow j_1$ (inside $\scc^{\tilde{\cG}}_{i_1}$) and only the first path-segment $ i_0 \rightarrow j^\prime_1 \rightarrow \ldots \rightarrow j_1$ has a blockable non-collider namely $i_0$. But $i_0 \notin C_0$ as otherwise $\pi^\prime$ would already be blocked.
        
        Hence we obtain a walk $\pi : a_0 \sim w^{1}_{\ell_1} \sim \ldots \sim w_{\ell_m}^{m} \sim b_1^{1}$ in $\tilde{\cG}$ by replacing the edges $(i_0, j_1), (i_0, j_0), (i_0, j_1)$ on $\pi^\prime$ with their above mentioned edge-identical counterparts. As shown above, $\pi$ has the same set of blockable non-colliders as we only added the above mentioned segments and also the same collider nodes as the added path-segments are directed and direction-preserving, thus giving $\coll{\pi}^{\tilde{\cG}} = \coll{\pi^\prime}^{\tilde{\cG^\prime}}$. However as $\pi^\prime$ is open, each collider $c_k$ with $\an_{c_k}^{\tilde{\cG^\prime}} \cap (C_0 \cup C_1 \setminus B_1)$ and thus we can also establish the same ancestral relation in $\tilde{\cG}$ by replacing edges $(i_0, j_1), (i_0, j_0), (i_1, j_1)$ by directed path-segments $i_0 \rightarrow j^\prime_1 \rightarrow \ldots \rightarrow j_1$, $i_0 \rightarrow  \ldots \rightarrow j_0$, $i_1 \rightarrow  \ldots \rightarrow j_1$. Hence $\pi$ is $\sigma- (C_0 \cup C_1 \setminus B_1)$-open walk in $\tilde{\cG}$ as well \contradiction
        \item[(ii)] Let $i \notin \sccG{j}$ and assume there exists a $j^\prime \in \sccG{j}$ such that $ (i,j^\prime) \in \cE_d$. Assume first $\exists $ $A,B, C \subseteq V$ such that $(A,B,C) \in \cI_{E}^{\cG}$ but $(A,B,C) \notin \cI_{E}^{\cG^\prime}$.
        Then there exists a shortest $\sigma$-$(C_0 \cup C_1 \setminus B_1)$-open walk $\pi^\prime = a_0 \sim v^1_{k_1} \sim \ldots \sim v^{n}_{k_n} \sim b^{1}$ in $\tilde{\cG^\prime}$ from $a \in A$ to $b \in B$. If $(i_0, j_1), (i_0, j_0), (i_1, j_1) \in \pi^\prime$ we can again replace those by the edge-point identical, directed path-segments $i_0 \rightarrow j^\prime_1 \rightarrow \ldots \rightarrow j_1$ (with $j^\prime_1 \rightarrow \ldots \rightarrow j_1$ in $\scc_{j_1}^{\tilde{\cG}}$), $i_0 \rightarrow j^\prime_0 \rightarrow \ldots \rightarrow j_0$ (with $j^\prime_0 \rightarrow \ldots \rightarrow j_0$ in $\scc_{j_0}^{\tilde{\cG}}$), $i_1 \rightarrow j^\prime_1 \ldots \rightarrow j_1$ (with $j^\prime_1 \ldots \rightarrow j_1$ in $\scc_{j_1}^{\tilde{\cG}}$) with the only blockable non-collider being $i_0$ and $i_1$, respectively. But $i_0, i_1 \notin C_0 \cup (C_1 \setminus B_1)$ (as otherwise $\pi^\prime$ already not open in $\tilde{\cG^\prime}$) and also $i_1 \notin B_1$ as one would otherwise have a shorter $\sigma- (C_0 \cup C_1 \setminus B_1)$-open walk from $A_0$ to $B_1$. Hence we obtain a walk $\pi : a_0 \sim w^{1}_{\ell_1} \sim \ldots \sim w_{\ell_m}^{m} \sim b_1^{1}$ with the same set of blockable non-colliders as we only added the above mentioned segments and keep the same collider nodes. Since $\coll{\pi}^{\tilde{\cG}} = \coll{\pi^\prime}^{\tilde{\cG^\prime}}$ as for each collider $c_k$ with $\an{c_k}^{\tilde{\cG^\prime}} \cap (C_0 \cup C_1 \setminus B_1)$ we can also establish the same ancestral relation in $\tilde{\cG}$ by replacing edges $(i_0, j_1), (i_0, j_0), (i_1, j_1)$ by the above, directed path-segments $i_0 \rightarrow j^\prime_1 \rightarrow \ldots \rightarrow j_1$, $i_0 \rightarrow  \ldots \rightarrow j_0$, $i_1 \rightarrow  \ldots \rightarrow j_1$. Hence $\pi$ is $\sigma- (C_0 \cup C_1 \setminus B_1)$-open walk in $\tilde{\cG}$ \contradiction
        \item[(iii)] This follows immediately from \cref{prop:local_MP_Esep}.
    \end{enumerate}
\end{proof}

\begin{proof}[Proof of \cref{thm:existence_max_el_DG}]
    Let $\cG = (V, \cE_d)$ and $\cG^\prime = (V, \cE_d^\prime)$ be two DGs such that $\cI_{E}^{\cG} = \cI_{E}^{\cG^\prime}$.
    Then first of all, the set of strongly connected components in both graphs have to be the same. If otherwise there are $v \neq w \in V$ and $w \in \sccG{v}$ but $w \notin \scc^{\cG^\prime}_{v}$, then the two nodes are inseparable in $\cG$ but not in $\cG^\prime$ (either $(\lbrace v \rbrace, \lbrace w \rbrace , \pa^{\cG^\prime}_{\scc^{\cG^\prime}_{w}}) \in \cI_{E}^{\cG^\prime}$ or $(\lbrace w \rbrace, \lbrace v \rbrace , \pa^{\cG^\prime}_{\scc^{\cG^\prime}_{v}}) \in \cI_{E}^{\cG^\prime}$). We can therefore use the notion $\sccG{v}$ and $\scc^{\cG^\prime}_{v}$ interchangeably.
    Moreover, with the same argumentation, for each singleton $\lbrace v \rbrace \in \S_\cG = \S_{\cG^\prime}$ the statement  `$(v,v) \in \cE_d$ $\Leftrightarrow$ $(v,v) \in \cE^\prime_d$' has to hold.
    In addition, the DAGs of strongly connected components for both graphs have to coincide. Otherwise, if this would not be the case, there are $v,w \in V$: $w \notin \sccG{v}$ and there exists $v^\prime \in \sccG{v}$, $w^\prime \in \sccG{w}$ such that $(v^\prime, w^\prime) \in \cE_d$ but for no pair of nodes $v^{\prime \prime} \in \sccG{v}$, $w^{\prime \prime} \in \sccG{w}$ we have $(v^{\prime \prime}, w^{\prime \prime} ) \in \cE_{d}^{\prime}$. But the this means that $(\lbrace v \rbrace, \lbrace w \rbrace, \pa^{\cG^\prime}_{\scc^{\cG^\prime}_{w}}) \in \cI_{E}^{\cG^\prime}$ but not in $\cI_{E}^{\cG}$ as $\pa^{\cG^\prime}_{\scc^{\cG^\prime}_{w}}$ does not intersect $\sccG{v}$.
    Finally, if for $v \neq w$ there is the edge $[v] \rightarrow [w]$ in $\S_{\cG}$ (hence also in $\S_{\cG^\prime}$), then the nodes, from which edges are outgoing from $\sccG{v}$ to $\sccG{w}$ have to coincide. If this would not be the case, then without loss of generality there exists a $v^\prime \in (\pa^{\cG^\prime}_{\sccG{w}} \cap \sccG{v})$ such that $v^\prime \notin (\pa^{\cG^\prime}_{\sccG{w}} \cap \sccG{v})$. But then $(\lbrace v^\prime \rbrace, \lbrace w \rbrace, \paG{\sccG{w}}) \in \cI_{E}^{\cG}$ but not in $\cI_{E}^{\cG^\prime}$. This proves the above statement.
    Together with \cref{lemma:equivalent_edge_adding}, the existence of a greatest element can be concluded. The remaining statements are trivial. 
\end{proof}

\subsection{Proofs about Inducing Paths and the Markov equivalence class of DMGs}\label{app:markov_equivalence_classes_of_dmgs}

\begin{proof}[Proof of \cref{prop:ind_path_inseparable_first_direction}]
In the case that $v \in \sccG{w}$ or $v^1 \sccG{w}$ (meaning there exists $\bar{\nu}: v_0 \rightarrow w_1$ respectively $\bar{\nu}: v_0 \lcirclearrow v^{1}_{1} \rightarrow \ldots \rightarrow w^1$) the statement is obvious. We can also assume that there have to be at least three scc's involved.

We furthermore can assume that $\nu$ has minimal length. Let now $C \subseteq V \setminus \lbrace v \rbrace$, we want to show that $(\lbrace v \rbrace, \lbrace w \rbrace, C) \notin \cI_{E}$ meaning there exists a $\sigma$-$C_0 \cup (C_1 \setminus \lbrace w_1 \rbrace)$-open path.

Therefore consider the following set: 
\begin{align*}
    S_C = \big\lbrace \pi : v_0 \sim \ldots \sim w_1 \: \text{walk in} \: \tilde{\cG} : \coll{\pi} \subseteq \an^{\tilde{\cG}}_{\lbrace v_0, w_1 \rbrace \cup C_0 \cup (C_1 \setminus \lbrace w_1 \rbrace)} \land \\
    \ncollb{\pi} \cap (C_0 \cup (C_1 \setminus \lbrace w_1 \rbrace) \big\rbrace
\end{align*}
Note the relation $k \in \anG{v}$ implies $k_0 \in \an^{\tilde{\cG}}_{v_0}$ and $k_1 \in \an^{\tilde{\cG}}_{v_1}$ and also $k_0 \in \an^{\tilde{\cG}}_{v_1}$.\\
The set is not empty as from the inducing path $\nu: v=v^{0} \overset{e_0}{\sim} \ldots \overset{e_n}{\sim} w$, with $\hat{k} = \max \lbrace k \in [n]\: : \: $ we can construct the following path $\hat{\nu}: v_0 \overset{\hat{e_0}}{\sim} v_{\ell_1}^1 \overset{\hat{e_1}}{\sim} \ldots v^{\hat{k}}_{\ell{\hat{k}}} \overset{\hat{e_{\hat{k}}}}{\lcirclearrow}v_{1}^{\hat{k}+1} \sim \ldots \sim w^1$ with the following properties: 
\begin{itemize}
    \item for each collider $v^{k}_{\ell_k} \in \coll{\hat{\nu}}$ in holds $v^{k}_{\ell_k} \in \an_{\lbrace v_0, w_1\rbrace}^{\tilde{\cG}}$ since by assumption and the remark above, $v^{k} \in \anG{\lbrace v,w \rbrace}\setminus \lbrace v,w \rbrace$. 
    \item each non-collider $v^{k}_{\ell_k} \in \ncoll{\hat{\nu}}$ is unblockable
\end{itemize}
Consider now the path $\hat{\omega} \in S_C$ with a minimal number of colliders, and we denote him $v_0 = u^0_{0} \sim \ldots \sim u^{k}_{\ell_k} \sim \ldots \sim u^{m+1}_{\ell_{m+1}} = w^1$.

We want to show that $\coll{\hat{\omega}} \subseteq \an_{C_0 \cup C_1 \setminus \lbrace w_1 \rbrace}^{\tilde{\cG}}$, meaning that in this case we have found a $\sigma$-$C_0 \cup (C_1 \setminus \lbrace w_1 \rbrace)$-open path.

By definition, all colliders are in $\an^{\tilde{\cG}}_{\lbrace v_0, w_1 \rbrace \cup C_0 \cup (C_1 \setminus \lbrace w_1 \rbrace)}$, therefore have to following case-distinction:
If there would exist a collider $u_0^{k} \in \an^{\tilde{\cG}}_{v_0} \setminus \an^{\tilde{\cG}}_{C_0 \cup C_1 \setminus \lbrace w_1 \rbrace}$ (we assume $k$ to be maximal with this property) then this means there exists a directed path $v_0^{k} \rightarrow \ldots \rightarrow v_0$ that does not intersect $C_0$ and we can construct a path $v_0 \leftarrow \ldots \leftarrow \underbrace{u_0^{k} \sim \ldots \sim w^1}_{\text{rest of } \hat{\omega}}$. 
\end{proof}

\subsection{On the E-Separation Independence Model of DMGs}\label{app:E_Sep_Ind_DMGs}

\begin{proof}[Proof of \cref{prop:equivalent_bidirected_edge_adding}]
    Let $i \notin \sccG{j}$, assume there exists an $\exists i^\prime \in \sccG{i}$ and a $j^\prime \in \sccG{j}$ such that $i^\prime \leftrightarrow j^\prime \in \cE_bi$ but $i \leftrightarrow j \notin \cE_{bi}$. Assume first $\exists $ $A,B, C \subseteq V$ such that $(A,B,C) \in \cI_{E}^{\cG}$ but $(A,B,C) \notin \cI_{E}^{\cG^\prime}$.
        
        Then there exists a shortest $\sigma$-$(C_0 \cup C_1 \setminus B_1)$-open walk $\pi^\prime = a_0 \sim v^1_{k_1} \sim \ldots \sim v^{n}_{k_n} \sim b^{1}$ in $\tilde{\cG^\prime}$ from $a \in A$ to $b \in B$ and since bidirected edges do not change ancestral relations, one of the edges $i_0 \leftrightarrow j_1$, $i_0 \leftrightarrow j_0$, $i_1 \leftrightarrow j_1$ or $i_1 \leftrightarrow j_0$ is present on this path. These however can be replaced by their end-point and direction-preserving counterparts:
        \begin{align*}
            i_0 \leftarrow \ldots \leftarrow i^\prime_0 \leftrightarrow j^\prime_1 \rightarrow \ldots \rightarrow j_1 \\
            i_0 \leftarrow \ldots \leftarrow i^\prime_0 \leftrightarrow j^\prime_0 \rightarrow \ldots \rightarrow j_0 \\
            i_1 \leftarrow \ldots \leftarrow i^\prime_1 \leftrightarrow j^\prime_1 \rightarrow \ldots \rightarrow j_1 \\
            i_1 \leftarrow \ldots \leftarrow i^\prime_1 \leftrightarrow j^\prime_0 \rightarrow \ldots \rightarrow j_0 \\
        \end{align*}
        and all nodes within are unblockable non-colliders such that we can construct a $\sigma$-$(C_0 \cup C_1 \setminus B_1)$-open walk $\pi: a_0 \sim \ldots \sim b_1$, which is a contradiction to the above assumption  \contradiction
\end{proof}

\section{Causal Discovery in the fully observed SDE Model}\label{app:causal_discovery_algorithm}

In this section, we empirically show, that when applying the causal discovery algorithm introduced in \citet{manten2024sigker}, to SDEs with cyclic adjacencies, we can reliably estimate the Markov equivalence class directly from data, demonstrating the real world applicability as well as practically verifying the theoretical findings in \cref{sect:markov_equiv_class}. Since the test and algorithm were developed and thoroughly analyzed in another paper, we omit large-scale experiments and comparisons with other methods here, focusing instead on a proof-of-concept demonstration.

\subsection{The Algorithm}

\Cref{algo:ctPC} is only slightly altered in that we now use the right-decomposable notion of E-separation.

\begin{algorithm}[H]
\caption{Causal discovery for SDEs.}\label{algo:ctPC}
    \footnotesize
    \begin{algorithmic}[1]
        \State \textbf{Input:} DG $\cG = (V \cong [d], \cE_{d})$, CIT $\indep^+_{s,h}$
        \State $\tilde{V} \gets \{k_0,k_1 \mid k \in V\}$,
        $\tilde{\cE}_d \gets \{i_0 \to j_0, \ i_1 \rightarrow j_1, i_0 \rightarrow j_1 \mid (i,j) \in \cE_d \}$
        \For{$c = 0, \ldots, d-1$}
        \For{$(i,j) \in V$}
        \For{$K \subseteq V \setminus \{i\}$, $|K| = c$, such that\ $(k_0 \to j_1) \in \tilde \cE_d$ for $k \in K$}
        \If{$X^i \indep^+_{s,h} X^j \mid X^K$}
        \State $\tilde \cE_d \gets \tilde \cE_d \setminus \{i_0 \to j_0, i_1 \to j_1, i_0 \to j_1 \}$
        \EndIf
        \EndFor
        \EndFor
        \EndFor
        \State $\cG = (V,\cE_d) \gets \mathrm{collapse}(\tilde V, \tilde \cE_d)$
        \State \Return $\cG$
    \end{algorithmic}
\end{algorithm}

\subsection{Recovering the Markov Equivalence Class from Data}

To demonstrate the practical efficacy of the \cref{algo:ctPC} and verify the theoretical results presented in \cref{sect:markov_equiv_class}, we apply it by drawing 50 sets of parameters for each of the two 3-dimensional linear SDEs:   
\begin{align}\label{eq:SDE1}
    \dif \begin{pmatrix} X^1_t \\ X^2_t \\ X^3_t \end{pmatrix} = \left(\begin{pmatrix} a_{11} & 0 & 0 \\ a_{21} & a_{22} & a_{23} \\ 0 & a_{32} & a_{33} \end{pmatrix} \begin{pmatrix} X^1_t \\ X^2_t \\ X^3_t \end{pmatrix} + \begin{pmatrix} c_1 \\ c_2 \\ c_3 \end{pmatrix} \right) \dif t + \begin{pmatrix} d_{1} & 0 & 0 \\ 0 & d_{2} & 0 \\ 0 & 0 & d_{3} \end{pmatrix}  \dif \begin{pmatrix} W^1_t \\ W^2_t \\ W^3_t \end{pmatrix}.
\end{align}
and 
\begin{align}\label{eq:SDE2}
\dif \begin{pmatrix} X^1_t \\ X^2_t \\ X^3_t \end{pmatrix} = \left(\begin{pmatrix} a_{11} & a_{12} & 0 \\ 0 & a_{22} & a_{23} \\ 0 & a_{32} & a_{33} \end{pmatrix} \begin{pmatrix} X^1_t \\ X^2_t \\ X^3_t \end{pmatrix} + \begin{pmatrix} c_1 \\ c_2 \\ c_3 \end{pmatrix} \right) \dif t + \begin{pmatrix} d_{1} & 0 & 0 \\ 0 & d_{2} & 0 \\ 0 & 0 & d_{3} \end{pmatrix}  \dif \begin{pmatrix} W^1_t \\ W^2_t \\ W^3_t \end{pmatrix}.
\end{align}
with different adjacency-structures and parameters drawn as $a_{ij} \sim \cU((-1.5,1] \cup [1, 1.5))$ for $i \neq j$,  $a_{ii} \sim \cU([-0.5, 0.5])$, $c_i \sim \cU([0, 0.1))$, $d_i \sim \cU([0.3, 0.5))$. From each SDE, we draw 400 sample-paths and try to discovery the underlying causal relationships using algorithm \cref{algo:ctPC}. The result, shown in \cref{fig:toy_example_3d}, display the predicted probabilities next to each edge, confirming the theoretical expectations: $\cG_1$ closely corresponds to the equivalence class for the adjacency structure (considered as a DG) of \cref{eq:SDE1}, while $\cG_2$ corresponds to that in \cref{eq:SDE2}.  

\begin{figure}[H]
\centering
\includegraphics[width=0.7\textwidth]{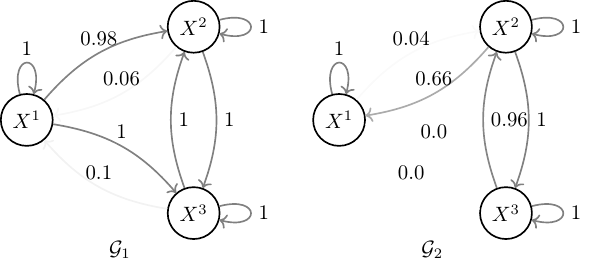}
\caption{Results from applying the algorithm to two 3-dimensional linear SDEs, showing predicted probabilities for each edge and confirming the expected equivalence classes.}
\label{fig:toy_example_3d}
\end{figure}

\section{Counterexample for greatest element for \texorpdfstring{$\sigma$}{sigma}-Separation}\label{app:sigma_no_maximal_element}

As can be seen in \cref{fig:sigma_no_max}, the graphs $\cG_1$-$\cG_5$ are all in the same equivalence class with respect to their induced graphoid $\cI_{\sigma}$, in which adjacent nodes $X^1$ and $X^2$ and $X^2$ and $X^3$ cannot be separated from each other except by conditioning on the respective endpoints and $X^1$ and $X^3$ can be separated from each other by conditioning on $X^2$. However their supremum with respect to the partial subset ordering on $\cE_d$, $\cG_6$ does not allow for a separation between $X^1$ and $X^3$ at all.

\begin{figure}[H]
\centering
\includegraphics[width=1\textwidth]{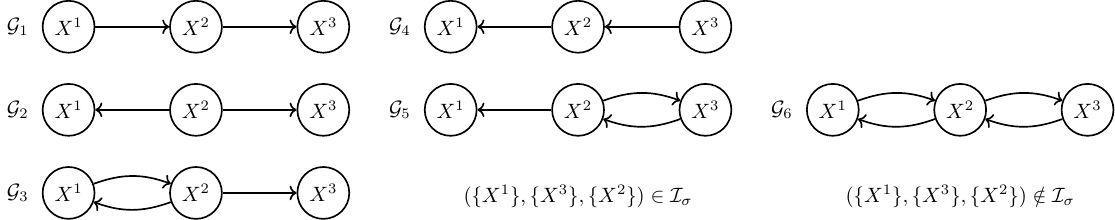}
\caption{The graphs $\cG_1$-$\cG_5$ all have the same independence model, however in their supremum with respect to partial subset ordering, $\cG_6$, $X^1$ and $X^3$ cannot be separated by $X^2$.}
\label{fig:sigma_no_max}
\end{figure}

\end{document}